\theoremstyle{thmstyleone}%
\newtheorem{theorem}{Theorem}[section]
\newtheorem{proposition}[theorem]{Proposition}%
\newtheorem{lemma}[theorem]{Lemma}
\newtheorem{corollary}[theorem]{Corollary}
\theoremstyle{thmstyletwo}%
\newtheorem{example}{Example}[section]%
\newtheorem{remark}{Remark}[section]%
\theoremstyle{thmstylethree}%
\numberwithin{equation}{section}
\numberwithin{figure}{section}
\numberwithin{table}{section}
\newcommand\RR{\mathbb{R}}
\newcommand\Rd{\mathbb{R}^d}
\newcommand\NN{\mathbb{N}}
\newcommand\PP{\mathbb{P}}
\newcommand\vy{\boldsymbol{y}}
\newcommand\vc{\boldsymbol{c}}
\newcommand\va{\boldsymbol{a}}
\newcommand\vxi{\boldsymbol{\xi}}
\newcommand\ve{\boldsymbol{e}}
\newcommand\vb{\boldsymbol{b}}
\newcommand\vv{\boldsymbol{v}}
\newcommand\vt{\boldsymbol{t}}
\newcommand\vw{\boldsymbol{w}}
\newcommand\vsigma{\boldsymbol{\sigma}}
\newcommand\vA{\mathsf{A}}
\newcommand\vI{\mathsf{I}}
\newcommand\vW{\mathsf{W}}
\newcommand\Banach{\mathcal{B}}
\newcommand\Hilbert{\mathcal{H}}
\newcommand\RKBS{\mathcal{B}}
\newcommand\Space{\mathcal{X}}
\newcommand\Ball{B}
\newcommand\Sphere{S}
\newcommand\Value{Y}
\newcommand\Aset{\mathcal{A}}
\newcommand\Funset{\mathfrak{F}}
\newcommand\Sset{\mathcal{S}}
\newcommand\Iset{\mathcal{I}}
\newcommand\Eset{\mathcal{E}}
\newcommand\Cont{\mathrm{C}}
\newcommand\Leb{\mathrm{L}}
\newcommand\ud{\mathrm{d}}
\newcommand\Span{\mathrm{span}}
\newcommand\sign{\mathrm{sign}}
\newcommand\risk{R}
\newcommand\Measure{\mathcal{M}}
\newcommand\regfun{\Phi}
\newcommand{\abs}[1]{\left\lvert#1\right\rvert} 
\newcommand{\norm}[1]{\lVert#1\rVert}
\newcommand\AffLambda{\Psi}
\newcommand\afflambda{\psi}
\newcommand\Data{\mathcal{D}}
\newcommand\loss{\mathbb{L}}
\newcommand\closs{L}
\newcommand\Loss{\mathcal{L}}
\newcommand\Nueral{\mathcal{U}}
\newcommand\Bessel{\mathcal{K}}
\begin{document}
\renewcommand{\qedsymbol}{\Box}
\title[Regularized Learning in Banach Spaces]{Analysis of Regularized Learning in Banach Spaces for Linear-functional Data}


\author*{Qi Ye}

\affil{\orgdiv{School of Mathematical Sciences}, \orgname{South China Normal University}, \orgaddress{\city{Guangzhou} \postcode{510631}, \state{Guangdong}, \country{China}}}


\abstract{This article delves into the study of the theory of regularized learning in Banach spaces for linear-functional data. It encompasses discussions on representer theorems, pseudo-approximation theorems, and convergence theorems.
Regularized learning is designed to minimize regularized empirical risks over a Banach space.
The empirical risks are calculated by utilizing training data and multi-loss functions.
The input training data are composed of linear functionals in a predual space of the Banach space to capture discrete local information from multimodal data and multiscale models.
Through the regularized learning, approximations of the exact solution to an unidentified or uncertain original problem are globally achieved.
In the convergence theorems, the convergence of the approximate solutions to the exact solution is established through the utilization of the weak* topology of the Banach space.
The theorems of regularized learning are utilized in the interpretation of classical machine learning, such as support vector machines and artificial neural networks.}

\keywords{Regularized learning, linear-functional data, Banach space, weak* topology, reproducing kernel}


\pacs[Mathematics Subject Classification]{47B32, 65D12, 68Q32, 68T01}

\maketitle

\section{Introduction}\label{Intr}

In numerous practical problems, the main goal frequently involves determining an exact solution to minimize expected risks or errors over a Banach space, as exemplified in Optimization \eqref{eq:TRM}.
The exact solution will provide the optimal decision rule for ascertaining the most appropriate course of action.
Unfortunately, the expected risks are frequently unidentified or uncertain. Hence, it is difficult to determine the exact solution to the original problem directly.
In various mathematical models within the fields of physics and engineering, observed data are collected and measured from the original problem.
Moreover, the large-scale data are employed to approximate the exact solution.
Regularized learning provides an effective method for developing approximate solutions.
Regularized learning, as presented here, focuses on seeking an approximate solution to minimize regularized empirical risks that are the sum of empirical risks and regularization terms, as exemplified in Optimization \eqref{eq:RERM}.
The motivation for employing regularized learning is provided by the computation of empirical risks using finitely many training data and simplistic loss functions, as exemplified in Equation \eqref{eq:EmpRiskFun},
and the approximation of expected risks by empirical risks, as exemplified in Equation \eqref{eq:PointConv}.
Regularized learning plays a vital role in various disciplines, including statistical learning, regression analysis, approximation theory, inverse problems, and signal processing.
Regularized learning is applied in various fields, such as engineering, computer science, psychology, intelligent medicine, and economic decision making.

The theory of regularized learning has already achieved a success in reproducing kernel Hilbert spaces (RKHS), exemplified by support vector machines in \cite{SteinwartChristmann2008}.
The learning theory has recently been extended to reproducing kernel Banach spaces (RKBS) in \cite{XuYe2019,ZhangXuZhang2009}.
Regularized learning is commonly employed to analyze classical input data composed of regular vectors.
The reproduction of RKHS and RKBS guarantees that the classical input data can be equivalently transferred to the point evaluation functionals.
In our papers \cite{HuangLiuTanYe2020,Ye2019II,Ye2019I}, we explore a generalized concept of linear-functional data for capturing discrete local information of multimodal data and multiscale models.
As shown in Equation \eqref{eq:GenData}, the generalized input data are composed of bounded linear functionals.
In the same manner as \cite[Theorem 3.1]{HuangLiuTanYe2020}, we
generalize the representer theorem in a Banach space, which has a predual space.
This demonstrates that regularized learning can be a viable method for analyzing linear-functional data.
Our primary concept is that regularized learning is locally interpreted through linear-functional data and the exact solution is globally approximated through regularized learning.
This article delves into the entire theory of regularized learning through the utilization of the weak* topology of the Banach space.
It encompasses discussions on the representer theorems, pseudo-approximation theorems, and convergence theorems. The convergence theorems ensure that the approximate solutions converge to the exact solution, and the representer theorems or pseudo-approximation theorems ensure that the approximate solutions are computed either equivalently or approximately through finite-dimensional optimization.

\begin{figure}[H]
    \centering
    \includegraphics[width=0.98\linewidth]{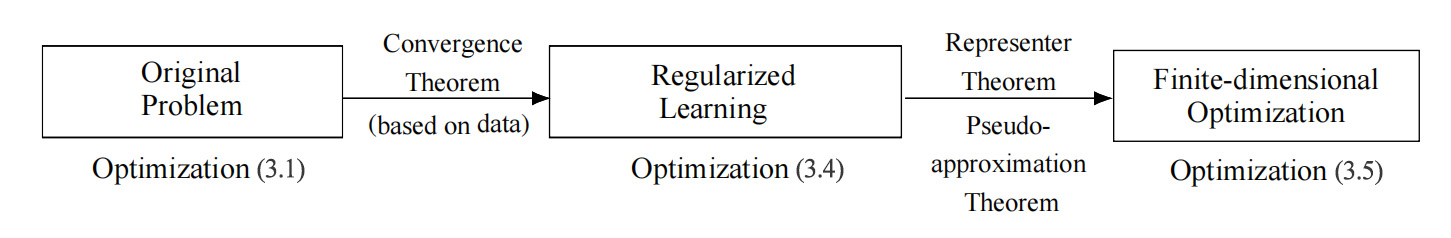}
\end{figure}

Lots of individuals demonstrate interest in the construction and implementation of regularized learning.
Currently, there is a limited number of papers \cite{CuckerSmale2002,SmaleZhou2004} that address the convergence of regularized learning for classical data. Nonetheless, there is still a dearth of research on the convergence analysis of regularized learning in Banach spaces for linear-functional data.
In Section \ref{sec:ConThm}, the convergence theorems concerning the approximate solutions in Banach spaces will be explored.
The predual space guarantees the existence of the weak* topology of the Banach space establishing connectivity between
the empirical risks and the regularization terms.
Even if the conditions of the uniform convergence or $\Gamma$-convergence of the empirical risks to the expected risks is not satisfied,
then the weak* compactness provides an alternative method to demonstrate the weak* convergence of the approximate solutions to the exact solution.
In contrast to the classical proof of convergence in learning theory, the assumption of independently and identically distributed data is not required in the proof of weak* convergence.

\begin{figure}[H]
    \centering
    \includegraphics[width=.80\linewidth]{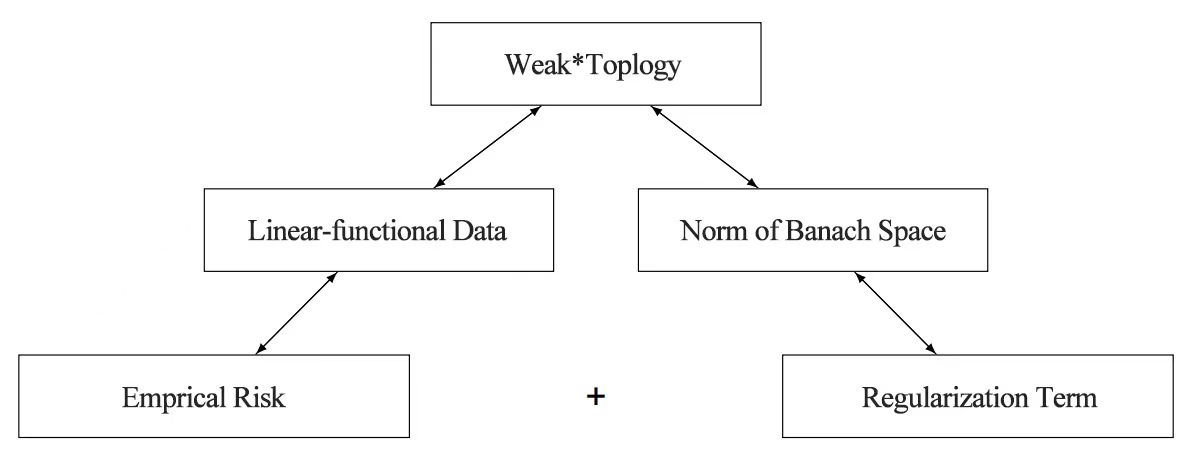}\par
    {Regularized Learning for Linear-functional Data}
\end{figure}

\begin{remark}
The formulas for regularized learning bear resemblance to classical constructions of inverse problems.
However, there are distinctions between the specifics of regularized learning and inverse problems.
For instance, regularized learning is typically computed through finitely many discrete data, while
inverse problems primarily concern continuous problems, such as integral equations.
We will finalize the proofs of the convergence theorems using the characteristics of the weak* topology.
Regularized learning can be considered an interdisciplinary field that encompasses approximation theory, optimization theory, and regularization theory.
\end{remark}

As shown in Examples \ref{exa:Gaussian} and \ref{exa:BinaryClass}, the classical input data can be equivalently transferred to the linear-functional data composed of point evaluation functions.
As shown in Example \ref{exa:Poisson}, linear-functional data have the capability to depict a wide range of data arising from partial differential equations.
In Theorem \ref{Thm:GenRepr} and Corollary \ref{Cor:GenReprExtPoint},
the techniques described in \cite{BoyerChambolleCastroETC2019,Unser2020} are employed to
improve the generalized representer theorems in \cite{HuangLiuTanYe2020} that is utilized for multi-loss functions.
This demonstrates that the approximate solutions can be equivalently computed through finite-dimensional optimization, such as support vector machines.
Under the condition of universal approximation, Theorem \ref{Thm:App} ensures that the approximate solutions can be approximately computed through finite-dimensional optimization, such as artificial neural networks.
Under Conditions (I) and (II) in Section \ref{sec:ConThm}, the convergence analysis of the approximate solutions is expounded in Theorems \ref{Thm:ConvThm} and \ref{Thm:ConvThm-subnet}.
Furthermore, Theorem \ref{Thm:ConvThm-lambda} presents a specialized approach for selecting the adaptive regularization parameters
for guaranteeing the convergence of the approximate solutions.
As shown in Example \ref{exa:finite-dim},
technique of regularization is necessary for ill-posed problems, even when Conditions (I) and (II) are satisfied.
We ultimately finalize the proofs of all the theorems in Section \ref{sec:PrfThm}.

\section{Notations and Preliminaries}\label{sec:NotPre}

In this section, we introduce several notations and results of functional analysis, along with some characteristics of Banach spaces that are commonly referenced in this article.
Here, we \emph{only} consider real-valued instances.

\subsection{Banach Spaces and Predual Spaces}\label{sec:BanachPredual}

Let $\Banach$ be a Banach space and let $\Banach^{\ast}$ be a dual space of $\Banach$.
Thus, $\Banach^{\ast}$ is a collection of all bounded linear functionals on $\Banach$.
We denote norms of $\Banach$ and $\Banach^{\ast}$ by $\norm{\cdot}$ and $\norm{\cdot}_{\ast}$, respectively, and a dual bilinear product of $\Banach$ and $\Banach^{\ast}$ by $\langle\cdot,\cdot\rangle$.
Let $\Ball_{\Banach}$ be a closed unit ball of $\Banach$ and let $\Sphere_{\Banach}$ be an unit sphere of $\Banach$.
Thus, $r\Ball_{\Banach}=\left\{f\in\Banach:\norm{f}\leq r\right\}$
and $r\Sphere_{\Banach}=\left\{f\in\Banach:\norm{f}=r\right\}$
for any $r>0$.
Let $\partial\norm{\cdot}$ be a subdifferential of $\norm{\cdot}$.
Thus, the characterizing subdifferential of norm in \cite[Corollary 2.4.16]{Zalinescu2002} shows that
\begin{equation}\label{eq:CharSubdiffNorm}
\partial\norm{\cdot}\left(f\right)=
\left\{
\begin{array}{cl}
\left\{\xi^{\ast}\in\Sphere_{\Banach^{\ast}}:\norm{f}=\langle f,\xi^{\ast}\rangle\right\},
&\text{if }f\in\Banach\setminus\left\{0\right\},\\
\Ball_{\Banach^{\ast}},&\text{otherwise}.
\end{array}
\right.
\end{equation}
Let $\Aset$ be any nonempty subset of $\Banach$.
We denote a dual cone of $\Aset$ by
$\Aset^{+}:=\left\{\xi^{\ast}\in\Banach^{\ast}:\langle f,\xi^{\ast}\rangle\geq0\text{ for all }f\in\Aset\right\}$,
and an orthogonal space of $\Aset$ by
$\Aset^{\bot}:=\left\{\xi^{\ast}\in\Banach^{\ast}:\langle f,\xi^{\ast}\rangle=0\text{ for all }f\in\Aset\right\}$.
Specifically, if $\Aset$ is a closed subspace of $\Banach$,
then $\Aset^{+}=\Aset^{\bot}$ and $\Aset^{\bot\bot}=\Aset$.
Let $\chi_{\Aset}$ be an indicator function of $\Aset$, that is,
$\chi_{\Aset}(f):=0$ if $f\in\Aset$ and $\chi_{\Aset}(f):=\infty$ otherwise.
As stated in \cite[Section 2.3]{Zalinescu2002}, if $\Aset$ is a closed affine space of $X$ and $f\in\Aset$, then
\begin{equation}\label{eq:NormalConeAffine}
-\partial\chi_{\Aset}\left(f\right)
=\left(\text{cone}\left(\Aset-f\right)\right)^{+}
=\left(\Aset-f\right)^{\bot}.
\end{equation}
Moreover, the dual bilinear product $\langle\cdot,\cdot\rangle$ is extended to
\[
\langle f,\vxi^{\ast}\rangle:=\left(\langle f,\xi^{\ast}_{1}\rangle,\langle f,\xi^{\ast}_{2}\rangle,\cdots,\langle f,\xi^{\ast}_{N}\rangle\right),
\]
where $f\in\Banach$ and $\vxi^{\ast}=\left(\xi_{1}^{\ast},\xi_{2}^{\ast},\cdots,\xi_{N}^{\ast}\right)$
composed of $\xi_{1}^{\ast},\xi_{2}^{\ast},\ldots,\xi_{N}^{\ast}\in\Banach^{\ast}$.
We say that $ker\left(\vxi^{\ast}\right)$ is a kernel of $\vxi^{\ast}$, that is,
\[
ker\left(\vxi^{\ast}\right):=\left\{f\in\Banach:\langle f,\vxi^{\ast}\rangle=\boldsymbol{0}\right\}.
\]
Thus, $ker\left(\vxi^{\ast}\right)$ is a closed linear subspace of $\Banach$ and
\begin{equation}\label{eq:KerLinSpan}
ker\left(\vxi^{\ast}\right)^{\bot}=
\Span\left\{\xi^{\ast}_{1},\xi^{\ast}_{2},\ldots,\xi^{\ast}_{N}\right\}.
\end{equation}

As stated in \cite[Definition 2.2.27]{DalesDashiellLauStrauss2016},
we say that $\Banach_{\ast}$ is a \emph{predual space} of $\Banach$ if $\Banach_{\ast}$ is a subspace of $\Banach^{\ast}$
and the dual space of $\Banach_{\ast}$ is isometrically isomorphic to $\Banach$, that is, $\left(\Banach_{\ast}\right)^{\ast}\cong\Banach$.
For example, if $\Space$ is a normed space and $\Banach=\Space^{\ast}$,
then $\Banach$ has a predual space $\Banach_{\ast}\cong\Space$.
Specifically, if $\Banach$ is reflexive, then $\Banach_{\ast}=\Banach^{\ast}$.
The predual space $\Banach_{\ast}$ guarantees that
the \emph{weak* topology} of $\Banach$ is introduced by the topologizing family $\Banach_{\ast}$ and
all weak* characteristics hold true on $\Banach$.
We say that a net $\left(f_{\alpha}\right)\subseteq\Banach$ weakly* converges to an element $x\in\Banach$ if and only if
$\lim_{\alpha}\langle f_{\alpha},\xi^{\ast}\rangle=\langle f,\xi^{\ast}\rangle$ for all $\xi^{\ast}\in\Banach_{\ast}$.
We will study the weak* convergence in $\Banach$ by the weak* topology $\sigma\left(\Banach,\Banach_{\ast}\right)$.
For one example, the Riesz–Markov theorem ensures that
the bounded total variation space $\Measure(\Rd)$ composed of regular countably additive Borel measures on $\Rd$ has a predual space, 
as exemplified in \cite[Generalized Total Variation]{UnserFageotWard2017}.
For another example,
if $\Omega$ is a compact domain,
then \cite[Theorem 6.4.1]{DalesDashiellLauStrauss2016} ensures that
$\Cont(\Omega)$ has a predual space if and only if $\Omega$ is hyper-Stonean.
In this article, we assume that $\Banach$ \emph{always} has a predual space $\Banach_{\ast}$ for convenience.

The Banach-Alaoglu theorem ensures that $r\Ball_{\Banach}$ is weakly* compact.
Thus, $r\Ball_{\Banach}$ with respect to the relative weak* topology is a compact Hausdorff space.
In this article, we will consider a specific continuity on $r\Ball_{\Banach}$.
Let a function $R\in\RR^{\Banach}$.
The $R$ is \emph{weakly* lower semi-continuous} or \emph{weakly* continuous} on $r\Ball_{\Banach}$ if and only if
\[
R\left(f_0\right)\leq\liminf_{\alpha}R\left(f_{\alpha}\right)
\text{ or }R\left(f_0\right)=\lim_{\alpha}R\left(f_{\alpha}\right),
\]
for any weakly* convergent net $\left(f_{\alpha}\right)\subseteq r\Ball_{\Banach}$ to an element $f_0\in r\Ball_{\Banach}$.
Obviously, if $R$ is weakly* lower semi-continuous or weakly* continuous, then
$R$ is weakly* lower semi-continuous or weakly* continuous on $r\Ball_{\Banach}$ for all $r>0$.
For example, since the norm function $\norm{\cdot}$ is weakly* lower semi-continuous and the linear functional $\xi^{\ast}\in\Banach_{\ast}$ is weakly* continuous,
$\norm{\cdot}$ is weakly* lower semi-continuous on $r\Ball_{\Banach}$ for all $r>0$
and $\xi^{\ast}$ is weakly* continuous on $r\Ball_{\Banach}$ for all $r>0$.
Let $\Cont\left(r\Ball_{\Banach}\right)$ be a collection of all continuous functions on
$r\Ball_{\Banach}$ with respect to the relative weak* topology.
Thus, $R$ is weakly* continuous on $r\Ball_{\Banach}$ if and only if $R\in\Cont\left(r\Ball_{\Banach}\right)$.
To simplify the notations, the restriction $R\mid_{r\Ball_{\Banach}}$ is rewritten as $R$.
Let a set $\Funset\subseteq\RR^{\Banach}$. The $\Funset$ is \emph{weakly* equicontinuous} on $r\Ball_{\Banach}$
if and only if
\[
\lim_{\alpha}\sup_{R\in\Funset}\abs{R\left(f_0\right)-R\left(f_{\alpha}\right)}=0,
\]
for any weakly* convergent net $\left(f_{\alpha}\right)\subseteq r\Ball_{\Banach}$ to an element $f_0\in r\Ball_{\Banach}$.
Now, we study the characteristics of $\Funset$.
\begin{lemma}\label{Lm:CompactEquicontinuous}
If $\Funset\subseteq\Banach_{\ast}$, then the following are equivalent.
\begin{itemize}
\item[(a)] The $\Funset$ is weakly* equicontinuous on $\Ball_{\Banach}$.
\item[(b)] The $\Funset$ is weakly* equicontinuous on $r\Ball_{\Banach}$ for all $r>0$.
\item[(c)] The $\Funset$ is relatively compact in $\Banach_{\ast}$.
\end{itemize}
\end{lemma}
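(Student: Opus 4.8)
The plan is to establish the cycle (c) $\Rightarrow$ (b) $\Rightarrow$ (a) $\Rightarrow$ (c), with the implication (b) $\Rightarrow$ (a) being trivial since $\Ball_{\Banach}=1\cdot\Ball_{\Banach}$. The whole argument rests on identifying a weakly* convergent net $(f_{\alpha})$ in $r\Ball_{\Banach}$ with a net in the compact Hausdorff space $(r\Ball_{\Banach},\sigma(\Banach,\Banach_{\ast}))$, and on the fact that each $\xi^{\ast}\in\Banach_{\ast}$, restricted to $r\Ball_{\Banach}$, is an element of $\Cont(r\Ball_{\Banach})$ via the canonical embedding $\Banach_{\ast}\hookrightarrow\Cont(r\Ball_{\Banach})$. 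The key observation is that this embedding is, up to the scaling factor $r$, an isometry onto its image: for $\xi^{\ast}\in\Banach_{\ast}$ one has $\sup_{f\in r\Ball_{\Banach}}\abs{\langle f,\xi^{\ast}\rangle}=r\norm{\xi^{\ast}}_{\ast}$, because the dual of $\Banach_{\ast}$ is $\Banach$ (this is where the predual hypothesis enters).

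For (c) $\Rightarrow$ (b): assume $\Funset$ is relatively compact in $\Banach_{\ast}$, fix $r>0$, and let $(f_{\alpha})\subseteq r\Ball_{\Banach}$ converge weakly* to $f_0\in r\Ball_{\Banach}$. Given $\varepsilon>0$, cover the compact closure $\overline{\Funset}$ by finitely many balls $\xi^{\ast}_i+\tfrac{\varepsilon}{3r}\Ball_{\Banach_{\ast}}$, $i=1,\dots,m$. For each $i$, weak* convergence gives $\abs{\langle f_0-f_{\alpha},\xi^{\ast}_i\rangle}<\varepsilon/3$ eventually, hence for $\alpha$ large enough this holds simultaneously for all $i=1,\dots,m$. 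For arbitrary $R\in\Funset$ pick $i$ with $\norm{R-\xi^{\ast}_i}_{\ast}<\varepsilon/(3r)$; then
\[
\abs{R(f_0)-R(f_{\alpha})}
\le\abs{\langle f_0,R-\xi^{\ast}_i\rangle}+\abs{\langle f_0-f_{\alpha},\xi^{\ast}_i\rangle}+\abs{\langle f_{\alpha},\xi^{\ast}_i-R\rangle}
<\tfrac{\varepsilon}{3}+\tfrac{\varepsilon}{3}+\tfrac{\varepsilon}{3}=\varepsilon,
\]
using $\norm{f_0},\norm{f_{\alpha}}\le r$. Taking the supremum over $R\in\Funset$ and then $\alpha\to\infty$ yields $\lim_{\alpha}\sup_{R\in\Funset}\abs{R(f_0)-R(f_{\alpha})}=0$, which is (b). This is essentially the classical Arzelà–Ascoli argument: total boundedness of $\Funset$ in the sup-norm over $r\Ball_{\Banach}$ transfers pointwise (i.e. net-wise) control into uniform control over the family.

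For (a) $\Rightarrow$ (c): I expect this to be the main obstacle, since it requires producing a convergent subnet from equicontinuity. The idea is to use the correspondence between equicontinuous families of continuous functions on a compact space and relatively compact subsets of $\Cont(r\Ball_{\Banach})$ under the uniform norm — namely the Arzelà–Ascoli theorem for $\Cont(K)$ with $K=\Ball_{\Banach}$ compact Hausdorff. Weak* equicontinuity of $\Funset$ on $\Ball_{\Banach}$ in the sense defined (uniform control along every weakly* convergent net) is exactly equicontinuity of $\Funset$ as a subset of $\Cont(\Ball_{\Banach})$ at every point; combined with pointwise boundedness (each $\xi^{\ast}\in\Funset$ is bounded on $\Ball_{\Banach}$ by $\norm{\xi^{\ast}}_{\ast}$, and one may first reduce to the case $\Funset$ bounded in $\Banach_{\ast}$, since an equicontinuous family on a compact space that is pointwise bounded is uniformly bounded), Arzelà–Ascoli gives that $\Funset$ is relatively compact in $(\Cont(\Ball_{\Banach}),\norm{\cdot}_{\infty})$. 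Finally, transport this back: the map $\xi^{\ast}\mapsto\xi^{\ast}|_{\Ball_{\Banach}}$ is an isometric embedding of $\Banach_{\ast}$ into $\Cont(\Ball_{\Banach})$, so relative compactness of the image forces relative compactness of $\Funset$ in $\Banach_{\ast}$, which is (c). The delicate point to handle carefully is the reduction to a bounded family and the verification that the closure of $\Funset$ taken in $\Cont(\Ball_{\Banach})$ actually consists of (restrictions of) elements of $\Banach_{\ast}$ rather than merely continuous functions — this follows because $\Banach_{\ast}$, being a closed subspace of $\Banach^{\ast}$, is complete, hence its isometric image in $\Cont(\Ball_{\Banach})$ is closed.
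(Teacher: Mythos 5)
Your argument is essentially the paper's: the substantive direction (a)$\Rightarrow$(c) is handled, as in the paper, by viewing $\Funset$ inside $\Cont\left(\Ball_{\Banach}\right)$ via the isometric embedding $\xi^{\ast}\mapsto\xi^{\ast}|_{\Ball_{\Banach}}$ and applying Arzel\`a--Ascoli, while your hands-on $\varepsilon/3$ total-boundedness proof of (c)$\Rightarrow$(b) is just the converse Arzel\`a--Ascoli direction made explicit, so the overall route coincides with the paper's (the paper proves (a)$\Leftrightarrow$(b) by linearity and (a)$\Leftrightarrow$(c) by the same identification).

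One justification should be repaired: you support ``pointwise boundedness'' of the family by noting that each $\xi^{\ast}\in\Funset$ is individually bounded on $\Ball_{\Banach}$ by $\norm{\xi^{\ast}}_{\ast}$, but this bound varies with $\xi^{\ast}$ and says nothing about $\sup_{\xi^{\ast}\in\Funset}\abs{\langle f,\xi^{\ast}\rangle}$ for a fixed $f$, which is what pointwise boundedness of the family means. The needed fact is true, but it comes from equicontinuity together with linearity: equicontinuity at $0$ gives a weak* neighborhood $U$ of $0$ in $\Ball_{\Banach}$ with $\abs{\langle g,\xi^{\ast}\rangle}\leq1$ for all $g\in U$ and all $\xi^{\ast}\in\Funset$, and for any $f\in\Ball_{\Banach}$ some scalar multiple $tf$ with $t>0$ lies in $U$, whence $\sup_{\xi^{\ast}\in\Funset}\abs{\langle f,\xi^{\ast}\rangle}\leq1/t$; combined with compactness of $\Ball_{\Banach}$ this yields the uniform bound, i.e.\ norm-boundedness of $\Funset$. (The paper disposes of exactly this point by citing the result that an equicontinuous family of linear functionals is bounded.) With that correction, and your observation that the isometric image of the complete space $\Banach_{\ast}$ is closed in $\Cont\left(\Ball_{\Banach}\right)$ so relative compactness transfers back, the proof is complete. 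A cosmetic remark: the identity $\norm{\xi^{\ast}}_{\ast}=\sup_{f\in\Ball_{\Banach}}\abs{\langle f,\xi^{\ast}\rangle}$ is just the definition of the dual norm on $\Banach^{\ast}$; the predual hypothesis is really used to define the weak* topology $\sigma\left(\Banach,\Banach_{\ast}\right)$ and to make $\Ball_{\Banach}$ weakly* compact via Banach--Alaoglu.
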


Obviously, Lemma \ref{Lm:CompactEquicontinuous} (b) is also equivalent that
$\Funset$ is weakly* equicontinuous on all bounded subset of $\Banach$.

\begin{remark}
In this article, the notations of machine (statistical) learning in \cite{HuangLiuTanYe2020,XuYe2019} will be integrated with the notations of topological vector spaces in \cite{Megginson1998,Zalinescu2002}.
We primarily utilize the symbol system outlined in \cite{HuangLiuTanYe2020}. Specifically,
we denote input and output by $\left(\xi^{\ast},y\right)$.
As stated in \cite[Definitions 2.1.8 and 2.1.26]{Megginson1998}, the theory of nets and subnets is utilized to study the weak* characteristics of Banach spaces.
\end{remark}

\subsection{Motivations of Linear-functional Data}\label{sec:MotLinData}

For classical machine learning of regression and classification,
we will approximate an exact solution using classical input data
\[
x_1,x_2,\ldots,x_N\in\Omega,
\]
where $\Omega$ is a subset of $\Rd$.
If the machine learning problem issues in $\Hilbert_K(\Omega)$, then the representer theorem in RKHS in \cite{SteinwartChristmann2008} ensures that the approximate solution can be expressed as a linear combination of a kernel basis
\[
K\left(\cdot,x_1\right),K\left(\cdot,x_2\right),\ldots,K\left(\cdot,x_N\right)\in\Hilbert_K(\Omega),
\]
where $\Hilbert_K(\Omega)$ is a RKHS and $K:\Omega\times\Omega\to\RR$ is its reproducing kernel.
The proof of the representer theorem in RKHS relies on the reproduction of RKHS which demonstrates that
$x_1,x_2,\ldots,x_N$ can be equivalently transferred to
\[
\delta_{x_1},\delta_{x_2},\ldots,\delta_{x_N}\in\left(\Hilbert_K(\Omega)\right)_{\ast}\cong\Hilbert_K(\Omega),
\]
where $\delta_{x}$ is a \emph{point evaluation functional} for a $x\in\Omega$, that is,
$\langle f,\delta_{x}\rangle=f(x)$ for any $f\in\RR^{\Omega}$.
Recently, the representer theorem in RKBS in \cite{XuYe2019} ensures that the approximate solution in $\RKBS_K^p(\Omega)$ also relies on the reproduction of RKBS which demonstrates that $x_1,x_2,\ldots,x_N$ can be equivalently transferred to
\[
\delta_{x_1},\delta_{x_2},\ldots,\delta_{x_N}\in\left(\RKBS_K^p(\Omega)\right)_{\ast}\cong\RKBS_K^q(\Omega),
\]
where $\RKBS_K^p(\Omega)$ is a $p$-norm RKBS for $1\leq p<\infty$ and $p,q$ is a pair of conjugate exponents. Specifically,
$\RKBS_K^2(\Omega)=\Hilbert_K(\Omega)$ and
$\RKBS_K^1(\Omega)$ is nonreflexive, nonstrictly convex, and nonsmooth.
Therefore, this concept prompts an exploration of machine learning problems through \emph{linear-functional data}
\[
\xi_{1}^{\ast},
\xi_{2}^{\ast},\ldots,\xi_{N}^{\ast}\in\Banach_{\ast}.
\]
This shows that linear-functional data can be a discretization of integral equations and differential equations, for example
$\langle f,\xi^{\ast}\rangle:=\int_{\Omega}f(x)\mu(\ud x)$ and $\langle f,\xi^{\ast}\rangle:=\partial f(x)$ as shown in Examples \ref{exa:BinaryClass} and \ref{exa:Poisson}, respectively.
Thus, linear-functional can be viewed as not only a vector or a point but also an operator or a distribution.
Roughly speaking, the predual space can be viewed as an extension of a classical domain that encompasses elements of point evaluation, differentiation, and integration.
This demonstrates the potential of utilizing linear-functional data to capture local information from various models for the development of global approximate solutions through regularized learning.
More detailed information regarding linear-functional data and associated concepts can be found in \cite{HuangLiuTanYe2020,Unser2020,WangXu2019,Ye2019II,Ye2019I}.
For instance, the linear-functional data are defined in predual spaces to introduce the representer theorems in Banach spaces in \cite{HuangLiuTanYe2020,WangXu2019}, while the linear-functional data are defined in Banach spaces to introduce the representer theorems in dual spaces in \cite{Unser2020}.

Let $\Value$ be a collection of all output elements dependent on various machine learning problems, such as $\Value:=\RR$ for regression and $\Value:=\{\pm1\}$ for classification.
We focus on deterministic data. Actually, the output data can be extended to stochastic data. More precisely, $\Value$ is a collection of random variables with discrete or continuous distributions, as shown in Example \ref{exa:BinaryClass}.

In this article, we consider \emph{training data}
\begin{equation}\label{eq:GenData}
\left(\xi_{n1}^{\ast},y_{n1}\right),
\left(\xi_{n2}^{\ast},y_{n2}\right),\ldots,\left(\xi_{nN_n}^{\ast},y_{nN_n}\right)
\in\Banach_{\ast}
\times\Value,
\quad\text{for all }n\in\NN,
\end{equation}
where $N_n\in\NN$.
Usually $N_n\to\infty$ when $n\to\infty$.
For any $n$th approximation step, we collect $N_n$ observations, and they may be partial duplicate from different steps.
Let
\[
\vxi^{\ast}_{n}:=\left(\xi_{n1}^{\ast},\xi_{n2}^{\ast},\cdots,\xi_{nN_n}^{\ast}\right),
\quad
\vy_{n}:=\left(y_{n1},y_{n2},\cdots,y_{nN_n}\right),
\quad\text{for all }n\in\NN.
\]
Thus, Equation \eqref{eq:GenData} is rewritten as
\begin{equation}\label{eq:GenData-vec}
\left(\vxi^{\ast}_{n},\vy_{n}\right)\in\Banach_{\ast}^{N_n}\times\Value^{N_n},
\quad\text{for all }n\in\NN,
\end{equation}
where
$\Banach_{\ast}^{N_n}:=\overset{N_n}{\underset{k=1}{\otimes}}\Banach_{\ast}$ and $\Value^{N_n}:=\overset{N_n}{\underset{k=1}{\otimes}}\Value$.
We denote a collection of all training data by
\[
\Data:=\left\{\left(\vxi^{\ast}_{n},\vy_{n}\right):n\in\NN\right\}.
\]

Next, we denote a collection of all input data by
\[
\Funset_{\Data}:=\left\{\xi_{nk_n}^{\ast}:k_n\in\NN_{N_n},n\in\NN\right\},
\]
where $\NN_N:=\left\{1,2,\ldots,N\right\}$ for any $N\in\NN$.
Usually, $\Funset_{\Data}$ is an infinite countable subset of $\Banach_{\ast}$.
Based on the construction of $\Funset_{\Data}$, we will consider a specific condition of $\Funset_{\Data}$ dependent on the norm of $\Banach$ to verify the convergence of the approximate solutions to the exact solutions, such as the equivalent conditions of
the weak* equicontinuity of $\Funset_{\Data}$ on $\Ball_{\Banach}$ and the relative compactness of $\Funset_{\Data}$ in $\Banach_{\ast}$ shown in Lemma \ref{Lm:CompactEquicontinuous}.
We will demonstrate that the classical input data composed of point evaluation functionals satisfies the specific condition.
Let $\Cont^{0,\vartheta}(\Omega)$ be a H\"{o}lder continuous space for an exponent $\vartheta>0$. According to \cite[Chapter 4]{XuYe2019}, $\RKBS_K^p(\Omega)$ is embedded into $\Cont^{0,\vartheta}(\Omega)$ for various reproducing kernels.
\begin{lemma}\label{Lm:weak-equicont}
Suppose that $\Omega$ is a compact set and $\delta_{x}\in\Banach_{\ast}$ for all $x\in\Omega$.
If $\Banach$ is embedded into $\Cont^{0,\vartheta}(\Omega)$,
then $\left\{\delta_{x_n}:n\in\NN\right\}$ is relatively compact in $\Banach_{\ast}$
for any $\left\{x_n:n\in\NN\right\}\subseteq\Omega$.
\end{lemma}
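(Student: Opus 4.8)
The plan is to show that the evaluation map $x\mapsto\delta_{x}$ is H\"{o}lder continuous from the compact set $\Omega$ into $\Banach_{\ast}$; then $\Iset:=\left\{\delta_{x}:x\in\Omega\right\}$ is a compact subset of $\Banach_{\ast}$, and since $\left\{\delta_{x_{n}}:n\in\NN\right\}\subseteq\Iset$, its closure is a closed subset of $\Iset$ and hence compact, which is exactly relative compactness in $\Banach_{\ast}$.

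First I would unwind the hypothesis that $\Banach$ is embedded into $\Cont^{0,\vartheta}(\Omega)$: this gives a constant $C>0$ with $\norm{f}_{\Cont^{0,\vartheta}(\Omega)}\leq C\norm{f}$ for all $f\in\Banach$, and in particular the H\"{o}lder seminorm is controlled, so that $\abs{f(x)-f(x')}\leq C\norm{f}\,\abs{x-x'}^{\vartheta}$ for all $x,x'\in\Omega$. Since by assumption $\delta_{x}\in\Banach_{\ast}$ for every $x\in\Omega$, I can then compute, for $x,x'\in\Omega$,
\[
\norm{\delta_{x}-\delta_{x'}}_{\ast}=\sup_{f\in\Ball_{\Banach}}\abs{\langle f,\delta_{x}-\delta_{x'}\rangle}=\sup_{f\in\Ball_{\Banach}}\abs{f(x)-f(x')}\leq C\abs{x-x'}^{\vartheta}.
\]
Hence $x\mapsto\delta_{x}$ is H\"{o}lder continuous, in particular continuous, as a map from $\Omega$ into $\left(\Banach_{\ast},\norm{\cdot}_{\ast}\right)$.

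Second, since $\Omega$ is compact and the continuous image of a compact set is compact, $\Iset=\left\{\delta_{x}:x\in\Omega\right\}$ is compact in $\Banach_{\ast}$; being compact it is closed in $\Banach_{\ast}$, so for any $\left\{x_{n}:n\in\NN\right\}\subseteq\Omega$ the closure of $\left\{\delta_{x_{n}}:n\in\NN\right\}$ is a closed subset of the compact set $\Iset$ and therefore compact. This proves the claim. (Equivalently one could argue sequentially: a sequence drawn from $\left\{\delta_{x_{n}}\right\}$ has the form $(\delta_{x_{n_{j}}})_{j}$, compactness of $\Omega$ yields $x_{n_{j_{l}}}\to x_{\ast}\in\Omega$, and the displayed estimate gives $\norm{\delta_{x_{n_{j_{l}}}}-\delta_{x_{\ast}}}_{\ast}\to0$ with $\delta_{x_{\ast}}\in\Banach_{\ast}$; or one could feed the estimate into Lemma \ref{Lm:CompactEquicontinuous} and argue via weak* equicontinuity of $\Funset_{\Data}$-type sets on $\Ball_{\Banach}$.)

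I do not expect a genuine obstacle here: the whole content is the norm estimate for $\norm{\delta_{x}-\delta_{x'}}_{\ast}$, and the only two points needing care are that the embedding constant is uniform in $x,x'\in\Omega$ (which holds because the H\"{o}lder norm is taken over all of $\Omega$) and that the limiting functional $\delta_{x_{\ast}}$ again lies in $\Banach_{\ast}$ (which is one of the standing hypotheses). In short, the lemma is the observation that an embedding into a H\"{o}lder space forces the point evaluation functionals to depend H\"{o}lder-continuously on the point in the predual norm.
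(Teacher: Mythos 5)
Your proof is correct, but it follows a genuinely different route from the paper's, even though both hinge on the same embedding estimate $\abs{f(x)-f(x')}\leq C\norm{f}\,\norm{x-x'}_2^{\vartheta}$. You convert that estimate into norm (H\"{o}lder) continuity of the evaluation map $x\mapsto\delta_{x}$ from $\Omega$ into $\left(\Banach_{\ast},\norm{\cdot}_{\ast}\right)$, and then conclude at once from the compactness of $\Omega$ that the image $\left\{\delta_{x}:x\in\Omega\right\}$ is compact, so any subset $\left\{\delta_{x_n}:n\in\NN\right\}$ is relatively compact in $\Banach_{\ast}$. The paper instead proves the weak* equicontinuity of $\left\{\delta_{x_n}:n\in\NN\right\}$ on $\Ball_{\Banach}$ directly: it takes an arbitrary weakly* convergent net $\left(f_{\alpha}\right)\subseteq\Ball_{\Banach}$, covers the compact set $\Omega$ by finitely many balls of radius $\theta$, and runs an $\epsilon/3$ argument, after which Lemma \ref{Lm:CompactEquicontinuous} (the Arzel\`a--Ascoli-type equivalence between weak* equicontinuity on $\Ball_{\Banach}$ and relative compactness in $\Banach_{\ast}$) yields the conclusion. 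Your argument is shorter, avoids nets entirely, and in fact gives slightly more (compactness of the full set $\left\{\delta_{x}:x\in\Omega\right\}$, uniformly over all choices of the sequence); the paper's argument has the expository advantage of rehearsing the equicontinuity mechanism that is reused later, e.g.\ in Proposition \ref{Pro:EmpRisk} (c), and your version recovers that formulation simply by reading Lemma \ref{Lm:CompactEquicontinuous} in the other direction. The two points you flag as needing care (uniformity of the embedding constant over $\Omega$ and membership $\delta_{x_{\ast}}\in\Banach_{\ast}$) are indeed exactly the hypotheses supplied by the lemma, so there is no gap.
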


\begin{remark}\label{Rm:Equicontin-data}
It can be readily verified that $\Funset_{\Data}$ is equicontinuous, when $\Funset_{\Data}$ is uniformly bounded.
However, equicontinuity is not helpful for verifying weak* convergence.
Weak* equicontinuity typically serves as a crucial condition in proving weak* convergence.
Unfortunately, the condition of weak* equicontinuity is excessively strict to be validated using real data because a weakly* convergent subnet
may be divergent or unbounded.
We are considering a less stringent condition that can be easily verified in numerous machine learning problems.
For classical input data, Lemma \ref{Lm:weak-equicont} ensures that $\Funset_{\Data}$ satisfies relative compactness in $\Banach_{\ast}$ which is equivalent to weak* equicontinuity on $\Ball_{\Banach}$.
This illustrates that the relative compactness of $\Funset_{\Data}$ in $\Banach_{\ast}$ is a fundamental condition.
Moreover, if $\Funset_{\Data}$ is weakly* equicontinuous or bounded weakly* equicontinuous, then
$\Funset_{\Data}$ is weakly* equicontinuous on $\Ball_{\Banach}$.
If $\Funset_{\Data}$ is weakly* equicontinuous on $\Ball_{\Banach}$,
then $\Funset_{\Data}$ is equicontinuous.
The weak* equicontinuity of $\Funset_{\Data}$ on $\Ball_{\Banach}$ can be viewed as an intermediate condition.
The intermediate condition of linear-functional data also implies the specific weak* equicontinuity of empirical risk functions, exemplified by Condition (II$'$) in Section \ref{sec:ConThm} for proving the convergence theorems.
\end{remark}

\subsection{Extensions of Loss Functions}\label{sec:LossERisk}

By employing classical machine learning, we typically utilize a single loss function
$\closs:\Banach_{\ast}\times\Value\times\RR\to[0,\infty)$ to compute
empirical risks, such as
\begin{equation}\label{eq:EmpRisk-1}
\text{risk}=
\frac{1}{N_n}\sum_{k=1}^{N_n}\closs\left(\xi_{nk}^{\ast},y_{nk},\langle f,\xi_{nk}^{\ast}\rangle\right),
\quad\text{for }f\in\Banach.
\end{equation}
We aim to integrate diverse data from multiple models in order to formulate novel machine learning algorithms.
For sophisticated models, empirical risks can be computed through the utilization of diverse loss functions, such as
\begin{equation}\label{eq:EmpRisk-2}
\text{risk}=
\frac{1}{M_n}\sum_{j=1}^{M_n}
\frac{1}{N_{n}^{j}}\sum_{k=1}^{N_{n}^{j}}\closs^{j}_n\big(\xi_{nk}^{j\ast},y_{nk}^{j},\langle f,\xi_{nk}^{j\ast}\rangle\big),
\quad\text{for }f\in\Banach,
\end{equation}
where $\closs^j_n$ is a loss function related to  $\big(\xi_{n1}^{j\ast},y_{n1}^{j}\big),\big(\xi_{n2}^{j\ast},y_{n2}^{j}\big),\ldots,\big(\xi_{nN_n^j}^{j\ast},y_{nN_n^j}^{j}\big)$
for $j=1,2,\ldots,M_n$.

Now, we extend the concept of loss functions to cover the standard formulas in Equation \eqref{eq:EmpRisk-1}, the complicated formulas in Equation \eqref{eq:EmpRisk-2}, and other general formulas.
For any $n$th approximation step, we denote a \emph{multi-loss function} by
\begin{equation}\label{eq:lossfun}
\loss_n:\Banach_{\ast}^{N_n}\times\Value^{N_n}\times\RR^{N_n}
\to[0,\infty),
\end{equation}
where $N_n$ is the number of the related data $\left(\vxi^{\ast}_{n},\vy_{n}\right)$.
For example of classical machine learning, we have
\begin{equation}\label{eq:lossfun-exa}
\loss_n\left(\vxi,\vy,\vt\right)=
\frac{1}{N_n}\sum_{k=1}^{N_n}\closs\left(\xi_{k}^{\ast},y_{k},t_{k}\right),\quad
\text{for }\vxi^{\ast}\in\Banach_{\ast}^{N_n},~\vy\in\Value^{N_n},~\vt\in\RR^{N_n},
\end{equation}
where $\vxi^{\ast}=\left(\xi_{1}^{\ast},\xi_{2}^{\ast},\cdots,\xi_{N_n}^{\ast}\right)$,
$\vy=\left(y_{1},y_{2},\cdots,y_{N_n}\right)$, and $\vt=\left(t_{1},t_{2},\cdots,t_{N_n}\right)$.
We say that $\loss_n$ is a \emph{lower semi-continuous, continuous, or convex multi-loss function}
if $\loss_n\left(\vxi^{\ast},\vy,\cdot\right)$ is lower semi-continuous, continuous, or convex for all $\vxi^{\ast}\in\Banach_{\ast}^{N_n}$ and all $\vy\in\Value^{N_n}$.
We say that $\loss_n$ is a \emph{local Lipschitz continuous multi-loss function}
if for any $\theta>0$, there exists a $C_n^{\theta}>0$ such that
\[
\sup_{\vxi^{\ast}\in\Banach_{\ast}^{N_n},\vy\in\Value^{N_n}}
\abs{\loss_n\left(\vxi^{\ast},\vy,\vt_1\right)-\loss_n\left(\vxi^{\ast},\vy,\vt_2\right)}
\leq C_n^{\theta}\norm{\vt_1-\vt_2}_{\infty},
\]
for all $\vt_1,\vt_2\in[-\theta,\theta]^{N_n}$.
We denote a collection of all multi-loss functions
by
\[
\Loss:=\left\{\loss_n:n\in\NN\right\}.
\]
We say that $\Loss$ is \emph{uniformly local Lipschitz continuous} if $\sup_{n\in\NN}C_n^{\theta}<\infty$ for all $\theta$.
The local Lipschitz continuity is a prevalent condition of loss functions that is similar to \cite[Definition 2.18]{SteinwartChristmann2008} for support vector machines.
\begin{lemma}\label{Lm:loss}
If $\loss_n$ has the formula as in Equation \eqref{eq:lossfun-exa}, then the following hold.
\begin{itemize}
\item[(a)] If $\closs$ is a lower semi-continuous, continuous, convex, or local Lipschitz continuous loss function,
then $\loss_n$ is a lower semi-continuous, continuous, convex, or local Lipschitz continuous multi-loss function.
\item[(b)] If $\closs$ is a local Lipschitz continuous loss function, then $\Loss$ is uniformly local Lipschitz continuous.
\end{itemize}
\end{lemma}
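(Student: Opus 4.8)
The plan is to rewrite the multi-loss function in Equation \eqref{eq:lossfun-exa} as a finite nonnegative combination of one-variable loss terms precomposed with coordinate projections, and then to invoke the standard permanence of each of the four properties under such operations. Fix $n\in\NN$, $\vxi^{\ast}=(\xi_1^{\ast},\dots,\xi_{N_n}^{\ast})\in\Banach_{\ast}^{N_n}$ and $\vy=(y_1,\dots,y_{N_n})\in\Value^{N_n}$, and let $p_k:\RR^{N_n}\to\RR$, $p_k(\vt)=t_k$, be the $k$-th coordinate projection, which is linear and continuous. Then
\[
\loss_n(\vxi^{\ast},\vy,\vt)=\frac{1}{N_n}\sum_{k=1}^{N_n}\bigl(\closs(\xi_k^{\ast},y_k,\cdot)\circ p_k\bigr)(\vt),
\]
so $\loss_n(\vxi^{\ast},\vy,\cdot)$ is an average of the functions $\closs(\xi_k^{\ast},y_k,\cdot)\circ p_k$ on $\RR^{N_n}$.

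For part (a), I would argue property by property. If $\closs$ is lower semi-continuous (resp. continuous), then each $\closs(\xi_k^{\ast},y_k,\cdot)$ is lower semi-continuous (resp. continuous) on $\RR$; composing with the continuous map $p_k$ preserves this, and a finite sum of lower semi-continuous (resp. continuous) functions is again lower semi-continuous (resp. continuous), hence so is $\loss_n(\vxi^{\ast},\vy,\cdot)$. If $\closs$ is convex, then $\closs(\xi_k^{\ast},y_k,\cdot)$ is convex on $\RR$ and, composed with the affine map $p_k$, is convex on $\RR^{N_n}$; a finite nonnegative combination of convex functions is convex, so $\loss_n(\vxi^{\ast},\vy,\cdot)$ is convex. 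Since $\vxi^{\ast},\vy$ were arbitrary, in each case $\loss_n$ is a multi-loss function of the corresponding type.

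The local Lipschitz case is handled by a direct estimate, which simultaneously settles (a) and prepares (b). Taking the natural analogue of \cite[Definition 2.18]{SteinwartChristmann2008} for the single loss, namely that for each $\theta>0$ there is $C^{\theta}>0$ with $\sup_{\xi^{\ast}\in\Banach_{\ast},\,y\in\Value}\abs{\closs(\xi^{\ast},y,t_1)-\closs(\xi^{\ast},y,t_2)}\le C^{\theta}\abs{t_1-t_2}$ for all $t_1,t_2\in[-\theta,\theta]$, I would take $\vt,\vt'\in[-\theta,\theta]^{N_n}$, note that every coordinate $t_k,t_k'$ lies in $[-\theta,\theta]$, and estimate
\[
\abs{\loss_n(\vxi^{\ast},\vy,\vt)-\loss_n(\vxi^{\ast},\vy,\vt')}
\le\frac{1}{N_n}\sum_{k=1}^{N_n}\abs{\closs(\xi_k^{\ast},y_k,t_k)-\closs(\xi_k^{\ast},y_k,t_k')}
\le C^{\theta}\norm{\vt-\vt'}_{\infty},
\]
the bound being uniform in $\vxi^{\ast}\in\Banach_{\ast}^{N_n}$ and $\vy\in\Value^{N_n}$. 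Thus $\loss_n$ is a local Lipschitz continuous multi-loss function with $C_n^{\theta}=C^{\theta}$, proving (a); and since this constant does not depend on $n$, $\sup_{n\in\NN}C_n^{\theta}=C^{\theta}<\infty$ for every $\theta>0$, which is exactly (b).

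There is no serious obstacle here; the only point worth flagging is the role of the normalization $1/N_n$ in Equation \eqref{eq:lossfun-exa}. It is precisely this averaging that keeps the Lipschitz constant of $\loss_n$ from growing with the number of data $N_n$ and hence makes the uniform bound in (b) possible; with an unnormalized sum one would only obtain $C_n^{\theta}=N_n C^{\theta}$ and uniformity would fail. Everything else is the textbook stability of lower semi-continuity, continuity, and convexity under composition with continuous or affine maps and under finite nonnegative sums.
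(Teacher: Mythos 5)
Your proof is correct and follows essentially the same route as the paper's: the paper likewise views $\loss_n(\vxi^{\ast},\vy,\cdot)$ as a (normalized) linear combination of the single losses $\closs(\xi_k^{\ast},y_k,\cdot)$, declares (a) straightforward by the standard permanence properties, and obtains (b) from the observation that $C_n^{\theta}=C^{\theta}$ for all $n$. You merely spell out the coordinate-projection decomposition and the Lipschitz estimate that the paper leaves implicit, and your remark on the role of the $1/N_n$ averaging is a correct side observation.
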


\section{Regularized Learning in Banach Spaces}\label{sec:RegLearnApp}

In numerous machine learning problems, the primary objective is to identify an \emph{exact solution} $f^0$ to minimize the expected risks (errors) over a Banach space $\Banach$, that is,
\begin{equation}\label{eq:TRM}
\risk\big(f^0\big)=
\inf_{f\in\Banach}\risk(f),
\end{equation}
where $\risk:\Banach\to[0,\infty)$ is an \emph{expected risk function}.
Let $\Sset^{0}\left(\Banach\right)$ be the collection of all minimizers of Optimization \eqref{eq:TRM}.
Thus, $f^0\in\Sset^{0}\left(\Banach\right)$.
Unfortunately, $\risk$ is frequently unidentified or uncertain, which complicates the resolution of Optimization \eqref{eq:TRM}.
To address the abstract problem, we approximate the exact solution using training data.

\begin{remark}
In practical applications, any minimizer of Optimization \eqref{eq:TRM} is a feasible solution.
Hence, the assumption of the singleton of $\Sset^{0}\left(\Banach\right)$ is unnecessary.
If $\Banach$ is a dense subspace of a topological vector space $\Space$ and $\risk$ is continuously extended on $\Space$, then Optimization \eqref{eq:TRM} can be viewed as an equivalent problem of the minimization of $\risk$ over
$\Space$.
Thus, the existence of $f^0$ can be assumed in the following discussions.
In this article, our focus is solely on the minimization of $\risk$ over $\Banach$ that has a predual space $\Banach_{\ast}$ ensuring the well-defined nature of the weak* topology of $\Banach$.
\end{remark}

\subsection{Regularized Learning for Linear-functional Data}\label{sec:RegLearnLinData}

For any $n$th approximation step, we utilize the training data $\left(\vxi^{\ast}_{n},\vy_{n}\right)$ in Equation \eqref{eq:GenData-vec} and the multi-loss function $\loss_n$ in Equation \eqref{eq:lossfun} to construct an \emph{empirical risk function} $\risk_n$, that is,
\begin{equation}\label{eq:EmpRiskFun}
\risk_{n}(f):=\loss_n\left(\vxi^{\ast}_{n},\vy_{n},\langle f,\vxi_n^{\ast}\rangle\right),
\quad\text{for }f\in\Banach.
\end{equation}
Thus, the empirical risks are computable through the utilization of training data and multi-loss functions.
The output $\vy_{n}$ can be viewed as the values of the exact solution at the input $\vxi_{n}^{\ast}$ without or with noises, for example of noiseless data,
$\vy_{n}=\langle f^0,\vxi_{n}^{\ast}\rangle$ in regression and
$\vy_{n}=\sign\left(\langle f^0,\vxi_{n}^{\ast}\rangle\right)$ in classification.
Roughly speaking, the approximate solution driven from the training data can be viewed as an equivalent element of the normal vector of a decision rule.
Furthermore, the multi-loss functions can be nonsmooth and nonconvex.

In the same manner as classical learning theory, empirical risks are employed to approximate expected risks, and empirical risks are considered as explicitly computable discretizations of expected risks.
Therefore, $\Data$ and $\Loss$ are chosen such that
$\risk_n$ converges pointwise to $R$ when $n\to\infty$, that is,
\begin{equation}\label{eq:PointConv}
\risk(f)=
\lim_{n\to\infty}\risk_n\left(f\right),
\quad\text{for all }f\in\Banach.
\end{equation}
The pointwise convergence represents a weak condition that is applicable in practical problems, even in case where $\risk_n$ is employed to approximate a simplified form of $\risk$.
In the field of learning theory, the viability of regularized learning derives from an approximation law using a large amount of data.
Therefore, according to Equation \eqref{eq:PointConv}, the approximate solutions will be determined by employing the empirical risk functions.
In order to verify generalization and prevent overfitting,
we determine an \emph{approximate solution} $f_{n}^{\lambda}$ to minimize the regularized empirical risks over $\Banach$, that is,
\begin{equation}\label{eq:RERM}
\risk_{n}\big(f_{n}^{\lambda}\big)
+\lambda\regfun\big(\norm{f_{n}^{\lambda}}\big)
=
\inf_{f\in\Banach}
\risk_{n}(f)
+\lambda\regfun\left(\norm{f}\right),
\end{equation}
where $\lambda>0$ is a regularization parameter and
$\regfun:[0,\infty)\to[0,\infty)$ is a continuous strictly increasing function such that $\regfun\left(r\right)\to\infty$ when $r\to\infty$.
Let $\Sset_{n}^{\lambda}\left(\Banach\right)$ be a collection of all minimizers of Optimization \eqref{eq:RERM}.
Since $\regfun$ is fixed and unfocused, we do not index $\Sset_{n}^{\lambda}\left(\Banach\right)$ for $\regfun$.
Thus, $f_{n}^{\lambda}\in\Sset_{n}^{\lambda}\left(\Banach\right)$.
A specific example of regularized learning is a classical binary classifier for hinge loss, that is
\[
\inf_{f\in\Hilbert_K(\Rd)}\frac{1}{N}\sum_{k=1}^{N}\max\left\{0,1-y_kf(x_k)\right\}+\lambda\norm{f}^2,
\]
where the binary data $\left\{(x_k,y_k):k\in\NN_{N}\right\}\subseteq\Rd\times\{\pm1\}$.
\begin{remark}
By employing the identical method of classical regularization,
we consider the simplistic form of the regularization terms, which rely exclusively on the regularization parameters and the norms of Banach spaces.
There exist numerous alternative formulas for regularization terms aimed at mitigating the risk of overfitting.
For example, the regularization term $\lambda\regfun\left(\norm{f}\right)$ can be extended in a more general form
$\tilde{\regfun}:\Banach\times[0,\infty)^{M}\to[0,\infty)$ that satisfies the following conditions:
\begin{itemize}
\item[$\bullet$] For all $\lambda\in\RR_+^{M}$, $\tilde{\regfun}\left(f_1,\lambda\right)>\tilde{\regfun}\left(f_2,\lambda\right)$ if and only if $\norm{f_1}>\norm{f_2}$, $\tilde{\regfun}\left(f,\lambda\right)\to\infty$ when $\norm{f}\to\infty$, and $\tilde{\regfun}(\cdot,\lambda)$ is a weakly* lower semi-continuous function.
\item[$\bullet$] For all $f\in\Banach$, $\tilde{\regfun}(f,0)=0$,
    $\tilde{\regfun}\left(f,\lambda\right)/\norm{\lambda}_2\to\regfun\left(\norm{f}\right)$ when $\lambda\to0$,
    and $\tilde{\regfun}(f,\cdot)$ is a concave function.
\end{itemize}
By a similar argument, the same conclusions still hold true for the extended $\tilde{\regfun}\left(f,\lambda\right)$.
\end{remark}

Optimization \eqref{eq:RERM} is a primary formula in regularized learning.
Regularized learning is evidently founded techniques of regularization and
$\risk_n$ plays a crucial role. We will study the characteristics of $\risk_n$.

\begin{proposition}\label{Pro:EmpRisk}
\begin{itemize}
\item[(a)] Then
\[
\risk_n(f)=\risk_n(f+h),\quad\text{for all }f\in\Banach\text{ and all }h\in ker\left(\vxi_n^{\ast}\right).
\]
\item[(b)] If $\loss_n$ is a lower semi-continuous, continuous, or convex multi-loss function, then $\risk_n$ is weakly* lower semi-continuous, weakly* continuous, or convex.
\item[(c)] If $\Funset_{\Data}$ is relatively compact in $\Banach_{\ast}$ and $\Loss$ is uniformly local Lipschitz continuous, then $\left\{\risk_n:n\in\NN\right\}$ is weakly* equicontinuous on $r\Ball_{\Banach}$ for all $r>0$.
\item[(d)] Suppose that all conditions of (c) hold true. If $\risk_n$ converges pointwise to $\risk$ when $n\to\infty$, then
    \[
    \lim_{n\to\infty}\sup_{f\in r\Ball_{\Banach}}
    \abs{\risk(f)-\risk_n(f)}=0,\quad\text{for all }r>0.
    \]
\item[(e)] Suppose that all conditions of (c) hold true. Let $\left(f_{\alpha}\right)\subseteq\Banach$ be any weakly* convergent bounded net to an element $f_0$ and $\left(\lambda_{\alpha}\right)\subseteq\RR_{+}$ be any net. If
    \[
    \lim_{\alpha}
    \frac{1}{\lambda_{\alpha}}
    \sup_{\xi^{\ast}\in\Funset_{\Data}}
    \abs{\langle f_0-f_{\alpha},\xi^{\ast}\rangle}
    =0,
    \]
    then
    \[
    \lim_{\alpha}\frac{1}{\lambda_{\alpha}}\sup_{n\in\NN}
    \abs{\risk_{n}\left(f_0\right)-\risk_{n}\left(f_{\alpha}\right)}=0.
    \]
\end{itemize}
\end{proposition}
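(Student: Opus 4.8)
The plan is to handle the five items in the stated order, treating (a) and (b) as immediate, identifying (c) as the real content, and deriving (d) and (e) from the estimate established in (c). For (a), if $h\in ker\left(\vxi_n^{\ast}\right)$ then $\langle f+h,\vxi_n^{\ast}\rangle=\langle f,\vxi_n^{\ast}\rangle$ coordinatewise, hence $\risk_n(f+h)=\loss_n\left(\vxi_n^{\ast},\vy_n,\langle f,\vxi_n^{\ast}\rangle\right)=\risk_n(f)$. For (b), the point is that the evaluation map $f\mapsto\langle f,\vxi_n^{\ast}\rangle$ is linear and, since each $\xi_{nk}^{\ast}\in\Banach_{\ast}$ and $\RR^{N_n}$ is finite dimensional, weak*-to-norm continuous; composing it with the lower semi-continuous, continuous, or convex function $\loss_n\left(\vxi_n^{\ast},\vy_n,\cdot\right)$ yields that $\risk_n$ is weakly* lower semi-continuous, weakly* continuous, or convex respectively (for convexity one also uses that a convex function composed with an affine map is convex).

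For (c), which I expect to be the main obstacle, I would first isolate the quantitative estimate that powers the rest of the proposition. Since $\Funset_{\Data}$ is relatively compact in $\Banach_{\ast}$ it is bounded, so $M:=\sup_{\xi^{\ast}\in\Funset_{\Data}}\norm{\xi^{\ast}}_{\ast}<\infty$. Fix $r>0$ and set $\theta:=rM$. If $f_0,f\in r\Ball_{\Banach}$, then $\langle f_0,\vxi_n^{\ast}\rangle,\langle f,\vxi_n^{\ast}\rangle\in[-\theta,\theta]^{N_n}$ for every $n$, because $\abs{\langle g,\xi_{nk}^{\ast}\rangle}\le r\norm{\xi_{nk}^{\ast}}_{\ast}\le\theta$ for $g\in r\Ball_{\Banach}$; hence the uniform local Lipschitz continuity of $\Loss$, with $C^{\theta}:=\sup_{n}C_n^{\theta}<\infty$, gives
\[
\abs{\risk_n(f_0)-\risk_n(f)}\le C^{\theta}\norm{\langle f_0-f,\vxi_n^{\ast}\rangle}_{\infty}\le C^{\theta}\sup_{\xi^{\ast}\in\Funset_{\Data}}\abs{\langle f_0-f,\xi^{\ast}\rangle},
\]
using $\left\{\xi_{n1}^{\ast},\ldots,\xi_{nN_n}^{\ast}\right\}\subseteq\Funset_{\Data}$. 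Taking the supremum over $n$ reduces the claim to showing $\sup_{\xi^{\ast}\in\Funset_{\Data}}\abs{\langle f_0-f_{\alpha},\xi^{\ast}\rangle}\to0$ along any weakly* convergent net $\left(f_{\alpha}\right)\subseteq r\Ball_{\Banach}$ with limit $f_0\in r\Ball_{\Banach}$. This is where Lemma \ref{Lm:CompactEquicontinuous} enters: the net $g_{\alpha}:=f_0-f_{\alpha}$ lies in $2r\Ball_{\Banach}$ and converges weakly* to $0\in 2r\Ball_{\Banach}$, and relative compactness of $\Funset_{\Data}$ makes $\Funset_{\Data}$ weakly* equicontinuous on $2r\Ball_{\Banach}$, so $\lim_{\alpha}\sup_{\xi^{\ast}\in\Funset_{\Data}}\abs{\langle g_{\alpha},\xi^{\ast}\rangle}=0$. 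The only delicate point is bookkeeping the radius (the factor $2$ for the net of differences) so that Lemma \ref{Lm:CompactEquicontinuous} is applied on a ball actually containing $g_{\alpha}$.

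For (d) I would invoke the classical fact that a pointwise convergent, equicontinuous sequence of functions on a compact space converges uniformly. By Banach--Alaoglu, $r\Ball_{\Banach}$ with the relative weak* topology is compact Hausdorff; by (c) the sequence $\left\{\risk_n\right\}$ is equicontinuous on it, and $\risk_n\to\risk$ pointwise. Given $\varepsilon>0$, equicontinuity provides for each $x\in r\Ball_{\Banach}$ a relatively weak* open neighbourhood $U_x$ with $\sup_n\abs{\risk_n(y)-\risk_n(x)}\le\varepsilon/3$ for $y\in U_x$; extract a finite subcover $U_{x_1},\ldots,U_{x_m}$, choose $N_0$ so that $\abs{\risk_n(x_i)-\risk(x_i)}<\varepsilon/3$ for $n\ge N_0$ and $i\le m$, and note $\abs{\risk(x_i)-\risk(y)}\le\varepsilon/3$ by passing to the limit in the equicontinuity bound. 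Splitting $\abs{\risk_n(y)-\risk(y)}$ through the nearest centre $x_i$ yields $\sup_{f\in r\Ball_{\Banach}}\abs{\risk(f)-\risk_n(f)}<\varepsilon$ for $n\ge N_0$. (That each $\risk_n$, and hence the limit $\risk$, is weakly* continuous on $r\Ball_{\Banach}$ follows from (b), since uniform local Lipschitz continuity of $\Loss$ forces each $\loss_n$ to be continuous.)

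Finally, (e) is the quantitative counterpart of (c). Since $\left(f_{\alpha}\right)$ is bounded, $\norm{f_{\alpha}}\le\rho$ for some $\rho>0$, and weak* lower semicontinuity of the norm gives $\norm{f_0}\le\rho$ as well; taking $\theta:=\rho M$, the Lipschitz estimate above yields $\abs{\risk_n(f_0)-\risk_n(f_{\alpha})}\le C^{\theta}\sup_{\xi^{\ast}\in\Funset_{\Data}}\abs{\langle f_0-f_{\alpha},\xi^{\ast}\rangle}$ for every $n$. Dividing by $\lambda_{\alpha}$, taking the supremum over $n$, and using the hypothesis $\frac{1}{\lambda_{\alpha}}\sup_{\xi^{\ast}\in\Funset_{\Data}}\abs{\langle f_0-f_{\alpha},\xi^{\ast}\rangle}\to0$ gives the conclusion. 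Overall the only genuine obstacle is (c), and within it the use of Lemma \ref{Lm:CompactEquicontinuous} to turn relative compactness of the generalized input data into uniform-in-$n$ control of the evaluations.
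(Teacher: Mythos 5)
Your proposal is correct and follows essentially the same route as the paper: the uniform local Lipschitz constant together with Lemma \ref{Lm:CompactEquicontinuous} (relative compactness of $\Funset_{\Data}$ in $\Banach_{\ast}$ equals weak* equicontinuity on balls) yields the key estimate in (c), and (d), (e) are derived from that estimate exactly as in the paper's proof. The only cosmetic differences are that in (c) you apply the equicontinuity to the difference net $f_0-f_{\alpha}$ at $0$ rather than directly at $f_0$, and in (d) you spell out the finite-subcover three-epsilon argument for ``equicontinuous plus pointwise convergent on a compact space implies uniform convergence'' where the paper simply cites the Arzel\'a--Ascoli theorem.
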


In the next section, we will explore the representer theorems, pseudo-approximation theorems, and convergence theorems of the approximate solutions.
Generally speaking, the convergence theorems ensure the approximation of regularized learning, and the representer theorems and pseudo-approximation theorems ensure the computation of regularized learning.

\subsection{Approximate Approximation}\label{sec:AppApp}

Let $\Nueral_m$ be a subset of $\Banach$ such that there exists a surjection $\Gamma_m$ from $\Omega_m$ onto $\Nueral_m$,
where $\Omega_m\subseteq\RR^m$ and $m\in\NN$.
For example, $\Nueral_m$ is a $m$ dimensional subspace, and $\Nueral_m$ is a collection of artificial neural networks with $m$ weights and biases despite the possibility of it not being a linear space.
Now, we consider an approximately approximate solution $f_{nm}^{\lambda}$ to minimize the regularized empirical risks over $\Nueral_m$, that is,
\begin{equation}\label{eq:RERM-App}
\risk_{n}\big(f_{nm}^{\lambda}\big)
+\lambda\regfun\big(\norm{f_{nm}^{\lambda}}\big)
=
\inf_{f\in\Nueral_m}
\risk_{n}(f)
+\lambda\regfun\left(\norm{f}\right).
\end{equation}
Let $\Sset_{n}^{\lambda}\left(\Nueral_m\right)$ be a collection of all minimizers of Optimization \eqref{eq:RERM-App}.
Thus, $f_{nm}^{\lambda}\in\Sset_{n}^{\lambda}\left(\Nueral_m\right)$.

According to Remark \ref{Rm:GenRepr}, if $\Banach_{\ast}$ is smooth, then the representer theorems ensure that there exists a $\Nueral_{N_n}$ such that $f_{n}^{\lambda}$ is equivalently solved by Optimization \eqref{eq:RERM-App}, such as support vector machines in RKHS and RKBS.
In numerous machine learning problems, the fast algorithms are developed by utilizing the specialized $\Nueral_m$.
In this case, $f_{n}^{\lambda}$ may not belong to any $\Nueral_m$.
Therefore, we introduce another condition of $\Nueral_m$ known as universal approximation.
Under the condition of universal approximation,
the pseudo-approximation theorems ensure that $f_{n}^{\lambda}$ is approximately solved by Optimization \eqref{eq:RERM-App}, such as artificial neural networks of using ReLU an Kan.

Moreover, we can solve a minimizer $\vw_m$ of the finite-dimensional optimization
\begin{equation}\label{eq:RERM-App-w}
\inf_{\vw\in\Omega_m}\loss_n\left(\vxi_n^{\ast},\vy_n,\langle\Gamma_m(\vw),\vxi_n^{\ast}\rangle\right)
+\lambda\regfun\left(\norm{\Gamma_m(\vw)}\right).
\end{equation}
This shows that $f_{nm}^{\lambda}=\Gamma_m\left(\vw_m\right)$.
According to the surjection from $\Omega_m$ onto $\Nueral_m$,
Optimization \eqref{eq:RERM-App-w} is an equivalent transformation of Optimization \eqref{eq:RERM-App}.

\subsection{Applications to Composite Algorithms}\label{sec:AppCompAlg}

In current researches of machine learning, numerous algorithms focus on addressing a singular problem.
The theory of regularized learning provides a straightforward approach to integrating multiple models and data into a unified system. Now, we will illustrate the concept with a generic example.

Let $\Banach^b,\Banach^w\subseteq\RR^{\Omega}$ be two Banach spaces and $\norm{\cdot}_b,\norm{\cdot}_w$ be their norms, respectively.
We assume that the exact solution $f^0\in\Banach^b\cap\Banach^w$ is solved by the both abstract models
\[
\inf_{f\in\Banach^b}\risk^b(f),\quad
\inf_{f\in\Banach^w}\risk^w(f),
\]
where $\risk^b:\Banach^b\to[0,\infty)$ and $\risk^w:\Banach^w\to[0,\infty)$ are two expected risk functions.
Moreover, $\risk^b,\risk^w$ are approximated by the empirical risk functions $\risk_n^b,\risk_n^w$, respectively. As shown in Equation \eqref{eq:EmpRiskFun}, $\risk_n^b,\risk_n^w$ are constructed by the training data $\big(\vxi^{b\ast}_{n},\vy_{n}^b\big),\big(\vxi^{w\ast}_{n},\vy_{n}^w\big)$ and multi-loss functions $\loss_n^b,\loss_n^w$, respectively.
Thus, $f^0$ is solved approximately by the both regaulairzed learning
\[
\inf_{f\in\Banach^b}\risk_n^b(f)+\lambda\norm{f}_b,\quad
\inf_{f\in\Banach^w}\risk_n^w(f)+\lambda\norm{f}_w.
\]

Let $\Banach:=\Banach^b\cap\Banach^w$ be a normed space endowed with the composite norm, that is,  $\norm{f}:=\norm{f}_{b}+\norm{f}_{w}$ for $f\in\Banach^b\cap\Banach^w$.
Obviously, $\Banach$ is a Banach space.
Since $\Banach^b\cap\Banach^w\cong\big(\Banach^b_{\ast}\big)^{\ast}\cap\big(\Banach^w_{\ast}\big)^{\ast}
\cong\big(\Banach^b_{\ast}+\Banach^w_{\ast}\big)^{\ast}$, we have
$\Banach^b_{\ast}+\Banach^w_{\ast}\cong\big(\Banach^b\cap\Banach^w\big)_{\ast}$.
Thus, $\Banach$ has a predual space $\Banach^b_{\ast}+\Banach^w_{\ast}$, and
$\xi^{\ast}\in\Banach_{\ast}$ if $\xi^{\ast}\in\Banach^b_{\ast}$ or $\xi^{\ast}\in\Banach^w_{\ast}$. This shows that the weak* convergence in $\sigma\left(\Banach,\Banach_{\ast}\right)$
implies the weak* convergence in $\sigma\big(\Banach^b,\Banach^b_{\ast}\big)$ and $\sigma\big(\Banach^w,\Banach^w_{\ast}\big)$.
We construct the composite models, that is,
\[
\risk:=\risk^b+\risk^w,\quad
\risk_n:=\risk_n^b+\risk_n^w.
\]
Thus, $f^0$ is a minimizer of the minimization of $\risk$ over $\Banach$, and $f^0$ is solved approximately by the composite regularized learning for minimizing $\risk_n+\lambda\norm{\cdot}$ over $\Banach$.
If $\risk_n^b,\risk_n^w$ both satisfy the conditions of the representer theorems, pseudo-approximation theorems, and convergence theorems in Section \ref{sec:ThmRegLearn},
then it is easy to check that $\risk_n$ satisfies the same conditions.
Therefore, the representer theorems, approximation theory, and convergence theorems hold true for the composite algorithms if they hold true for both individual algorithms.

Let $T^{b}_{n,\lambda}:=R_n^b+\lambda\norm{\cdot}_b$ and
$T^{w}_{n,\lambda}:=R_n^w+\lambda\norm{\cdot}_w$.
If $\risk_n^b$ and $\risk_n^w$ are weakly* lower semi-continuous and convex, then the minimizers of
the minimizations of $\risk_n^b+\lambda\norm{\cdot}_b$ and $\risk_n^w+\lambda\norm{\cdot}_w$ over $\Banach^b$ and $\Banach^w$ can be iteratively computed through the proximal gradient methods, respectively, that is,
\[
g_{k+1}\in\text{prox}_{\theta T^{b}_{n,\lambda}}\left(g_k\right)\text{ or }g_{k+1}\in\text{prox}_{\theta T^{w}_{n,\lambda}}\left(g_k\right),
~k=0,1,\ldots,
\]
where $\text{prox}$ represents a proximal operator and $\theta>0$ is a step size.
Under the same hypotheses, the minimizer of the minimization of $\risk_n+\lambda\norm{\cdot}$ over $\Banach$
can be iteratively computed through the Douglas-Rachford splitting methods, that is,
\[
h^b_k\in\text{prox}_{\theta T^{b}_{n,\lambda}}\big(g_k\big),~
h^w_k\in\text{prox}_{\theta T^{w}_{n,\lambda}}\big(2h^b_k-g_k\big),~
g_{k+1}:=g_k+\sigma\big(h^w_k-h^b_k\big),
\]
$k=0,1,\ldots$, where $\sigma>0$ is a relaxation parameter.
This shows that the composite algorithm can be formulated as an alternating iterative system that integrates the both algorithms.
In our forthcoming research, we intend to develop composite algorithms that integrate data-driven and model-driven methods.

\section{Theorems of Regularized Learning}\label{sec:ThmRegLearn}

In the following theorems, the conditions of $\left(\vxi^{\ast}_{n},\vy_{n}\right)$ and $\loss_n$ are consistent with Proposition \ref{Pro:EmpRisk}. We will explore the characteristics of $f^0$, $f_{n}^{\lambda}$, and $f_{nm}^{\lambda}$,
including their existence, representation, and convergence.

\subsection{Representer Theorems}\label{sec:RepThm}

We improve the generalized representer theorems in Banach spaces in \cite{HuangLiuTanYe2020}
using alternative techniques from the unifying representer theorems for convex loss in \cite{Unser2020}.

\begin{lemma}\label{Lm:GenRepr}
If $\loss_n$ is a lower semi-continuous multi-loss function, then $\Sset_{n}^{\lambda}\left(\Banach\right)$ is nonempty and weakly* compact.
\end{lemma}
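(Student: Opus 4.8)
The plan is to run the direct method of the calculus of variations, but with the weak* topology in place of a metric topology. Write
$T_{n,\lambda}(f):=\risk_n(f)+\lambda\regfun(\norm{f})$
for the objective functional of Optimization \eqref{eq:RERM}, so that $\Sset_{n}^{\lambda}(\Banach)$ is precisely the set of minimizers of $T_{n,\lambda}$ over $\Banach$. I will show that $T_{n,\lambda}$ is proper, coercive, and weakly* lower semi-continuous, and then extract the minimizer set from a single weak*-compact ball.

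First I would record three properties of $T_{n,\lambda}$. (i) \emph{Properness and real infimum:} $T_{n,\lambda}(0)=\risk_n(0)+\lambda\regfun(0)<\infty$ because $\loss_n$ takes values in $[0,\infty)$, and $T_{n,\lambda}\ge 0$; hence $m:=\inf_{f\in\Banach}T_{n,\lambda}(f)$ is a finite real number. (ii) \emph{Coercivity, hence bounded sublevel sets:} since $\regfun$ is (strictly) increasing with $\regfun(r)\to\infty$, for any $c>m$ one can pick $r>0$ with $\lambda\regfun(r)>c$; then $\norm{f}>r$ forces $T_{n,\lambda}(f)\ge\lambda\regfun(\norm{f})>c$, so the sublevel set $\{f\in\Banach:T_{n,\lambda}(f)\le c\}$ is contained in $r\Ball_{\Banach}$. (iii) \emph{Weak* lower semicontinuity:} $\risk_n$ is weakly* lower semi-continuous by Proposition \ref{Pro:EmpRisk}(b) (as $\loss_n$ is lower semi-continuous), the norm $\norm{\cdot}$ is weakly* lower semi-continuous, and post-composing the norm with the continuous nondecreasing $\regfun$ preserves lower semicontinuity; a finite sum of weakly* lower semi-continuous functions is weakly* lower semi-continuous. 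By the remark following the definition in Section \ref{sec:NotPre}, $T_{n,\lambda}$ is then weakly* lower semi-continuous on $r\Ball_{\Banach}$ for every $r>0$.

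Next I would invoke the Banach–Alaoglu theorem: for the $r$ chosen above, $r\Ball_{\Banach}$ is a weak*-compact Hausdorff space, and $T_{n,\lambda}$ restricted to it is a weakly* lower semi-continuous real function, hence attains its infimum there, say at some $f_{n}^{\lambda}\in r\Ball_{\Banach}$. Because $\{f:T_{n,\lambda}(f)\le c\}\subseteq r\Ball_{\Banach}$ with $m<c$, every near-minimizer of $T_{n,\lambda}$ over $\Banach$ already lies in $r\Ball_{\Banach}$, so $\inf_{r\Ball_{\Banach}}T_{n,\lambda}=m$; thus $f_{n}^{\lambda}$ is a global minimizer and $\Sset_{n}^{\lambda}(\Banach)\neq\emptyset$. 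Finally, $\Sset_{n}^{\lambda}(\Banach)=\{f\in\Banach:T_{n,\lambda}(f)=m\}=\{f\in\Banach:T_{n,\lambda}(f)\le m\}$, which is again contained in $r\Ball_{\Banach}$; inside the compact Hausdorff space $r\Ball_{\Banach}$ it is the $m$-sublevel set of a lower semi-continuous function, hence weak*-closed, hence weak*-compact.

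The proof is essentially bookkeeping once these ingredients are aligned. The only points that need care are (a) confirming that the paper's net-based notion of weak* lower semicontinuity transfers to the restricted problem on a ball, which is exactly what the remark after the definition in Section \ref{sec:NotPre} supplies, and (b) the coercivity step that forces the infimum over a fixed ball to coincide with the infimum over all of $\Banach$, so that minimizing over the compact ball genuinely solves Optimization \eqref{eq:RERM}. Note that no convexity of $\loss_n$ is used here; only lower semicontinuity and the structural hypotheses on $\regfun$ enter.
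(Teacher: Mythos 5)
Your proposal is correct and follows essentially the same route as the paper: coercivity of $\regfun$ to confine the problem to a ball, Banach--Alaoglu for weak* compactness of that ball, weak* lower semicontinuity of $\risk_n+\lambda\regfun(\norm{\cdot})$ via Proposition \ref{Pro:EmpRisk}(b) and the norm, and a Weierstrass argument for existence. The only cosmetic difference is in the compactness step, where you identify $\Sset_{n}^{\lambda}(\Banach)$ with the $m$-sublevel set of the objective inside the compact ball, while the paper bounds the minimizers by $\regfun^{-1}\left(\risk_n(0)/\lambda+\regfun(0)\right)$ and verifies weak* closedness by a net argument --- the same idea in different packaging.
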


Specifically, if $\loss_n$ is a convex multi-loss function, which implies that $\risk_n$ is convex by Proposition \ref{Pro:EmpRisk} (b), then $\Sset_{n}^{\lambda}\left(\Banach\right)$ is also convex.
Moreover, if $\loss_n$ is a convex multi-loss function and $\regfun$ is strictly convex, then $\Sset_{n}^{\lambda}\left(\Banach\right)$ has at most one element.

\begin{theorem}\label{Thm:GenRepr}
If $\loss_n$ is a lower semi-continuous multi-loss function,
then for any $f_{n}^{\lambda}\in\Sset_{n}^{\lambda}\left(\Banach\right)$,
there exists a parameter $\vc_n\in\RR^{N_n}$ such that
\[
\text{(i) }\vc_n\cdot\vxi_n^{\ast}\in\partial\norm{\cdot}\big(f_{n}^{\lambda}\big),\quad
\text{(ii) }\vc_n\cdot\langle f_{n}^{\lambda},\vxi_n^{\ast}\rangle=\norm{f_{n}^{\lambda}}.
\]
\end{theorem}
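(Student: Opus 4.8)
The plan is to follow the ``peeling'' technique of \cite{Unser2020,BoyerChambolleCastroETC2019}: since $\risk_n$ depends on $f$ only through the finite vector $\langle f,\vxi_n^{\ast}\rangle\in\RR^{N_n}$, I would freeze these data values and reduce the problem to a pure norm minimization over an affine slice, on which the (possibly nonconvex) data term disappears and only the convex regularizer matters. Fix $f_n^{\lambda}\in\Sset_n^{\lambda}(\Banach)$ (which exists by Lemma \ref{Lm:GenRepr}), put $\vt^{0}:=\langle f_n^{\lambda},\vxi_n^{\ast}\rangle$, and set
\[
\Aset:=\left\{f\in\Banach:\langle f,\vxi_n^{\ast}\rangle=\vt^{0}\right\}=f_n^{\lambda}+ker\left(\vxi_n^{\ast}\right),
\]
a nonempty closed affine subspace because $ker\left(\vxi_n^{\ast}\right)$ is a closed linear subspace of $\Banach$. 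By Proposition \ref{Pro:EmpRisk} (a), $\risk_n$ equals the constant $\loss_n\left(\vxi_n^{\ast},\vy_n,\vt^{0}\right)$ on $\Aset$, so the optimality of $f_n^{\lambda}$ in \eqref{eq:RERM} yields $\regfun\left(\norm{f_n^{\lambda}}\right)\leq\regfun\left(\norm{g}\right)$ for every $g\in\Aset$; since $\regfun$ is strictly increasing, $f_n^{\lambda}$ minimizes $\norm{\cdot}$ over $\Aset$, i.e.\ $f_n^{\lambda}$ is a global minimizer of the convex function $\norm{\cdot}+\chi_{\Aset}$ on $\Banach$.

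Next I would invoke Fermat's rule together with the Moreau--Rockafellar sum rule. Since $\norm{\cdot}$ is finite and continuous everywhere, the sum rule applies at $f_n^{\lambda}$, giving
\[
0\in\partial\left(\norm{\cdot}+\chi_{\Aset}\right)\left(f_n^{\lambda}\right)=\partial\norm{\cdot}\left(f_n^{\lambda}\right)+\partial\chi_{\Aset}\left(f_n^{\lambda}\right),
\]
so there are $\xi^{\ast}\in\partial\norm{\cdot}\left(f_n^{\lambda}\right)$ and $\eta^{\ast}\in\partial\chi_{\Aset}\left(f_n^{\lambda}\right)$ with $\xi^{\ast}=-\eta^{\ast}$. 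Combining \eqref{eq:NormalConeAffine} with \eqref{eq:KerLinSpan} (using $\Aset-f_n^{\lambda}=ker\left(\vxi_n^{\ast}\right)$),
\[
-\partial\chi_{\Aset}\left(f_n^{\lambda}\right)=\left(\Aset-f_n^{\lambda}\right)^{\bot}=ker\left(\vxi_n^{\ast}\right)^{\bot}=\Span\left\{\xi_{n1}^{\ast},\ldots,\xi_{nN_n}^{\ast}\right\},
\]
hence $\xi^{\ast}=\vc_n\cdot\vxi_n^{\ast}$ for some $\vc_n\in\RR^{N_n}$, and this membership $\vc_n\cdot\vxi_n^{\ast}\in\partial\norm{\cdot}\left(f_n^{\lambda}\right)$ is exactly (i). For (ii), if $f_n^{\lambda}\neq0$ the characterization \eqref{eq:CharSubdiffNorm} gives $\norm{f_n^{\lambda}}=\langle f_n^{\lambda},\vc_n\cdot\vxi_n^{\ast}\rangle=\vc_n\cdot\langle f_n^{\lambda},\vxi_n^{\ast}\rangle$, and if $f_n^{\lambda}=0$ both sides of (ii) vanish; thus (ii) follows from (i) in all cases.

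I expect the main obstacle to be not the convex calculus (which is routine here) but the \emph{reduction step}: one must observe that $\loss_n$ and $\regfun$ need not be convex, so $\risk_n+\lambda\regfun(\norm{\cdot})$ cannot be differentiated directly, and the whole argument works only after restricting to the affine slice $\Aset$, where the nonconvex data term becomes a constant and $\regfun$ is discarded by strict monotonicity. A secondary point is to check the hypotheses of the sum rule and of \eqref{eq:NormalConeAffine}: these hold because $\norm{\cdot}$ is continuous on all of $\Banach$ and $\Aset$ is a closed affine subspace whose direction space $ker\left(\vxi_n^{\ast}\right)$ is closed, so $ker\left(\vxi_n^{\ast}\right)^{\bot}$ is the finite-dimensional span in \eqref{eq:KerLinSpan}; the degenerate case $f_n^{\lambda}=0$ (where $\vc_n=0$ works) is already covered by the same chain of reasoning.
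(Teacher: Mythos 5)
Your proposal is correct and follows essentially the same route as the paper: reduce to minimizing $\norm{\cdot}$ over the affine slice $f_n^{\lambda}+ker\left(\vxi_n^{\ast}\right)$ via Proposition \ref{Pro:EmpRisk} (a) and the strict monotonicity of $\regfun$ (the paper's Lemma \ref{Lm:GenRepr-AS}), then identify $-\partial\chi_{\Aset}\left(f_n^{\lambda}\right)$ with $\Span\left\{\xi_{n1}^{\ast},\ldots,\xi_{nN_n}^{\ast}\right\}$ through Equations \eqref{eq:NormalConeAffine} and \eqref{eq:KerLinSpan}, and get (ii) from \eqref{eq:CharSubdiffNorm}. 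The only cosmetic difference is that you invoke Fermat's rule plus the Moreau--Rockafellar sum rule where the paper cites the Pshenichnyi--Rockafellar theorem, which is the same convex-calculus fact.
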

The ``$\cdot$'' represents the dot product, such as $\vc\cdot\vxi^{\ast}=\sum_{k=1}^Nc_k\xi_k^{\ast}$,
where $\vc:=\left(c_1,c_2,\cdots,c_N\right)$ and $\vxi^{\ast}:=\left(\xi_1^{\ast},\xi_2^{\ast},\cdots,\xi_N^{\ast}\right)$.
\begin{remark}\label{Rm:GenRepr}
The parameter $\vc_{n}$ in Theorem \ref{Thm:GenRepr}
is dependent on the different minimizer $f_{n}^{\lambda}$.
If $\Banach_{\ast}$ is smooth, then \cite[Proposition 5.4.2]{Megginson1998} ensures that $\Sset_{n}^{\lambda}\left(\Banach\right)=\Sset_{n}^{\lambda}\left(\Nueral_{N_n}\right)$, where $\Nueral_{N_n}$ is a collection of all $f\in\Banach$ such that $\vc\cdot\vxi_n^{\ast}\in\partial\norm{\cdot}\left(f\right)$
for a $\vc\in\RR^{N_n}$.
Specifically, if $\Banach$ is a Hilbert space, then $f_{n}^{\lambda}\cong\norm{f_{n}^{\lambda}}\vc_n\cdot\vxi_n^{\ast}$.
If $f_{n}^{\lambda}\neq0$,
then the characterizing subdifferential of norm in Equation \eqref{eq:CharSubdiffNorm} shows that
$f_{n}^{\lambda}/\norm{f_{n}^{\lambda}}\in\partial\norm{\cdot}_{\ast}\left(\vc_n\cdot\vxi_n^{\ast}\right)$.
Let $\Eset_n(\va):=\left\{\langle f,\vxi_n^{\ast}\rangle:f\in\partial\norm{\cdot}_{\ast}\left(\va\cdot\vxi_n^{\ast}\right)\right\}$ for any $\va\in\RR^{N_n}$.
Thus, $f_{n}^{\lambda}$ can be solved as follows.
\begin{itemize}
\item[(i)]
Solving $r_n,\vv_n,\va_n$ of the finite-dimensional optimization
\[
\inf_{r\in\RR,\vv,\va\in\RR^{N_n}}\loss_n\left(\vxi_{n}^{\ast},\vy_{n},r\vv\right)+\lambda\regfun\left(r\right)
\text{ subject to }r\geq0,\vv\in\Eset_n(\va).
\]
\item[(ii)] Choosing $g_n\in\partial\norm{\cdot}_{\ast}\left(\va_n\cdot\vxi_n^{\ast}\right)$ subject to $\langle g_{n},\vxi_n^{\ast}\rangle=\vv_n$.
\item[(iii)] Taking $f_{n}^{\lambda}:=r_ng_n$.
\end{itemize}
Since $\vc_n=\va_n/\norm{\va_n\cdot\vxi_{n}^{\ast}}_{\ast}$ and
$r_n=\vc_n\cdot\langle f_{n}^{\lambda},\vxi_n^{\ast}\rangle$,
we also have $\norm{\va_n\cdot\vxi_{n}^{\ast}}_{\ast}=\va_n\cdot\vv_{n}$.
If $\loss_n$ is a linear combination of convex loss functions as shown in Equation \eqref{eq:lossfun-exa} and $\Banach$ is a 1-norm RKBS, the proximity algorithm can be utilized to solve $\vc_n$ and $f_{n}^{\lambda}$ by \cite[Theorem 64]{WangXu2021}.
In our forthcoming papers, we will study various numerical algorithms for specialized loss functions and Banach spaces, such as alternating direction methods of multipliers and composite optimization methods.
\end{remark}

If $\loss_n$ is a convex multi-loss function, then
the Krein-Milman theorem ensures that $\Sset_{n}^{\lambda}\left(\Banach\right)$ is a closed convex hull of its extreme points.
If $\Banach$ is strictly convex, then for any $f_{n}^{\lambda}\neq0$, the element $f_{n}^{\lambda}/\norm{f_{n}^{\lambda}}$ is an extreme point of $\Ball_{\Banach}$.
Next, we will verify that there exists a $\tilde{f}_{n}^{\lambda}$, which is
a linear combination of finitely many extreme points of $\Ball_{\Banach}$.
\begin{corollary}\label{Cor:GenReprExtPoint}
If $\loss_n$ is a lower semi-continuous multi-loss function,
then for any $f_{n}^{\lambda}\in\Sset_{n}^{\lambda}\left(\Banach\right)$,
there exists a $\tilde{f}_{n}^{\lambda}\in\Sset_{n}^{\lambda}\left(\Banach\right)$ such that
\[
\text{(i) }\tilde{f}_{n}^{\lambda}=\vb_n\cdot\ve_n,\quad
\text{(ii) }\langle \tilde{f}_{n}^{\lambda},\vxi_n^{\ast}\rangle=\langle f_{n}^{\lambda},\vxi_n^{\ast}\rangle,
\]
where $M_n\leq N_n$,
the parameter $\vb_n\in\RR^{M_n}$, and
$\ve_n:=\left(e_{n1},e_{n2},\cdots,e_{nM_n}\right)$ composes of
extreme points $e_{n1},e_{n2},\ldots,e_{nM_n}$ of $\Ball_{\Banach}$.
\end{corollary}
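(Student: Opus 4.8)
The plan is to start from the minimizer $f_n^\lambda\in\Sset_n^\lambda(\Banach)$ and the parameter $\vc_n\in\RR^{N_n}$ furnished by Theorem \ref{Thm:GenRepr}, and to reduce the problem to a finite-dimensional Minkowski-type decomposition. The key observation is that property (ii) of Theorem \ref{Thm:GenRepr} forces $f_n^\lambda$ to lie on the sphere $\norm{f_n^\lambda}\Sphere_\Banach$ whenever $f_n^\lambda\neq0$ (the case $f_n^\lambda=0$ being trivial, taking $\tilde f_n^\lambda=0$, $M_n=0$), so it suffices to work with $u:=f_n^\lambda/\norm{f_n^\lambda}\in\Sphere_\Banach$ and the face of $\Ball_\Banach$ cut out by the functional $\vc_n\cdot\vxi_n^\ast$. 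First I would introduce the linear map $\Lambda:\Banach\to\RR^{N_n}$, $\Lambda g:=\langle g,\vxi_n^\ast\rangle$, and consider the convex weak*-compact set
\[
C:=\left\{g\in\Ball_\Banach:\langle g,\vc_n\cdot\vxi_n^\ast\rangle=\norm{f_n^\lambda}^{-1}\langle f_n^\lambda,\vc_n\cdot\vxi_n^\ast\rangle=1\right\},
\]
which is a weak*-closed face of $\Ball_\Banach$ (nonempty since $u\in C$ by Theorem \ref{Thm:GenRepr}(i)--(ii)); every extreme point of $C$ is an extreme point of $\Ball_\Banach$ by the standard face lemma.

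Next I would slice $C$ by the affine constraint $\Lambda g=\Lambda u=\norm{f_n^\lambda}^{-1}\langle f_n^\lambda,\vxi_n^\ast\rangle$. Set $C_u:=C\cap\Lambda^{-1}(\Lambda u)$, a nonempty convex weak*-compact subset of $\Ball_\Banach$ containing $u$. By the Krein--Milman theorem $C_u$ has an extreme point; but more is needed — I want an extreme point of $C_u$ that is also extreme in $\Ball_\Banach$. This is exactly the situation handled by the Dubins/Klee-type refinement used in \cite{BoyerChambolleCastroETC2019,Unser2020}: an extreme point of the intersection of a compact convex set $K$ with an affine subspace of codimension $d$ is a convex combination of at most $d+1$ extreme points of $K$. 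Applying this with $K=C$ (whose extreme points are extreme in $\Ball_\Banach$) and the codimension-$\le N_n$ slice coming from $\Lambda$, one extreme point $\hat g$ of $C_u$ can be written $\hat g=\sum_{i=1}^{M_n}b_{ni}e_{ni}$ with $M_n\le N_n$, $e_{ni}$ extreme in $\Ball_\Banach$, $b_{ni}\ge0$, $\sum b_{ni}=1$; here I should be careful to count the codimension correctly — the face constraint $\langle\cdot,\vc_n\cdot\vxi_n^\ast\rangle=1$ is one of the $N_n$ coordinates of $\Lambda$ up to the linear relation given by $\vc_n$, so the effective codimension of the slice inside $C$ is at most $N_n-1$, which together with the Carathéodory-type count gives at most $N_n$ extreme points — matching the claimed bound $M_n\le N_n$.

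Then I would define $\tilde f_n^\lambda:=\norm{f_n^\lambda}\,\hat g=\vb_n\cdot\ve_n$ with $\vb_n:=\norm{f_n^\lambda}(b_{n1},\dots,b_{nM_n})$, which gives (i). For (ii), $\hat g\in\Lambda^{-1}(\Lambda u)$ means $\langle\hat g,\vxi_n^\ast\rangle=\langle u,\vxi_n^\ast\rangle=\norm{f_n^\lambda}^{-1}\langle f_n^\lambda,\vxi_n^\ast\rangle$, hence $\langle\tilde f_n^\lambda,\vxi_n^\ast\rangle=\langle f_n^\lambda,\vxi_n^\ast\rangle$. It remains to verify $\tilde f_n^\lambda\in\Sset_n^\lambda(\Banach)$: by (ii) and \eqref{eq:EmpRiskFun} we get $\risk_n(\tilde f_n^\lambda)=\risk_n(f_n^\lambda)$, while $\hat g\in\Ball_\Banach$ gives $\norm{\tilde f_n^\lambda}\le\norm{f_n^\lambda}$, so $\regfun(\norm{\tilde f_n^\lambda})\le\regfun(\norm{f_n^\lambda})$ since $\regfun$ is increasing; therefore the regularized empirical risk at $\tilde f_n^\lambda$ is $\le$ that at $f_n^\lambda$, which is the infimum, so $\tilde f_n^\lambda$ is also a minimizer (and incidentally $\norm{\tilde f_n^\lambda}=\norm{f_n^\lambda}$, i.e. $\hat g\in\Sphere_\Banach$, consistent with $\hat g$ being a convex combination of sphere points lying on a common face).

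The main obstacle I anticipate is the dimension bookkeeping in the extreme-point argument: one must invoke the correct finite-dimensional lemma (extreme points of $K\cap(\text{affine codim }d)$ are convex combinations of $\le d+1$ extreme points of $K$ — a theorem of Dubins, also in \cite{Unser2020}) and check that its hypotheses (compactness and convexity of $C$ in the relative weak* topology, which hold by Banach--Alaoglu) are met, and then reconcile the count so that the bound comes out as $M_n\le N_n$ rather than $N_n+1$ — the saving of one comes from building the face into $K$ rather than the slice. A secondary point to handle cleanly is the $f_n^\lambda=0$ case and, if $\regfun$ is not assumed strictly increasing at $0$, the possibility that $\norm{f_n^\lambda}=0$ while $\risk_n$ is minimized; but under the stated hypothesis that $\regfun$ is strictly increasing this is immediate.
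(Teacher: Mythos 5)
Your proposal is correct, and it follows the same overall strategy as the paper: decompose an extreme point of a finite-codimensional affine slice of the unit ball via the Dubins theorem, rescale by $\norm{f_n^{\lambda}}$, and conclude membership in $\Sset_{n}^{\lambda}\left(\Banach\right)$ from the fact that equal data values give equal empirical risk while the norm does not increase (this last step is exactly the content of the paper's Lemma \ref{Lm:GenRepr-AS}, which you re-derive inline rather than cite). The execution differs at the counting step in a way worth noting. The paper's proof never uses the dual certificate $\vc_n$: it sets $\vt:=\langle f_n^{\lambda},\vxi_n^{\ast}\rangle$, $\vv:=\vt/\norm{f_n^{\lambda}}$, identifies $\Iset(\vv)=\Ball_{\Banach}\cap\Aset(\vv)$ via Lemma \ref{Lm:GenRepr-AS}, applies \cite[Main Theorem]{Dubins1962} with $K=\Ball_{\Banach}$ and the flat $\Aset(\vv)$ of codimension at most $N_n$ to get a convex combination of at most $N_n+1$ extreme points, and then passes in one line to ``a linear combination of at most $N_n$ extreme points.'' You instead invoke Theorem \ref{Thm:GenRepr} to obtain the norming functional $\vc_n\cdot\vxi_n^{\ast}$, restrict to the exposed face $C$ it cuts out of $\Ball_{\Banach}$ (observing that the slice $\left\{g:\langle g,\vxi_n^{\ast}\rangle=\langle f_n^{\lambda},\vxi_n^{\ast}\rangle/\norm{f_n^{\lambda}}\right\}$ is contained in the hyperplane $\left\{\langle\cdot,\vc_n\cdot\vxi_n^{\ast}\rangle=1\right\}$, so its codimension relative to that hyperplane is at most $N_n-1$), and apply Dubins with $K=C$; since extreme points of a face are extreme points of the ball, the bound $M_n\leq N_n$ falls out directly. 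This face-first bookkeeping is precisely the justification that the paper's terse reduction from $N_n+1$ to $N_n$ leaves implicit, so your route buys an explicit count at the cost of invoking Theorem \ref{Thm:GenRepr}, whereas the paper stays entirely within the minimum-norm machinery of Lemma \ref{Lm:GenRepr-AS}. Your treatment of the trivial case $f_n^{\lambda}=0$, the weak* compactness checks (the data functionals lie in $\Banach_{\ast}$, so the face and the slice are weakly* closed), and the final verification that $\tilde{f}_n^{\lambda}$ is again a minimizer are all sound.
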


The generalized representer theorems ensure that the approximate solutions can be computed by the numerical algorithms driven from the equivalent finite-dimensional optimizations.
We can also verify that the generalized representer theorems encompass the classical representer theorems in RKHS and RKBS.
More detailed information regarding the representer theorems is mentioned in \cite{HuangLiuTanYe2020,SteinwartChristmann2008,Unser2020,UnserFageotWard2017,WangXu2021,XuYe2019,ZhangXuZhang2009}.

\subsection{Pseudo-approximation Theorems}\label{sec:AppThm}

In the beginning, we study the existence of $f_{nm}^{\lambda}$ of Optimization \eqref{eq:RERM-App}.

\begin{lemma}\label{Lm:App}
Suppose that $\Nueral_m$ is weakly* closed.
If $\loss_n$ is a lower semi-continuous multi-loss function, then $\Sset_{n}^{\lambda}\left(\Nueral_m\right)$
is nonempty and weakly* compact.
\end{lemma}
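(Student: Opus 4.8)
The plan is to mimic the direct method of the calculus of variations, exactly as in the proof of Lemma \ref{Lm:GenRepr}, but now with the extra constraint that the competitors lie in the weakly* closed set $\Nueral_m$. First I would show the infimum in Optimization \eqref{eq:RERM-App} is finite and choose a minimizing sequence $(g_k)\subseteq\Nueral_m$, i.e.\ with $\risk_n(g_k)+\lambda\regfun(\norm{g_k})$ decreasing to the infimum $\iota_m:=\inf_{f\in\Nueral_m}\risk_n(f)+\lambda\regfun(\norm{f})$. Since $\risk_n\geq0$, we get $\lambda\regfun(\norm{g_k})\leq\iota_m+1$ for large $k$; because $\regfun$ is strictly increasing with $\regfun(r)\to\infty$, this forces $\sup_k\norm{g_k}=:r<\infty$, so the minimizing sequence is norm-bounded and lives in $r\Ball_{\Banach}$.

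Next I would invoke the Banach--Alaoglu theorem: $r\Ball_{\Banach}$ is weakly* compact, hence by passing to a subnet (using the net/subnet machinery referenced after Lemma \ref{Lm:CompactEquicontinuous}) there is a subnet $(g_{k_\alpha})$ weakly* converging to some $f^\star\in r\Ball_{\Banach}$. The key point is that $\Nueral_m$ is assumed weakly* closed, so $f^\star\in\Nueral_m$, making it an admissible competitor. Now apply lower semicontinuity: by Proposition \ref{Pro:EmpRisk}(b), a lower semi-continuous multi-loss function $\loss_n$ makes $\risk_n$ weakly* lower semi-continuous; and the norm $\norm{\cdot}$ is weakly* lower semi-continuous on $r\Ball_{\Banach}$, so by monotonicity and continuity of $\regfun$ the map $f\mapsto\regfun(\norm{f})$ is weakly* lower semi-continuous on $r\Ball_{\Banach}$ as well. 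Hence
\[
\risk_n(f^\star)+\lambda\regfun(\norm{f^\star})
\leq\liminf_\alpha\left(\risk_n(g_{k_\alpha})+\lambda\regfun(\norm{g_{k_\alpha}})\right)=\iota_m,
\]
and since $f^\star\in\Nueral_m$ the reverse inequality is automatic, so $f^\star\in\Sset_n^\lambda(\Nueral_m)$, proving nonemptiness.

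For weak* compactness of $\Sset_n^\lambda(\Nueral_m)$: every minimizer satisfies $\lambda\regfun(\norm{f})\leq\iota_m$, so by the same coercivity argument $\Sset_n^\lambda(\Nueral_m)\subseteq r'\Ball_{\Banach}$ for some fixed $r'$, i.e.\ it is norm-bounded. It remains to show it is weakly* closed; then being a weakly* closed subset of the weakly* compact ball $r'\Ball_{\Banach}$, it is weakly* compact. Take a net in $\Sset_n^\lambda(\Nueral_m)$ weakly* converging to some $f$; then $f\in r'\Ball_{\Banach}$, and $f\in\Nueral_m$ since $\Nueral_m$ is weakly* closed, and by the lower semicontinuity estimate above $\risk_n(f)+\lambda\regfun(\norm{f})\leq\iota_m$, forcing equality, so $f\in\Sset_n^\lambda(\Nueral_m)$. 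I do not expect a serious obstacle here — the only subtlety worth care is that one must argue lower semicontinuity of $\regfun(\norm{\cdot})$ rather than of $\norm{\cdot}$ directly, and one should state the net (not sequence) version throughout since $\Banach$ need not be weak* metrizable; the weak* closedness hypothesis on $\Nueral_m$ is precisely what replaces the linearity of $\Banach$ used implicitly in Lemma \ref{Lm:GenRepr}.
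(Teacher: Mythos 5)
Your proposal is correct and takes essentially the same route as the paper: the paper's proof simply observes that weak* closedness of $\Nueral_m$ makes $\Nueral_m\cap r\Ball_{\Banach}$ weakly* compact (Banach--Alaoglu) and then repeats the proof of Lemma \ref{Lm:GenRepr} with a slight change, i.e.\ coercivity of $\lambda\regfun(\norm{\cdot})$, weak* lower semicontinuity of $\risk_n+\lambda\regfun(\norm{\cdot})$, and boundedness plus weak* closedness of the minimizer set. Your minimizing-sequence/subnet argument is just an explicit unfolding of that same Weierstrass-type argument, so there is nothing substantively different to flag.
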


\begin{remark}
If $\Nueral_m$ is not weakly* closed, then Optimization \eqref{eq:RERM-App} or \eqref{eq:RERM-App-w}
may not exist a minimizer.
Substituting $\Nueral_m$ into the weak* closure of $\Nueral_m$, we can reconstruct Optimization \eqref{eq:RERM-App}
for the existence of $f_{nm}^{\lambda}$.
Therefore, we can still solve Optimization \eqref{eq:RERM-App-w} approximately to obtain the estimators of $f_{nm}^{\lambda}$
through calculus of variations, even when Optimization \eqref{eq:RERM-App-w} does not exist a minimizer.
\end{remark}

We say that $\left\{\Nueral_m:m\in\NN\right\}$ satisfies \emph{universal approximation} in $\Banach$
if for any $f\in\Banach$ and any $\epsilon>0$, there exists a $g_{m_{\epsilon}}\in\Nueral_{m_{\epsilon}}$ such that $\norm{f-g_{m_{\epsilon}}}\leq\epsilon$.
The universal approximation also implies that for any $f\in\Banach$, there exists a sequence $\left(g_{m_k}\right)$ in an increasing order $\left(m_k\right)$ such that $g_{m_k}\in\Nueral_{m_k}$ for all $k\in\NN$ and $\norm{f-g_{m_k}}\to0$ when $k\to\infty$.
This shows that the universal approximation based on $\Nueral_m$ is the same as the classical universal approximation based on reproducing kernels in \cite{MicchelliXuZhang2006}.
Incidentally, $\Sset_{n}^{\lambda}\left(\Nueral_m\right)$ may have various elements.
For any $m\in\NN$,
we actually need a minimizer from $\Sset_{n}^{\lambda}\left(\Nueral_m\right)$ to approximate $f_{n}^{\lambda}$.
Thus, $f_{nm}^{\lambda}$ will be an \emph{arbitrarily} chosen element, when $m$ is given,
and the choices of $f_{nm}^{\lambda}$ will not affect the conclusions of the pseudo-approximation theorems.
Let $\big(f_{nm}^{\lambda}\big)$ be a net with a directed set $\NN$ in its usual order, that is, $m_1\succeq m_2$ if $m_1\geq m_2$.

\begin{theorem}\label{Thm:App}
Suppose that $\Nueral_m$ is weakly* closed for all $m\in\NN$.
Further suppose that $\cap_{m\in\NN}\Nueral_m$ is nonempty and
$\left\{\Nueral_m:m\in\NN\right\}$ satisfies universal approximation in $\Banach$.
If $\loss_n$ is a continuous multi-loss function, then there exists a weakly* convergent bounded subnet $\big(f_{nm_{\alpha}}^{\lambda}\big)$ of $\big(f_{nm}^{\lambda}\big)$ to an element $f_n^{\lambda}\in\Sset_{n}^{\lambda}\left(\Banach\right)$ such that
\[
\text{(i) }\risk_n\big(f_n^{\lambda}\big)+\lambda\regfun\big(\norm{f_n^{\lambda}}\big)
=\lim_{\alpha}\risk_n\big(f_{nm_{\alpha}}^{\lambda}\big)+\lambda\regfun\big(\norm{f_{nm_{\alpha}}^{\lambda}}\big),
\]
and
\[
\text{(ii) }\norm{f_n^{\lambda}}=\lim_{\alpha}\norm{f_{nm_{\alpha}}^{\lambda}}.
\]
\end{theorem}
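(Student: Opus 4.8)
The plan is to combine a uniform boundedness estimate, universal approximation, the Banach--Alaoglu theorem, and weak* lower semicontinuity. Throughout write $T(f):=\risk_n(f)+\lambda\regfun(\norm{f})$ for $f\in\Banach$ and $v:=\inf_{f\in\Banach}T(f)$. The steps are: (1) show the whole net $\left(f_{nm}^{\lambda}\right)$ lies in a fixed ball $r\Ball_{\Banach}$; (2) show the regularized empirical risks $T(f_{nm}^{\lambda})=\inf_{\Nueral_m}T$ satisfy $\liminf_m T(f_{nm}^{\lambda})=v$ and extract a subnet along which they converge to $v$; (3) extract a further weakly* convergent subnet and use weak* lower semicontinuity of $T$ to certify that its limit is a global minimizer. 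For step (1), fix $g^0\in\bigcap_{m\in\NN}\Nueral_m$, which exists by hypothesis. Since $f_{nm}^{\lambda}$ minimizes $T$ over $\Nueral_m\ni g^0$ and $\risk_n\geq0$, we get $\lambda\regfun(\norm{f_{nm}^{\lambda}})\leq T(f_{nm}^{\lambda})\leq T(g^0)=:C$ for every $m$. As $\regfun$ is continuous, strictly increasing and coercive, it is a homeomorphism of $[0,\infty)$ onto $[\regfun(0),\infty)$, so $\norm{f_{nm}^{\lambda}}\leq r:=\regfun^{-1}(C/\lambda)$ for all $m$.

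For step (2), the bound $\liminf_m T(f_{nm}^{\lambda})\geq v$ is immediate from $\Nueral_m\subseteq\Banach$. Conversely, given $\epsilon>0$ choose $f_\epsilon$ with $T(f_\epsilon)<v+\epsilon$; universal approximation provides $g_{m_k}\in\Nueral_{m_k}$ with $m_k\uparrow\infty$ and $\norm{f_\epsilon-g_{m_k}}\to0$. By Proposition \ref{Pro:EmpRisk} (b) (as $\loss_n$ is continuous) $\risk_n$ is weakly* continuous, hence norm continuous, and $\regfun(\norm{\cdot})$ is norm continuous, so $T(g_{m_k})\to T(f_\epsilon)<v+\epsilon$ and therefore $T(f_{nm_k}^{\lambda})=\inf_{\Nueral_{m_k}}T\leq T(g_{m_k})<v+\epsilon$ for all large $k$. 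Since $\{m_k\}$ is cofinal in $\NN$ (whose natural order indexes the net), this forces $\liminf_m T(f_{nm}^{\lambda})\leq v+\epsilon$, and letting $\epsilon\to0$ gives $\liminf_m T(f_{nm}^{\lambda})=v$. Hence $v$ is a cluster value of the real net $(T(f_{nm}^{\lambda}))_m$, so there is a subnet $(f_{nm_\beta}^{\lambda})$ with $T(f_{nm_\beta}^{\lambda})\to v$.

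For step (3), since $(f_{nm_\beta}^{\lambda})\subseteq r\Ball_{\Banach}$ and $r\Ball_{\Banach}$ is weakly* compact by Banach--Alaoglu, pass to a further subnet $(f_{nm_{\alpha}}^{\lambda})$ weakly* convergent to some $f_n^{\lambda}\in r\Ball_{\Banach}$; it is bounded and still has $T(f_{nm_{\alpha}}^{\lambda})\to v$. The functional $T$ is weakly* lower semicontinuous on $r\Ball_{\Banach}$: $\risk_n$ is weakly* continuous there by Proposition \ref{Pro:EmpRisk} (b), $\norm{\cdot}$ is weakly* lower semicontinuous, and $\regfun$ is continuous and increasing so $\regfun(\norm{\cdot})$ is weakly* lower semicontinuous, whence so is the sum. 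Thus $T(f_n^{\lambda})\leq\liminf_\alpha T(f_{nm_{\alpha}}^{\lambda})=v$, and since $T(f_n^{\lambda})\geq v$ we obtain $T(f_n^{\lambda})=v$, i.e.\ $f_n^{\lambda}\in\Sset_n^{\lambda}(\Banach)$; together with $T(f_{nm_{\alpha}}^{\lambda})\to v=T(f_n^{\lambda})$ this gives (i). For (ii), weak* continuity of $\risk_n$ on $r\Ball_{\Banach}$ gives $\risk_n(f_{nm_{\alpha}}^{\lambda})\to\risk_n(f_n^{\lambda})$; subtracting from (i) yields $\lambda\regfun(\norm{f_{nm_{\alpha}}^{\lambda}})\to\lambda\regfun(\norm{f_n^{\lambda}})$, and applying the continuous inverse $\regfun^{-1}$ after dividing by $\lambda$ gives $\norm{f_{nm_{\alpha}}^{\lambda}}\to\norm{f_n^{\lambda}}$.

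I expect the real difficulty to lie in step (2): because the sets $\Nueral_m$ are not assumed nested, $m\mapsto\inf_{\Nueral_m}T$ need not be monotone, so an \emph{arbitrary} weakly* convergent subnet of the bounded net could converge to a non-minimizer. One must first exploit universal approximation together with the cofinality of the approximating indices to peel off a subnet on which the regularized risks actually descend to the global infimum $v$, and only afterwards invoke weak* compactness and lower semicontinuity; the boundedness in step (1) is where the hypothesis $\bigcap_{m\in\NN}\Nueral_m\neq\emptyset$ is genuinely used.
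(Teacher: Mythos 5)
Your proof is correct and follows essentially the same route as the paper's: boundedness of $\left(f_{nm}^{\lambda}\right)$ from a point of $\bigcap_{m\in\NN}\Nueral_m$, universal approximation to drive the optimal values over $\Nueral_m$ to the global infimum, Banach--Alaoglu to extract a bounded weakly* convergent subnet, and then weak* lower semicontinuity of $\risk_n+\lambda\regfun\left(\norm{\cdot}\right)$ together with weak* continuity of $\risk_n$ and continuity/strict monotonicity of $\regfun$ to identify the limit as an element of $\Sset_{n}^{\lambda}\left(\Banach\right)$ and deduce (i) and (ii). The only variation is organizational: you approximate $\epsilon$-minimizers and first arrange $\risk_n+\lambda\regfun\left(\norm{\cdot}\right)\to\inf_{f\in\Banach}\left(\risk_n(f)+\lambda\regfun\left(\norm{f}\right)\right)$ along a subnet (thereby reproving existence of a minimizer and getting (i) before (ii)), whereas the paper fixes a global minimizer $g_0\in\Sset_{n}^{\lambda}\left(\Banach\right)$ supplied by Lemma \ref{Lm:GenRepr}, approximates it by $g_{m_k}\in\Nueral_{m_k}$, extracts the subnet along those indices, and obtains (ii) before (i).
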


\subsection{Convergence Theorems}\label{sec:ConThm}

Now, we will study convergence analysis of the approximate solutions.
The regularized learning relies on the pointwise convergence as shown in Equation \eqref{eq:PointConv}.
However, the pointwise convergence may not be sufficient to verify the convergence of the approximate solutions to the exact solutions.
Accordingly, the proof requires the inclusion of the additional conditions of $\Data$ and $\Loss$.
Indeed, Proposition \ref{Pro:EmpRisk} (c) provides the specific conditions of $\Data$ and $\Loss$.
In Theorems \ref{Thm:ConvThm}, \ref{Thm:ConvThm-subnet}, and \ref{Thm:ConvThm-lambda},
we therefore assume that
\begin{itemize}
\item[(I)] $\risk_n$ converges pointwise to $\risk$ when $n\to\infty$,
\item[(II)] $\Funset_{\Data}$ is relatively compact in $\Banach_{\ast}$ and $\Loss$ is uniformly local Lipschitz continuous.
\end{itemize}
Condition (I) is dependent on $\risk$, while Condition (II) is independent on $\risk$.
Roughly speaking, Condition (I) is regarded as the assumption of models,
while Condition (II) is regarded as the assumption of data.
In practical applications, even if $\risk$ is frequently unidentified or uncertain,
then Condition (I) can be checked by employing test functions,
and Condition (II) can be checked through the utilization of linear-functional data and loss functions.
According to Proposition \ref{Pro:EmpRisk} (c), Condition (II) implies that
\begin{itemize}
\item[(II$'$)] $\left\{\risk_n:n\in\NN\right\}$ is weakly* equicontinuous on $r\Ball_{\Banach}$ for all $r>0$.
\end{itemize}
By a similar argument in Remark \ref{Rm:Equicontin-data}, Condition (II$'$) can be interpreted as the intermediate condition between equicontinuity and weak* equicontinuity.
On substituting Condition (II) into (II$'$), the convergence theorems still hold true,
as explained in Remark \ref{Rm:Gamma-Conv}.

Since the local Lipschitz continuity of $\loss_n$ implies the lower semi-continuity of $\loss_n$,
Lemma \ref{Lm:GenRepr} ensures that $\Sset_{n}^{\lambda}\left(\Banach\right)$ is nonempty for all $n\in\NN$ and all $\lambda>0$.
Incidentally, $\Sset_{n}^{\lambda}\left(\Banach\right)$ may have various elements.
For any $n\in\NN$ and any $\lambda>0$,
we actually need a minimizer from $\Sset_{n}^{\lambda}\left(\Banach\right)$ to approximate $f^0$.
Specifically, when $n,\lambda$ are given, then $f_{n}^{\lambda}$ will be an \emph{arbitrarily} chosen element, and the choices of $f_{n}^{\lambda}$ will not affect the conclusions of the convergence theorems.
Let $\big(f_n^{\lambda}\big)$ be a net
with a directed set
$\NN\times\RR_{+}$ such that
$(n_1,\lambda_1)\succeq(n_2,\lambda_2)$ if $n_1\geq n_2$ and $\lambda_1\leq\lambda_2$.
Thus, $\lim_{(n,\lambda)}(n,\lambda)=(\infty,0)$.

\begin{theorem}\label{Thm:ConvThm}
Suppose that Conditions (I) and (II) hold true.
If $\Sset^{0}\left(\Banach\right)$ is nonempty, then there exists a weakly* convergent bounded subnet $\big(f_{n_{\alpha}}^{\lambda_{\alpha}}\big)$ of $\big(f_n^{\lambda}\big)$ to an element $f^0\in\Sset^0\left(\Banach\right)$ such that
\[
\text{(i) }\risk\big(f^0\big)=\lim_{\alpha}\risk\big(f_{n_{\alpha}}^{\lambda_{\alpha}}\big),\quad
\text{(ii) }\norm{f^0}=\lim_{\alpha}\norm{f_{n_{\alpha}}^{\lambda_{\alpha}}},
\]
and
\[
\text{(iii) }\norm{f^0}=\inf\left\{\norm{\tilde{f}^0}:
\tilde{f}^0\in\Sset^0\left(\Banach\right)\right\}.
\]
\end{theorem}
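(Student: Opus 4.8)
The plan is a Tikhonov–regularization argument driven by Proposition \ref{Pro:EmpRisk}. Write $\rho:=\inf\left\{\norm{\tilde{f}^0}:\tilde{f}^0\in\Sset^0(\Banach)\right\}$, which is finite since $\Sset^0(\Banach)$ is assumed nonempty, and recall that by Lemma \ref{Lm:GenRepr} (local Lipschitz continuity of $\loss_n$ implies lower semicontinuity) every $\Sset_n^{\lambda}(\Banach)$ is nonempty, so $(f_n^{\lambda})$ is a genuine net. Under Conditions (I) and (II), Proposition \ref{Pro:EmpRisk} (d) gives $\sup_{f\in r\Ball_{\Banach}}\abs{\risk(f)-\risk_n(f)}\to0$ for every $r>0$; and since each $\loss_n$ is continuous, Proposition \ref{Pro:EmpRisk} (b) makes each $\risk_n$ weakly* continuous on $r\Ball_{\Banach}$, so, being a uniform limit of weakly* continuous functions on each ball, $\risk$ itself is weakly* continuous on every $r\Ball_{\Banach}$. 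These are the only facts used.

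The crux is an a priori norm bound: for each fixed $\lambda>0$, $\limsup_{n\to\infty}\norm{f_n^{\lambda}}\le\rho$. Fix $\epsilon>0$ and choose $g_{\epsilon}\in\Sset^0(\Banach)$ with $\norm{g_{\epsilon}}\le\rho+\epsilon$. Testing the minimality of $f_n^{\lambda}$ against $g_{\epsilon}$ and using $\risk_n\ge0$ gives $\regfun(\norm{f_n^{\lambda}})\le\regfun(\norm{g_{\epsilon}})+\risk_n(g_{\epsilon})/\lambda$; since $(\risk_n(g_{\epsilon}))_n$ converges by (I) it is bounded, so $\norm{f_n^{\lambda}}\le r^{\ast}$ for a constant $r^{\ast}=r^{\ast}(\lambda,\epsilon)$ and every $n$. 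Feeding this back and writing $\varepsilon_n:=\sup_{f\in r^{\ast}\Ball_{\Banach}}\abs{\risk(f)-\risk_n(f)}$ and $\varepsilon_n':=\abs{\risk(g_{\epsilon})-\risk_n(g_{\epsilon})}$, both of which tend to $0$ by Proposition \ref{Pro:EmpRisk} (d) and Condition (I), we get $\risk_n(g_{\epsilon})-\risk_n(f_n^{\lambda})\le\varepsilon_n+\varepsilon_n'$ because $\risk(g_{\epsilon})=\inf_{f\in\Banach}\risk(f)\le\risk(f_n^{\lambda})$. Reinserting into minimality and dividing by $\lambda$ yields $\regfun(\norm{f_n^{\lambda}})\le\regfun(\norm{g_{\epsilon}})+(\varepsilon_n+\varepsilon_n')/\lambda$, hence $\limsup_n\regfun(\norm{f_n^{\lambda}})\le\regfun(\norm{g_{\epsilon}})$, and by continuity and strict monotonicity of $\regfun$, $\limsup_n\norm{f_n^{\lambda}}\le\norm{g_{\epsilon}}\le\rho+\epsilon$; letting $\epsilon\downarrow0$ proves the claim. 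This is the main obstacle: $\risk_n\ge0$ alone bounds $\norm{f_n^{\lambda}}$ only by a quantity that blows up as $\lambda\to0$, and one escapes this only by bootstrapping through the a priori bound and the uniform-on-balls convergence of Proposition \ref{Pro:EmpRisk} (d), which makes the numerator $\varepsilon_n+\varepsilon_n'$ vanish fast enough to survive division by $\lambda$.

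Next I build the subnet by hand. Using the crux bound, pick a strictly increasing sequence $(n_j)_{j\in\NN}$ with $n_j\ge j$ and $\norm{f_{n_j}^{1/j}}\le\rho+1/j$ for all $j$; then $j\mapsto(n_j,1/j)$ is monotone and cofinal in $\NN\times\RR_{+}$, so $(f_{n_j}^{1/j})_{j\in\NN}$ is a subnet of $(f_n^{\lambda})$ that is bounded by $\rho+1$ and satisfies $\limsup_j\norm{f_{n_j}^{1/j}}\le\rho$ (a subnet extracted purely by weak* compactness would only be bounded, not norm-controlled, which is why the $\lambda=1/j$ sections are used). Since $(\rho+1)\Ball_{\Banach}$ is weakly* compact by the Banach–Alaoglu theorem, pass to a weakly* convergent further subnet $(f_{n_{j_{\alpha}}}^{1/j_{\alpha}})$ with a limit $f^0\in(\rho+1)\Ball_{\Banach}$; it still satisfies $\limsup_{\alpha}\norm{f_{n_{j_{\alpha}}}^{1/j_{\alpha}}}\le\rho$, and $n_{j_{\alpha}}\to\infty$, $1/j_{\alpha}\to0$.

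It remains to identify $f^0$. Since $\risk$ is weakly* continuous on $(\rho+1)\Ball_{\Banach}$ and $f_{n_{j_{\alpha}}}^{1/j_{\alpha}}\to f^0$ weakly*, we have $\risk(f_{n_{j_{\alpha}}}^{1/j_{\alpha}})\to\risk(f^0)$, which is (i). To see $f^0\in\Sset^0(\Banach)$: for any $h\in\Banach$, minimality gives $\risk_{n_{j_{\alpha}}}(f_{n_{j_{\alpha}}}^{1/j_{\alpha}})\le\risk_{n_{j_{\alpha}}}(h)+(1/j_{\alpha})\regfun(\norm{h})$, whose right side tends to $\risk(h)$ by (I), while the left side differs from $\risk(f_{n_{j_{\alpha}}}^{1/j_{\alpha}})$ by at most $\sup_{(\rho+1)\Ball_{\Banach}}\abs{\risk-\risk_{n_{j_{\alpha}}}}\to0$ and hence tends to $\risk(f^0)$; thus $\risk(f^0)\le\risk(h)$ for every $h$, so $\risk(f^0)=\inf_{f\in\Banach}\risk(f)$, i.e.\ $f^0\in\Sset^0(\Banach)$. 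Finally, weak* lower semicontinuity of $\norm{\cdot}$ gives $\norm{f^0}\le\liminf_{\alpha}\norm{f_{n_{j_{\alpha}}}^{1/j_{\alpha}}}\le\limsup_{\alpha}\norm{f_{n_{j_{\alpha}}}^{1/j_{\alpha}}}\le\rho$, while $f^0\in\Sset^0(\Banach)$ forces $\norm{f^0}\ge\rho$; hence $\norm{f^0}=\rho$, which is (iii), and the squeeze collapses to $\norm{f^0}=\lim_{\alpha}\norm{f_{n_{j_{\alpha}}}^{1/j_{\alpha}}}$, which is (ii).
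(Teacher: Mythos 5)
Your proof is correct, but it takes a genuinely different route from the paper's. The paper argues in two stages: for fixed $\lambda$ it first lets $n\to\infty$ and shows that subnets of $\left(f_{n}^{\lambda}\right)_n$ converge weakly* to minimizers $f^{\lambda}$ of the limiting problem $\inf_{f\in\Banach}\risk(f)+\lambda\regfun\left(\norm{f}\right)$ (Lemma \ref{Lma:RERM}), then lets $\lambda\to0$ and uses the bound $\norm{f^{\lambda}}\leq\norm{\tilde{f}^0}$ (Lemma \ref{Lma:RTRM-bound}) together with the concave value function $\AffLambda$ (Lemma \ref{Lma:A}) to reach a minimum-norm element of $\Sset^0\left(\Banach\right)$ (Lemma \ref{Lma:RTRM-lambda}), and finally assembles a subnet of $\left(f_{n}^{\lambda}\right)$ by an $\epsilon$-diagonal argument (Lemmas \ref{Lma:ConvThm} and \ref{Lma:ConvThm-min}). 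You bypass the intermediate minimizers $f^{\lambda}$, the value function, and the auxiliary Lemmas \ref{Lma:net-conv} and \ref{Lma:inf-limit-ineq} entirely: your key new step is the asymptotic bound $\limsup_{n\to\infty}\norm{f_{n}^{\lambda}}\leq\inf\left\{\norm{\tilde{f}^0}:\tilde{f}^0\in\Sset^0\left(\Banach\right)\right\}$ for each fixed $\lambda$, obtained by a bootstrap (a crude a priori radius from $\risk_n\geq0$ and minimality, then the uniform-on-balls convergence of Proposition \ref{Pro:EmpRisk} (d) on that ball so the error term vanishes before dividing by the fixed $\lambda$); the diagonal choice $\lambda=1/j$, $n=n_j$ then yields a monotone cofinal, hence genuine, subnet with norm control, and Banach--Alaoglu, weak* continuity of $\risk$ on balls, and weak* lower semicontinuity of $\norm{\cdot}$ finish (i)--(iii), with (iii) coming directly from the squeeze $\norm{f^0}\leq\liminf\leq\limsup\leq\rho\leq\norm{f^0}$. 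Both arguments must build the subnet by hand, since a subnet extracted by compactness alone loses the norm control, and both rest on the same core ingredients (Proposition \ref{Pro:EmpRisk} (d), weak* compactness of balls, Lemma \ref{Lm:Phi}-type monotonicity of $\regfun$); what the paper's longer route buys is the intermediate convergence to the regularized limit problems $\Sset^{\lambda}\left(\Banach\right)$, which is of independent interest, while your single-pass argument is shorter and delivers the minimum-norm property without Lemma \ref{Lma:ConvThm-min}.
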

\begin{remark}
Lemma \ref{Lma:net-conv} ensures that
\[
\risk\big(f^0\big)
=
\lim_{\alpha}\risk_{n_{\alpha}}\big(f_{n_{\alpha}}^{\lambda_{\alpha}}\big).
\]
According to Lemma \ref{Lm:CompactEquicontinuous}, we have
\[
\lim_{\alpha}\sup_{\xi^{\ast}\in\Funset}
\abs{\langle f^0-f_{n_{\alpha}}^{\lambda_{\alpha}},\xi^{\ast}\rangle}=0,
\]
where $\Funset$ is any relatively compact set in $\Banach_{\ast}$.
The (iii) also shows that $f^0$ is a minimum-norm minimizer of Optimization \eqref{eq:TRM}.
This indicates that the ``best'' approximate solution could be obtained through regularized learning.
\end{remark}

Next, we study any weakly* convergent bounded subnet of $\big(f_n^{\lambda}\big)$.
\begin{theorem}\label{Thm:ConvThm-subnet}
Suppose that Conditions (I) and (II) hold true.
If
$\big(f_{n_{\alpha}}^{\lambda_{\alpha}}\big)$ is a weakly* convergent bounded subnet of $\big(f_n^{\lambda}\big)$ to an element $f^0$,
then $f^0\in\Sset^0\left(\Banach\right)$ such that
\[
\text{(i) }\risk\big(f^0\big)=\lim_{\alpha}\risk\big(f_{n_{\alpha}}^{\lambda_{\alpha}}\big).
\]
Moreover, if
\begin{equation}\label{eq:add-cond-weak-net}
\lim_{\alpha}
\frac{1}{\lambda_{\alpha}}
\sup_{\xi^{\ast}\in\Funset_{\Data}}
\abs{\langle f^0-f_{n_{\alpha}}^{\lambda_{\alpha}},\xi^{\ast}\rangle}
=0,
\end{equation}
then
\[
\text{(ii) }\norm{f^0}=\lim_{\alpha}\norm{f_{n_{\alpha}}^{\lambda_{\alpha}}}.
\]
\end{theorem}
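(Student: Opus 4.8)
The plan is to exploit the quasi-minimality of each $f_{n}^{\lambda}$ for Optimization \eqref{eq:RERM} together with Conditions (I), (II) and the weak* lower semi-continuity of $\risk$ and $\norm{\cdot}$. First I would fix an arbitrary $g\in\Sset^0(\Banach)$ and note the defining inequality $\risk_{n_\alpha}(f_{n_\alpha}^{\lambda_\alpha})+\lambda_\alpha\regfun(\norm{f_{n_\alpha}^{\lambda_\alpha}})\le\risk_{n_\alpha}(g)+\lambda_\alpha\regfun(\norm{g})$. Since the subnet is bounded, say contained in $r\Ball_{\Banach}$, Proposition \ref{Pro:EmpRisk}(d) (which applies by Condition (II)) gives $\sup_{f\in r\Ball_\Banach}\abs{\risk(f)-\risk_{n_\alpha}(f)}\to0$, and Condition (I) gives $\risk_{n_\alpha}(g)\to\risk(g)=\risk(f^0)=\inf_\Banach\risk$. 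Passing to the limit in the inequality, using $\lambda_\alpha\to0$ and the uniform approximation to replace $\risk_{n_\alpha}(f_{n_\alpha}^{\lambda_\alpha})$ by $\risk(f_{n_\alpha}^{\lambda_\alpha})$, yields $\limsup_\alpha\risk(f_{n_\alpha}^{\lambda_\alpha})\le\risk(f^0)$. Combined with weak* lower semi-continuity of $\risk$ on $r\Ball_\Banach$ (Proposition \ref{Pro:EmpRisk}(b), since $\loss_n$ lower semi-continuous), which gives $\risk(f^0)\le\liminf_\alpha\risk(f_{n_\alpha}^{\lambda_\alpha})$, we get both $\risk(f^0)=\inf_\Banach\risk$, i.e. $f^0\in\Sset^0(\Banach)$, and conclusion (i).

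For conclusion (ii) under the extra hypothesis \eqref{eq:add-cond-weak-net}, the idea is to upgrade $\liminf_\alpha\norm{f_{n_\alpha}^{\lambda_\alpha}}\ge\norm{f^0}$ (immediate from weak* lower semi-continuity of the norm) to the matching $\limsup$. I would divide the regularized-learning inequality by $\lambda_\alpha$ after subtracting $\risk_{n_\alpha}(f^0)$:
\[
\regfun\big(\norm{f_{n_\alpha}^{\lambda_\alpha}}\big)-\regfun\big(\norm{f^0}\big)
\le\frac{1}{\lambda_\alpha}\Big(\risk_{n_\alpha}(f^0)-\risk_{n_\alpha}(f_{n_\alpha}^{\lambda_\alpha})\Big).
\]
Now \eqref{eq:add-cond-weak-net} is precisely the hypothesis of Proposition \ref{Pro:EmpRisk}(e) (with $f_0=f^0$, the net $f_{n_\alpha}^{\lambda_\alpha}$, and parameters $\lambda_\alpha$), so $\frac{1}{\lambda_\alpha}\sup_{n}\abs{\risk_n(f^0)-\risk_n(f_{n_\alpha}^{\lambda_\alpha})}\to0$; in particular the right-hand side above tends to $0$. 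Hence $\limsup_\alpha\regfun(\norm{f_{n_\alpha}^{\lambda_\alpha}})\le\regfun(\norm{f^0})$. Since $\regfun$ is continuous and strictly increasing, $\regfun^{-1}$ is continuous and increasing, so this gives $\limsup_\alpha\norm{f_{n_\alpha}^{\lambda_\alpha}}\le\norm{f^0}$, and with the reverse inequality the limit exists and equals $\norm{f^0}$.

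The main obstacle I anticipate is the careful bookkeeping of which auxiliary statements are genuinely available: the argument leans crucially on Proposition \ref{Pro:EmpRisk}(d) and (e), and on the fact that the \emph{given} subnet is bounded, so that the uniform estimates on $r\Ball_\Banach$ can be invoked; one must be sure the subnet indeed lands in a fixed ball (it does, by hypothesis) before quoting (d). A secondary subtlety is that in the display for (ii) one should not assume $f^0$ itself is a minimizer of \eqref{eq:RERM} — only that it is a competitor — so the inequality must be run with $f^0$ on the right-hand side and $f_{n_\alpha}^{\lambda_\alpha}$ (the actual minimizer) on the left; this is the correct direction and makes the division by $\lambda_\alpha$ legitimate since $\regfun$ is increasing. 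Everything else — extracting no new subnet, the monotonicity manipulations with $\regfun$ — is routine.
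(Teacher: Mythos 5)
Your proposal is correct, and part (ii) is essentially identical to the paper's argument: the same division of the minimality inequality by $\lambda_{\alpha}$ with $f^0$ as competitor, the same appeal to Proposition \ref{Pro:EmpRisk} (e) under Equation \eqref{eq:add-cond-weak-net}, and the same combination of the resulting $\limsup$ bound with weak* lower semi-continuity of the norm and the monotonicity/continuity of $\regfun$ (Lemma \ref{Lm:Phi}). For the first part your route differs mildly from the paper's: you compare against a fixed exact minimizer $g\in\Sset^0\left(\Banach\right)$ and pass to the limit using the uniform convergence of Proposition \ref{Pro:EmpRisk} (d) on the ball containing the bounded subnet, whereas the paper first applies Lemma \ref{Lma:net-conv} to get $\risk\left(f^0\right)=\lim_{\alpha}\risk_{n_{\alpha}}\big(f_{n_{\alpha}}^{\lambda_{\alpha}}\big)$ and then invokes Lemma \ref{Lma:inf-limit-ineq}, whose contradiction argument works with $\rho$-near minimizers and therefore never needs $\Sset^0\left(\Banach\right)$ to be nonempty; your version leans on the paper's standing assumption that $\Sset^0\left(\Banach\right)\neq\emptyset$ (or would need to be rerun with $\epsilon$-minimizers), while the paper's lemma makes the theorem self-contained and in fact yields the nonemptiness as a by-product. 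Two small attribution slips you should fix: the weak* lower semi-continuity (indeed continuity) of $\risk$ on $r\Ball_{\Banach}$ is not Proposition \ref{Pro:EmpRisk} (b), which concerns $\risk_n$, but Lemma \ref{Lma:L-Cont}, derived from Conditions (I) and (II); and in the line ``$\risk_{n_{\alpha}}(g)\to\risk(g)=\risk\left(f^0\right)=\inf_{\Banach}\risk$'' the middle equality is the conclusion you are heading for, so only $\risk(g)=\inf_{\Banach}\risk$ may be used at that stage --- the chain $\risk\left(f^0\right)\leq\liminf_{\alpha}\risk\big(f_{n_{\alpha}}^{\lambda_{\alpha}}\big)\leq\limsup_{\alpha}\risk\big(f_{n_{\alpha}}^{\lambda_{\alpha}}\big)\leq\inf_{\Banach}\risk$ then delivers both $f^0\in\Sset^0\left(\Banach\right)$ and (i) as you intend.
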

\begin{remark}
If $\Banach_{\ast}$ is complete, then any weakly* convergent sequence of
$\big(f_n^{\lambda}\big)$ is bounded. This shows that the condition of boundedness in Theorem \ref{Thm:ConvThm-subnet} is unnecessary, when the subnet in Theorem \ref{Thm:ConvThm-subnet} is a sequence.
Since
\[
\lim_{\alpha}\sup_{\xi^{\ast}\in\Funset_{\Data}}
\abs{\langle f^0-f_{n_{\alpha}}^{\lambda_{\alpha}},\xi^{\ast}\rangle}=0,
\quad
\lim_{\alpha}\lambda_{\alpha}=0,
\]
the condition in Equation \eqref{eq:add-cond-weak-net} illustrates that
the regularization parameters do not decrease faster than the errors at the training inputs.
\end{remark}

As shown in the proofs of Theorems \ref{Thm:ConvThm} and \ref{Thm:ConvThm-subnet}, if $\Banach_{\ast}$ is separable or $\Banach$ is reflexive,
then $\Ball_{\Banach}$ is weakly* sequentially compact. Thus, the weakly* convergent subnets discussed above can be substituted into the weakly* convergent sequences.
Finally, we consider a specific sequence $\big(f_{n}^{\lambda_n}\big)$ .
We will study the well-posed adaptive regularization parameter $\lambda$ of Optimization \eqref{eq:TRM}.
If $\risk\left(f^0\right)=0$, then the pointwise convergence in
Equation \eqref{eq:PointConv} shows that
$\risk_n\left(f^0\right)\to0$ when $n\to\infty$.
Thus, there exists a decrease sequence $\left(\lambda_n\right)$ to $0$
such that
$\lambda_n^{-1}\risk_n\left(f^0\right)\to0$ when $n\to\infty$.
Obviously, the sequence $\big(f_{n}^{\lambda_n}\big)$ is a subnet of $\big(f_n^{\lambda}\big)$.
\begin{theorem}\label{Thm:ConvThm-lambda}
Suppose that Conditions (I) and (II) hold true.
If $\Sset^0\left(\Banach\right)=\left\{f^0\right\}$
and $\left(\lambda_n\right)$ is a decrease sequence to $0$ such that
\begin{equation}\label{eq:add-cond-strong-seq}
\lim_{n\to\infty}\frac{\risk_n\left(f^0\right)}{\lambda_n}=0,
\end{equation}
then
$\big(f_{n}^{\lambda_n}\big)$ is a weakly* convergent bounded sequence to $f^0$
such that
\[
\text{(i) }\risk\big(f^0\big)=\lim_{n\to\infty}\risk\big(f_{n}^{\lambda_n}\big),\quad
\text{(ii) }\norm{f^0}=\lim_{n\to\infty}\norm{f_{n}^{\lambda_n}}.
\]
\end{theorem}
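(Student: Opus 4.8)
The plan is to derive Theorem~\ref{Thm:ConvThm-lambda} as a specialization of Theorems~\ref{Thm:ConvThm} and~\ref{Thm:ConvThm-subnet}, using the extra structural hypotheses ($\Sset^0(\Banach)=\{f^0\}$, $\risk(f^0)=0$, and the decay condition~\eqref{eq:add-cond-strong-seq}) to upgrade ``some weakly* convergent subnet'' to ``the whole sequence'', and to check the additional hypothesis~\eqref{eq:add-cond-weak-net} of Theorem~\ref{Thm:ConvThm-subnet} so that the norm convergence (ii) follows as well.

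First I would establish a uniform norm bound on $\big(f_n^{\lambda_n}\big)$. By the minimizing property~\eqref{eq:RERM} of $f_n^{\lambda_n}$ over $\Banach$, comparing against the competitor $f^0$ gives
\[
\risk_n\big(f_n^{\lambda_n}\big)+\lambda_n\regfun\big(\norm{f_n^{\lambda_n}}\big)
\le \risk_n\big(f^0\big)+\lambda_n\regfun\big(\norm{f^0}\big).
\]
Dividing by $\lambda_n$ and using~\eqref{eq:add-cond-strong-seq} (so $\risk_n(f^0)/\lambda_n\to0$) together with nonnegativity of $\risk_n$, we get $\regfun\big(\norm{f_n^{\lambda_n}}\big)\le \regfun\big(\norm{f^0}\big)+o(1)$, hence $\limsup_n\norm{f_n^{\lambda_n}}\le\norm{f^0}$ since $\regfun$ is strictly increasing and coercive; in particular the sequence is bounded, say contained in $r\Ball_{\Banach}$ for some $r>0$. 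By the Banach--Alaoglu theorem $r\Ball_{\Banach}$ is weakly* compact, so every subnet of $\big(f_n^{\lambda_n}\big)$ has a further weakly* convergent subnet. Since $\big(f_n^{\lambda_n}\big)$ is a bounded subnet of $\big(f_n^\lambda\big)$ (as noted just before the theorem), Theorem~\ref{Thm:ConvThm-subnet} applies to any such weakly* convergent subnet: its limit lies in $\Sset^0(\Banach)=\{f^0\}$. Thus every weakly* convergent subnet of $\big(f_n^{\lambda_n}\big)$ converges weakly* to the \emph{same} point $f^0$; by the standard subnet characterization of convergence in a (Hausdorff) topological space, the whole sequence $\big(f_n^{\lambda_n}\big)$ converges weakly* to $f^0$. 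Statement (i), $\risk\big(f^0\big)=\lim_n\risk\big(f_n^{\lambda_n}\big)$, then follows from Theorem~\ref{Thm:ConvThm-subnet}(i) applied along the sequence (or directly: $\risk(f^0)=0$ and $\risk\ge0$, combined with weak* lower semicontinuity of $\risk$ forced by Condition~(I) and Proposition~\ref{Pro:EmpRisk}(c)/(d)).

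Next I would obtain (ii). The limsup inequality $\limsup_n\norm{f_n^{\lambda_n}}\le\norm{f^0}$ is already in hand from the first step. For the matching liminf I would invoke weak* lower semicontinuity of $\norm{\cdot}$ on $r\Ball_{\Banach}$ (stated in Section~\ref{sec:NotPre}): since $f_n^{\lambda_n}\to f^0$ weakly*, $\norm{f^0}\le\liminf_n\norm{f_n^{\lambda_n}}$. Combining gives $\norm{f^0}=\lim_n\norm{f_n^{\lambda_n}}$. This also makes the verification of~\eqref{eq:add-cond-weak-net} unnecessary for (ii) in this route, though one could alternatively note that it holds: $\sup_{\xi^\ast\in\Funset_\Data}\abs{\langle f^0-f_n^{\lambda_n},\xi^\ast\rangle}\to0$ by Condition~(II) and Lemma~\ref{Lm:CompactEquicontinuous} applied to the weakly* convergent bounded sequence, but to divide by $\lambda_n$ one needs the rate, which is exactly what~\eqref{eq:add-cond-strong-seq} is designed to supply through the energy comparison — so either packaging works.

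The main obstacle I anticipate is the bookkeeping around ``subnet of $\big(f_n^\lambda\big)$'' versus ``sequence'': one must be careful that $\big(f_n^{\lambda_n}\big)$ genuinely is a subnet of the net $\big(f_n^\lambda\big)$ over the directed set $\NN\times\RR_+$ (this is why the theorem insists $(\lambda_n)$ is \emph{decreasing} to $0$, guaranteeing cofinality), so that Theorem~\ref{Thm:ConvThm-subnet} is applicable verbatim, and that the ``every subnet has a subsubnet converging to $f^0$, hence the whole net converges'' argument is deployed correctly in the weak* topology. The analytic content — the energy comparison against $f^0$ and the use of coercivity of $\regfun$ and weak* lower semicontinuity of $\norm{\cdot}$ — is routine once Theorems~\ref{Thm:ConvThm} and~\ref{Thm:ConvThm-subnet} are available; the real work is the topological reduction from a subnet statement to a sequential one using the uniqueness of the minimizer.
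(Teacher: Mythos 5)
Your proposal is correct and follows essentially the same route as the paper's proof: the energy comparison against $f^0$ with the decay condition \eqref{eq:add-cond-strong-seq} to get boundedness and $\limsup_n\norm{f_{n}^{\lambda_n}}\leq\norm{f^0}$, Banach--Alaoglu plus Theorem \ref{Thm:ConvThm-subnet} and the uniqueness of the minimizer to identify every subnet limit as $f^0$ (hence weak* convergence of the whole sequence), and weak* lower semicontinuity of the norm combined with the limsup bound for (ii), bypassing \eqref{eq:add-cond-weak-net} exactly as the paper does.
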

\begin{remark}
If $\Banach$ is a reflexive Radon-Riesz space, then
Theorem \ref{Thm:ConvThm-lambda} also ensures that
\[
\lim_{n\to\infty}\norm{f^0-f_{n}^{\lambda_n}}=0.
\]
Equation \eqref{eq:add-cond-strong-seq} implies that $\risk_n\left(f^0\right)\to0$ when $n\to\infty$.
Thus, $\risk\left(f^0\right)=0$.
In the proof of Theorem \ref{Thm:ConvThm-lambda}, the condition in Equation \eqref{eq:add-cond-strong-seq} is solely utilized to guarantee the boundedness of $\big(f_{n}^{\lambda_n}\big)$.
This shows that the condition in Equation \eqref{eq:add-cond-strong-seq} is unnecessary if $\big(f_{n}^{\lambda_n}\big)$ is bounded.
\end{remark}

In this article, we study the convergence of the approximate solutions using weak* topology. We already know that a weakly* convergent net may not be bounded. Hence, any globally convergent rate is not taken into consideration in the convergence theorems.
Specifically, according to \cite[Theorem~3.6]{Ye2019I} and \cite[Theorem~3.7]{Ye2019II},
if $\Banach$ is endowed with a kernel-based probability measure and $\loss_n$ is formulated using a least-squared loss,
then given a specific exact solution $f^0$, we have $\abs{\langle f^0-f_{n}^{\lambda},\xi^{\ast}\rangle}\leq\sigma_{\xi^{\ast}\mid\vxi_{n}^{\ast},\vy_{n}}+\kappa_{\lambda\Phi\mid\vxi_{n}^{\ast},\vy_{n}}$,
for any $\xi^{\ast}\in\Banach_{\ast}$,
where $\sigma_{\xi^{\ast}\mid\vxi_{n}^{\ast},\vy_{n}}>0$ is associated with the testing input conditioned on the training data, and $\kappa_{\lambda\Phi\mid\vxi_{n}^{\ast},\vy_{n}}>0$ is associated with the regularization term
conditioned on the training data. Therefore, we conjecture that a locally convergent rate could be achieved
for some algorithms of machine learning, such as physics-informed neural networks (PINNs).

\section{Proofs of Theorems}\label{sec:PrfThm}

In the beginning, we prove Lemmas \ref{Lm:CompactEquicontinuous}, \ref{Lm:weak-equicont}, \ref{Lm:loss}, and \ref{Lm:Phi}.
In this article, some lemmas may have been previously addressed in other monographs.
However, they do not precisely align with the references.
Due to the rigorous discussions, we will prove all lemmas ourselves.
We emphasize that $\Cont\left(\Ball_{\Banach}\right)$ is a continuous function space defined on $\Ball_{\Banach}$
with respect to the relative weak* topology.

\begin{proof}[{\bf Proof of Lemma \ref{Lm:CompactEquicontinuous}}]
Since the elements of $\Funset$ are linear functionals, it is obvious that (a) and (b) are equivalent.
Next, if we prove that (a) and (c) are equivalent, then the proof is complete.
Since $\norm{\xi^{\ast}}_{\ast}=\sup_{f\in\Ball_{\Banach}}\abs{\langle f,\xi^{\ast}\rangle}$ for $\xi^{\ast}\in\Funset$,
$\Funset$ is relatively compact in $\Banach_{\ast}$ if and only if $\Funset$ is relatively compact in $\Cont\left(\Ball_{\Banach}\right)$. Moreover, $\Funset$ is weakly* equicontinuous on $\Ball_{\Banach}$ if and only if
$\Funset$ is equicontinuous in $\Cont\left(\Ball_{\Banach}\right)$.
If $\Funset$ is equicontinuous in $\Cont\left(\Ball_{\Banach}\right)$,
\cite[Corollary~4.1]{SchaeferWolff1999} ensures that $\Funset$ is bounded in $\Cont\left(\Ball_{\Banach}\right)$ by its linearity.
Thus, the Arzel\'a-Ascoli theorem ensures that $\Funset$ is equicontinuous in $\Cont\left(\Ball_{\Banach}\right)$ if and only if
$\Funset$ is relatively compact in $\Cont\left(\Ball_{\Banach}\right)$.
This demonstrates that (a) and (c) are equivalent.
\end{proof}

\begin{proof}[{\bf Proof of Lemma \ref{Lm:weak-equicont}}]
According to Lemma \ref{Lm:CompactEquicontinuous}, we only need to prove that $\left\{\delta_{x_n}:n\in\NN\right\}$ is weakly* equicontinuous on $\Ball_{\Banach}$.
We take any weak* convergent net $\left(f_{\alpha}\right)\subseteq\Ball_{\Banach}$ to an element $f_0\in\Ball_{\Banach}$.
By using the embedding property, we have
\begin{equation}\label{eq:weak-equicont-1}
\abs{\langle f,\delta_{x}-\delta_{z}\rangle}=\abs{f(x)-f(z)}\leq\norm{f}_{\Cont^{0,1}(\Omega)}\norm{x-z}_2^{\vartheta}\leq C\norm{f}\norm{x-z}_2^{\vartheta},
\end{equation}
for any $x,z\in\Omega$ and any $f\in\Banach$,
where the constant $C>0$ is independent on $x,z,f$.
Next, we take any $\epsilon>0$. Let $\theta:=\left(\epsilon/(3C)\right)^{1/\vartheta}$.
The collection of closed balls $z+\theta\Ball_{\Rd}$ for all $z\in\Omega$ forms a cover of $\Omega$.
Since $\Omega$ is compact, this cover has a finite subcover
$z_k+\theta\Ball_{\Rd}$ for $k\in\NN_N$.
For any $n\in\NN$, there exists a $k_n\in\NN_N$ such that $x_n\in z_{k_n}+\theta\Ball_{\Rd}$.
Since $x_n-z_{k_n}\leq\theta\Ball_{\Rd}$ and $f_0,f_{\alpha}\in\Ball_{\Banach}$,
Equation \eqref{eq:weak-equicont-1} shows that
\begin{equation}\label{eq:weak-equicont-2}
\abs{\langle f_{0},\delta_{x_n}-\delta_{z_{k_n}}\rangle}\leq C\theta\leq\frac{\epsilon}{3}.
\quad
\abs{\langle f_{\alpha},\delta_{x_n}-\delta_{z_{k_n}}\rangle}\leq C\theta\leq\frac{\epsilon}{3}.
\end{equation}
Moreover, since $\langle f_{0},\delta_{z_k}\rangle=\lim_{\alpha}\langle f_{\alpha},\delta_{z_k}\rangle$ for all $k\in\NN_N$,
there exists a $\gamma$ such that
\begin{equation}\label{eq:weak-equicont-3}
\abs{\langle f_{0}-f_{\alpha},\delta_{z_k}\rangle}
\leq\frac{\epsilon}{3}\text{ for all }k\in\NN_N,\quad\text{when }\alpha\succeq\gamma.
\end{equation}
We conclude from Equations \eqref{eq:weak-equicont-2} and \eqref{eq:weak-equicont-3} that
\[
\begin{split}
&\abs{\langle f_{0}-f_{\alpha},\delta_{x_n}\rangle}\\
\leq&
\abs{\langle f_{0},\delta_{x_n}\rangle-\langle f_{0},\delta_{z_{k_n}}\rangle}
+
\abs{\langle f_{0},\delta_{z_{k_n}}\rangle-\langle f_{\alpha},\delta_{z_{k_n}}\rangle}
+
\abs{\langle f_{\alpha},\delta_{z_{k_n}}\rangle-\langle f_{\alpha},\delta_{x_n}\rangle}
\\
\leq&\frac{\epsilon}{3}+\frac{\epsilon}{3}+\frac{\epsilon}{3}=\epsilon,
\quad{ }\text{when }\alpha\succeq\gamma.
\end{split}
\]
This shows that
\[
\lim_{\alpha}\sup_{n\in\NN}\abs{\langle f_{0}-f_{\alpha},\delta_{x_n}\rangle}=0.
\]
Therefore, Lemma \ref{Lm:CompactEquicontinuous} ensures that $\left\{\delta_{x_n}:n\in\NN\right\}$ is relatively compact in $\Banach_{\ast}$.
\end{proof}

\begin{proof}[{\bf Proof of Lemma \ref{Lm:loss}}]
Since $\loss_n\left(\vxi_n,\vy_n,\cdot\right)$ is a linear combination of $\closs\left(\xi_1,y_1,\cdot\right)$, $\closs\left(\xi_2,y_2,\cdot\right)$, ..., $\closs\left(\xi_{N_n},y_{N_n},\cdot\right)$,
the proof of (a) is straightforward.
As easy computation shows that $C_{n}^{\theta}=C_{\theta}$ for all $n\in\NN$, where $C_{\theta}$ is the local Lipschitz constant of $L$ for any $\theta>0$. Thus, (b) holds true.
\end{proof}

It is obvious that $\regfun(r_1)<\regfun(r_2)$ if and only if $r_1<r_2$ for any $r_1,r_2\in[0,\infty)$.

\begin{lemma}\label{Lm:Phi}
For any net $\left(r_{\alpha}\right)\subseteq[0,\infty)$ and $r_0\in[0,\infty)$,
$\regfun\left(r_{0}\right)=\lim_{\alpha}\regfun\left(r_{\alpha}\right)$ if and only if
$r_{0}=\lim_{\alpha}r_{\alpha}$.
\end{lemma}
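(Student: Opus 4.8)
The ``if'' direction is immediate: $\regfun$ is continuous on $[0,\infty)$, so $r_0=\lim_\alpha r_\alpha$ forces $\regfun(r_0)=\lim_\alpha\regfun(r_\alpha)$. The work is in the ``only if'' direction, and the plan has two stages: first tame the possible unboundedness of the net using the coercivity $\regfun(r)\to\infty$, then use the strict monotonicity to identify the limit.

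First I would show the tail of $(r_\alpha)$ is bounded. Since $\regfun(r_0)<\infty$ and $\regfun(r_\alpha)\to\regfun(r_0)$, there is an index $\alpha_0$ with $\regfun(r_\alpha)\le\regfun(r_0)+1$ for all $\alpha\succeq\alpha_0$. Because $\regfun(r)\to\infty$ as $r\to\infty$, there is $M>0$ with $\regfun(r)>\regfun(r_0)+1$ whenever $r>M$; combining the two gives $r_\alpha\le M$ for all $\alpha\succeq\alpha_0$, so the net eventually lies in the compact interval $[0,M]$, and we may as well assume $r_0\le M$ as well (enlarging $M$ if needed).

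Next I would run a standard subnet/contradiction argument. Suppose $r_\alpha\not\to r_0$. Then there is $\epsilon>0$ and a subnet $(r_\beta)$ with $\abs{r_\beta-r_0}\ge\epsilon$ for all $\beta$; by the previous step we may assume $(r_\beta)\subseteq[0,M]$. Compactness of $[0,M]$ yields a further subnet $(r_\gamma)$ with $r_\gamma\to s$ for some $s\in[0,M]$, and necessarily $\abs{s-r_0}\ge\epsilon$, so $s\ne r_0$. Continuity of $\regfun$ gives $\regfun(r_\gamma)\to\regfun(s)$, while $(r_\gamma)$ is also a subnet of the convergent net $(r_\alpha)$, so $\regfun(r_\gamma)\to\regfun(r_0)$; hence $\regfun(s)=\regfun(r_0)$. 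This contradicts the strict monotonicity of $\regfun$ (equivalently, the injectivity noted in the remark preceding the lemma, since $s\ne r_0$). Therefore $r_\alpha\to r_0$, completing the proof.

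The only mild obstacle is the first stage — a priori a net with $\regfun(r_\alpha)$ convergent need not have bounded entries before passing to a tail — but this is exactly what the assumption $\regfun(r)\to\infty$ rules out, so the argument is short. Alternatively, one could phrase the whole ``only if'' direction by observing that $\regfun$ restricted to $[0,M]$ is a continuous strictly increasing bijection onto $[\regfun(0),\regfun(M)]$, hence a homeomorphism with continuous inverse, and then apply $\regfun^{-1}$ to the eventual relation $\regfun(r_\alpha)\to\regfun(r_0)$; I would mention this as the conceptual reason but carry out the elementary subnet argument for self-containedness.
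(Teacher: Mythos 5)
Your proof is correct, but your primary argument is not the one the paper uses. The paper's proof is a one-liner: since $\regfun$ is continuous and strictly increasing, $\regfun^{-1}$ exists on the range of $\regfun$ and is itself continuous and strictly increasing, so the ``only if'' direction follows by applying $\regfun^{-1}$ to $\regfun\left(r_{\alpha}\right)\to\regfun\left(r_{0}\right)$ — exactly the alternative you sketch in your closing remark but do not carry out. Your main route instead uses the coercivity $\regfun(r)\to\infty$ to trap a tail of the net in a compact interval, extracts convergent subnets, and invokes injectivity to identify the limit; this is more elementary in that it never appeals to continuity of an inverse function, and it is a sound use of the hypotheses (coercivity is indeed assumed for $\regfun$ in Optimization \eqref{eq:RERM}). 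The trade-off is that the paper's inverse-function argument is shorter and in fact does not even need coercivity — a continuous strictly increasing function on an interval is automatically a homeomorphism onto its image, so $\regfun^{-1}$ is continuous at $\regfun\left(r_0\right)$ regardless — whereas your compactness step genuinely relies on $\regfun(r)\to\infty$ to bound the net. Both arguments are valid; yours is self-contained at the level of nets and subnets, the paper's is the ``conceptual'' one you mention.
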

\begin{proof}
Since $\regfun$ is strictly increasing and continuous,
the inverse of $\regfun$ exists and $\regfun^{-1}$ is also strictly increasing and continuous.
Thus, the proof is straightforward.
\end{proof}

\subsection{Proofs of Characteristics of $\risk_n$}\label{sec:ProofPropRn}

We will prove Proposition \ref{Pro:EmpRisk}, when $R_n$ has the formula as shown in Equation \eqref{eq:EmpRiskFun}.

\begin{proof}[{\bf Proof of Proposition \ref{Pro:EmpRisk}}]
Since
\[
\loss_n\left(\vxi_{n}^{\ast},\vy_{n},\langle f,\vxi_{n}^{\ast}\rangle\right)=
\loss_n\left(\vxi_{n}^{\ast},\vy_{n},\langle f+h,\vxi_{n}^{\ast}\rangle\right),
\]
for all $f\in\Banach$ and all $h\in
ker\left(\vxi_n^{\ast}\right)$, the (a) holds true.

Next, we prove the (b). If $\loss_n$ is a lower semi-continuous multi-loss function,
then $\loss_n\left(\vxi_{n}^{\ast},\vy_{n},\cdot\right)$ is lower semi-continuous; hence
it is easy to check that $\risk_n$ is weakly* lower-semi continuous.
Similar arguments apply to the cases of continuity and convexity.

Moreover, we prove the (c).
Let $\left(f_{\alpha}\right)\subseteq r\Ball_{\Banach}$ be any weakly* convergent net to an element $f_0\in r\Ball_{\Banach}$ for a $r>0$.
Since $\Funset_D$ is relatively compact in $\Banach_{\ast}$,
there exists a $\varrho>0$ such that $\norm{\xi^{\ast}}_{\ast}\leq\varrho$ for all $\xi^{\ast}\in\Funset_D$.
Thus, we have
\[
\abs{\langle f_{0},\xi^{\ast}\rangle}
\leq\norm{\xi^{\ast}}_{\ast}\norm{f_{0}}
\leq\varrho r,
\quad
\abs{\langle f_{\alpha},\xi^{\ast}\rangle}
\leq\norm{\xi^{\ast}}_{\ast}\norm{f_{\alpha}}
\leq\varrho r,
\]
for all $\xi^{\ast}\in\Funset_D$. We take $\theta:=\varrho r$.
Thus,
\[
\norm{\langle f_{0},\vxi_n^{\ast}\rangle}_{\infty}\leq\theta,
\quad
\norm{\langle f_{\alpha},\vxi_n^{\ast}\rangle}_{\infty}\leq\theta,
\quad\text{for all }n\in\NN.
\]
Moreover, since $\Loss$ is uniformly local Lipschitz continuous, there exists a constant $C_{\theta}:=\sup_{n\in\NN}C_n^{\theta}$ such that
\[
\abs{
\loss_n\left(\vxi_n^{\ast},\vy_n,\langle f_0,\vxi_n^{\ast}\rangle\right)
-\loss_n\left(\vxi_n^{\ast},\vy_n,\langle f_{\alpha},\vxi_n^{\ast}\rangle\right)
}
\leq C_{\theta}\norm{\langle f_0,\vxi_n^{\ast}\rangle-\langle f_{\alpha},\vxi_n^{\ast}\rangle}_{\infty},
\]
for all $\left(\vxi_n^{\ast},\vy_n\right)\in\Data$, where $C_n^{\theta}$ is the local Lipschitz constant of $\loss_n$.
Thus,
\begin{equation}\label{eq:Ln-equiCont-1}
\abs{\risk_n\left(f_0\right)-\risk_n\left(f_{\alpha}\right)}
\leq C_{\theta}\norm{\langle f_0,\vxi_n^{\ast}\rangle-\langle f_{\alpha},\vxi_n^{\ast}\rangle}_{\infty}.
\end{equation}
Lemma \ref{Lm:CompactEquicontinuous} ensures that $\Funset_D$ is weakly* equicontinuous on $\left(f_{\alpha}\right)\subseteq r\Ball_{\Banach}$. Therefore, we have
\[
\lim_{\alpha}\sup_{n\in\NN}\abs{\risk_n\left(f_0\right)-\risk_n\left(f_{\alpha}\right)}
\leq C_{\theta}\lim_{\alpha}\sup_{\xi^{\ast}\in\Funset_D}
\abs{\langle f_0,\xi^{\ast}\rangle-\langle f_{\alpha},\xi^{\ast}\rangle}=0.
\]
This demonstrates that $\left\{\risk_n:n\in\NN\right\}$ is weakly* equicontinuous on $r\Ball_{\Banach}$.

We will now the (d). According to the (c), $\left\{\risk_n:n\in\NN\right\}$ is also equicontinuous in $\Cont\left(r\Ball_{\Banach}\right)$.
By the pointwise convergence in Equation \eqref{eq:PointConv},
the Arzel\'a-Ascoli theorem ensures that $\risk_n$ uniformly converges to $\risk$ on $r\Ball_{\Banach}$.

Finally, we prove the (e). Equation \eqref{eq:Ln-equiCont-1} shows that
\[
\lim_{\alpha}\frac{1}{\lambda_{\alpha}}\sup_{n\in\NN}\abs{\risk_n\left(f_0\right)-\risk_n\left(f_{\alpha}\right)}
\leq C_{\theta}\lim_{\alpha}\frac{1}{\lambda_{\alpha}}\sup_{\xi^{\ast}\in\Funset_D}
\abs{\langle f_0-f_{\alpha},\xi^{\ast}\rangle}=0.
\]
\end{proof}

\subsection{Proofs of Representer Theorems}\label{sec:ProofReprThm}

In this subsection, we fix $n,\lambda$ in the proof of Lemma \ref{Lm:GenRepr}, Theorem \ref{Thm:GenRepr}, and Corollary \ref{Cor:GenReprExtPoint}.
Since $\loss_n$ is a lower semi-continuous multi-loss function, Proposition \ref{Pro:EmpRisk} (b) ensures that
$\risk_n$ is weakly* lower semi-continuous.

\begin{proof}[{\bf Proof of Lemma \ref{Lm:GenRepr}}]
We first prove that $\Sset_{n}^{\lambda}\left(\Banach\right)$ is nonempty.
Since $\regfun(r)\to\infty$ when $r\to\infty$, we have
$\risk_n(f)+\lambda\regfun\left(\norm{f}\right)\to\infty$
when $\norm{f}\to\infty$.
Thus, there exists a $r>0$ such that
\[
\inf_{f\in\Banach}\risk_n(f)+\lambda\regfun\left(\norm{f}\right)
=
\inf_{f\in r\Ball_{\Banach}}\risk_n(f)+\lambda\regfun\left(\norm{f}\right).
\]
Since $r\Ball_{\Banach}$ is weakly* compact and $\risk_n+\lambda\regfun\left(\norm{\cdot}\right)$ is
weakly* lower semi-continuous,
the Weierstrass extreme value theorem ensures that
$\risk_n+\lambda\regfun\left(\norm{\cdot}\right)$ attains a global minimum on $r\Ball_{\Banach}$ and thus on $\Banach$. Therefore $\Sset_{n}^{\lambda}\left(\Banach\right)\neq\emptyset$.

Next, for any $f\in\Sset_{n}^{\lambda}\left(\Banach\right)$, we have
\[
\lambda\regfun\big(\norm{f}\big)\leq
\risk_n\left(f\right)+\lambda\regfun\left(\norm{f}\right)
\leq\risk_n\left(0\right)+\lambda\regfun\left(0\right).
\]
Thus,
\[
\norm{f}\leq\regfun^{-1}\left(\frac{\risk_n\left(0\right)}{\lambda}+\regfun\left(0\right)\right).
\]
This demonstrates that $\Sset_{n}^{\lambda}\left(\Banach\right)$ is bounded.
Therefore, if we prove that $\Sset_{n}^{\lambda}\left(\Banach\right)$ is weakly* closed,
then the Banach-Alaoglu theorem ensures that $\Sset_{n}^{\lambda}\left(\Banach\right)$ is weakly* compact.
Let $\left(f_{\alpha}\right)\subseteq\Sset_{n}^{\lambda}\left(\Banach\right)$ be any weakly* convergent net to an element $f_0\in\Banach$.
Thus,
\[
\liminf_{\alpha}\risk_n\left(f_{\alpha}\right)+\lambda\regfun\left(\norm{f_{\alpha}}\right)
=
\inf_{f\in\Banach}
\risk_{n}(f)+\lambda\regfun\left(\norm{f}\right).
\]
Since $\risk_n+\lambda\regfun\left(\norm{\cdot}\right)$ is
weakly* lower semi-continuous,
we have
\[
\risk_n\left(f_{0}\right)+\lambda\regfun\left(\norm{f_{0}}\right)
\leq
\liminf_{\alpha}\risk_n\left(f_{\alpha}\right)+\lambda\regfun\left(\norm{f_{\alpha}}\right).
\]
Thus,
\[
\risk_n\left(f_{0}\right)+\lambda\regfun\left(\norm{f_{0}}\right)\leq\inf_{f\in\Banach}
\risk_{n}(f)+\lambda\regfun\left(\norm{f}\right).
\]
This demonstrates that $f_0\in\Sset_{n}^{\lambda}\left(\Banach\right)$.
Therefore, $\Sset_{n}^{\lambda}\left(\Banach\right)$ is weakly* closed.
\end{proof}

Next, we will prove Theorem \ref{Thm:GenRepr} using Lemmas \ref{Lm:GenRepr-FiniteRep} and \ref{Lm:GenRepr-AS}.
For any $\vt\in\RR^{N_n}$, we denote an affine space
\[
\Aset(\vt):=\left\{f\in\Banach:\langle f,\vxi_n^{\ast}\rangle=\vt\right\},
\]
where $\vxi_n^{\ast}$ is given in Theorem \ref{Thm:GenRepr}.
We consider the optimization
\begin{equation}\label{eq:MinNormIntr}
\inf_{f\in\Aset(\vt)}\norm{f}.
\end{equation}
Let $\Iset(\vt)$ be a collection of all minimizers of Optimization \eqref{eq:MinNormIntr}.
Obviously, if $\Aset(\vt)\neq\emptyset$, then $\Iset(\vt)\neq\emptyset$.

\begin{lemma}\label{Lm:GenRepr-FiniteRep}
If $\Aset(\vt)$ is nonempty for a fixed $\vt\in\RR^{N_n}$,
then for any $f_{\vt}\in\Iset(\vt)$, there exists a $\vc_n\in\RR^{N_n}$ such that
\[
\text{(i) }\vc_n\cdot\vxi_n^{\ast}\in\partial\norm{\cdot}\left(f_{\vt}\right),\quad
\text{(ii) }\vc_n\cdot\langle f_{\vt},\vxi_n^{\ast}\rangle=\norm{f_{\vt}}.
\]
\end{lemma}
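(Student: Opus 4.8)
The plan is to read Optimization \eqref{eq:MinNormIntr} as the unconstrained minimization of the sum $\norm{\cdot}+\chi_{\Aset(\vt)}$ over $\Banach$ and to apply subdifferential calculus at a fixed minimizer $f_{\vt}\in\Iset(\vt)$. Since $\vt$ is fixed with $\Aset(\vt)\neq\emptyset$ and $\Aset(\vt)$ is the preimage of the singleton $\{\vt\}$ under the bounded linear map $f\mapsto\langle f,\vxi_n^{\ast}\rangle$, it is a nonempty closed affine subspace of $\Banach$; hence both $\norm{\cdot}$ and $\chi_{\Aset(\vt)}$ are proper, convex, and lower semi-continuous, and $f_{\vt}$ being a global minimizer of their sum, Fermat's rule gives $0\in\partial\big(\norm{\cdot}+\chi_{\Aset(\vt)}\big)(f_{\vt})$.

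The next step is to split this subdifferential by the Moreau--Rockafellar sum rule. Because $\norm{\cdot}$ is finite and continuous everywhere on $\Banach$, in particular at a point of $\operatorname{dom}\chi_{\Aset(\vt)}=\Aset(\vt)$, the qualification condition for the sum rule (see \cite{Zalinescu2002}) is met, so $\partial\big(\norm{\cdot}+\chi_{\Aset(\vt)}\big)(f_{\vt})=\partial\norm{\cdot}(f_{\vt})+\partial\chi_{\Aset(\vt)}(f_{\vt})$. Therefore there is $\xi^{\ast}\in\partial\norm{\cdot}(f_{\vt})$ with $-\xi^{\ast}\in\partial\chi_{\Aset(\vt)}(f_{\vt})$, i.e. $\xi^{\ast}\in-\partial\chi_{\Aset(\vt)}(f_{\vt})$.

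It remains to identify this normal cone and read off the conclusions. Since $f_{\vt}\in\Aset(\vt)$, the difference set is $\Aset(\vt)-f_{\vt}=ker\left(\vxi_n^{\ast}\right)$, which is a closed subspace, so Equation \eqref{eq:NormalConeAffine} yields $-\partial\chi_{\Aset(\vt)}(f_{\vt})=\big(\Aset(\vt)-f_{\vt}\big)^{\bot}=ker\left(\vxi_n^{\ast}\right)^{\bot}$, and Equation \eqref{eq:KerLinSpan} gives $ker\left(\vxi_n^{\ast}\right)^{\bot}=\Span\{\xi_{n1}^{\ast},\ldots,\xi_{nN_n}^{\ast}\}$. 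Hence $\xi^{\ast}=\vc_n\cdot\vxi_n^{\ast}$ for some $\vc_n\in\RR^{N_n}$, which is conclusion (i). For (ii): if $f_{\vt}\neq0$, then $\xi^{\ast}\in\partial\norm{\cdot}(f_{\vt})$ together with Equation \eqref{eq:CharSubdiffNorm} gives $\norm{f_{\vt}}=\langle f_{\vt},\xi^{\ast}\rangle=\vc_n\cdot\langle f_{\vt},\vxi_n^{\ast}\rangle$; if $f_{\vt}=0$, then $\vt=\boldsymbol{0}$ and one simply takes $\vc_n=\boldsymbol{0}$, for which (i) holds because $0\in\Ball_{\Banach^{\ast}}=\partial\norm{\cdot}(0)$ and (ii) reads $0=0$.

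I expect the only delicate point to be the legitimacy of the sum rule in the possibly nonreflexive, nonsmooth space $\Banach$ (e.g. $\RKBS_K^1(\Omega)$); this is exactly where the continuity of $\norm{\cdot}$ on all of $\Banach$ is used, and it is a standard fact of convex analysis in Banach spaces. The rest of the argument is a direct translation of the normal cone of the affine interpolation constraint into the span of the prescribed functionals $\xi_{n1}^{\ast},\ldots,\xi_{nN_n}^{\ast}$.
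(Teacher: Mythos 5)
Your proof is correct and follows essentially the same route as the paper: the paper handles $f_{\vt}=0$ separately, identifies $-\partial\chi_{\Aset(\vt)}\left(f_{\vt}\right)$ with $\Span\left\{\xi_{n1}^{\ast},\xi_{n2}^{\ast},\ldots,\xi_{nN_n}^{\ast}\right\}$ via Equations \eqref{eq:NormalConeAffine} and \eqref{eq:KerLinSpan}, and then invokes the Pshenichnyi--Rockafellar theorem, which is exactly the optimality condition you recover by combining Fermat's rule with the Moreau--Rockafellar sum rule under the same qualification (continuity of $\norm{\cdot}$ at a point of $\Aset(\vt)$). In both arguments conclusion (ii) then follows from Equation \eqref{eq:CharSubdiffNorm}.
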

\begin{proof}
We take any $f_{\vt}\in\Iset(\vt)$.
If $f_{\vt}=0$, then $0\in\partial\norm{\cdot}\left(f_{\vt}\right)$.
The proof is straightforward by $\vc_{n}:=0$.
Next, we assume that $f_{\vt}\neq0$.
Thus, $f_{\vt}\in\Aset(\vt)$.
If we prove that the (i) holds true,
then Equation \eqref{eq:CharSubdiffNorm} shows that the (ii) holds true. Thus, the proof is completed by showing that an element of $\partial\norm{\cdot}\left(f_{\vt}\right)$
is a linear combination of $\xi^{\ast}_{n1},\xi^{\ast}_{n2},\ldots,\xi^{\ast}_{nN_n}$.
Since $\Aset(\vt)-f_{\vt}=ker\left(\vxi_n^{\ast}\right)$, Equation \eqref{eq:KerLinSpan} shows that
\begin{equation}\label{eq:GenRepr-FiniteRep-1}
\left(\Aset(\vt)-f_{\vt}\right)^{\bot}
=\Span\left\{\xi^{\ast}_{n1},\xi^{\ast}_{n2},\ldots,\xi^{\ast}_{nN_n}\right\}.
\end{equation}
We conclude from Equations \eqref{eq:NormalConeAffine} and \eqref{eq:GenRepr-FiniteRep-1} that
\[
-\partial\chi_{\Aset(\vt)}\left(f_{\vt}\right)
=\Span\left\{\xi^{\ast}_{n1},\xi^{\ast}_{n2},\ldots,\xi^{\ast}_{nN_n}\right\}.
\]
Therefore, the Pshenichnyi-Rockafellar theorem (\cite[Theorem 2.9.1]{Zalinescu2002})
ensures that
\[
\partial\norm{\cdot}\left(f_{\vt}\right)\cap\Span\left\{\xi^{\ast}_{n1},\xi^{\ast}_{n2},\ldots,\xi^{\ast}_{nN_n}\right\}
=
\partial\norm{\cdot}\left(f_{\vt}\right)\cap\left(-\partial\chi_{\Aset(\vt)}\left(f_{\vt}\right)\right)
\neq\emptyset.
\]
\end{proof}

\begin{lemma}\label{Lm:GenRepr-AS}
If $f_0\in\Sset_{n}^{\lambda}\left(\Banach\right)$ and $\vt:=\langle f_0,\vxi_n^{\ast}\rangle$,
then $f_0\in\Iset(\vt)$ and $\Iset(\vt)\subseteq\Sset_{n}^{\lambda}\left(\Banach\right)$.
\end{lemma}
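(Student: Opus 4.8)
The whole point is that $\risk_n$ factors through the finite rank ``sampling'' map $f\mapsto\langle f,\vxi_n^{\ast}\rangle$: by the definition of $\risk_n$ in Equation \eqref{eq:EmpRiskFun}, the value $\risk_n(f)=\loss_n\left(\vxi_n^{\ast},\vy_n,\langle f,\vxi_n^{\ast}\rangle\right)$ depends on $f$ only through $\langle f,\vxi_n^{\ast}\rangle$. Hence on the affine slice $\Aset(\vt)$, where $\langle f,\vxi_n^{\ast}\rangle\equiv\vt$, the empirical risk is the constant $\loss_n\left(\vxi_n^{\ast},\vy_n,\vt\right)$, and minimizing $\risk_n(f)+\lambda\regfun\left(\norm{f}\right)$ over $\Aset(\vt)$ reduces to minimizing $\regfun\left(\norm{f}\right)$, which by the strict monotonicity of $\regfun$ is the same as minimizing $\norm{f}$, i.e.\ the same as Optimization \eqref{eq:MinNormIntr}. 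I would make these two remarks explicit at the start, also noting that $\Aset(\vt)\ne\emptyset$ (it contains $f_0$), so $\Iset(\vt)\ne\emptyset$.

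First I would show $f_0\in\Iset(\vt)$ by contradiction. Suppose $f_0$ is not a minimizer of $\norm{\cdot}$ on $\Aset(\vt)$; then there is $g\in\Aset(\vt)$ with $\norm{g}<\norm{f_0}$. Since $\langle g,\vxi_n^{\ast}\rangle=\vt=\langle f_0,\vxi_n^{\ast}\rangle$, we get $\risk_n(g)=\risk_n(f_0)$, while $\regfun\left(\norm{g}\right)<\regfun\left(\norm{f_0}\right)$ because $\regfun$ is strictly increasing. Therefore $\risk_n(g)+\lambda\regfun\left(\norm{g}\right)<\risk_n(f_0)+\lambda\regfun\left(\norm{f_0}\right)$, contradicting $f_0\in\Sset_n^{\lambda}\left(\Banach\right)$. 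So $f_0\in\Iset(\vt)$, and in particular $\norm{f_0}=\inf_{f\in\Aset(\vt)}\norm{f}$.

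Next I would show $\Iset(\vt)\subseteq\Sset_n^{\lambda}\left(\Banach\right)$. Take any $h\in\Iset(\vt)$. Then $h\in\Aset(\vt)$, so again $\risk_n(h)=\loss_n\left(\vxi_n^{\ast},\vy_n,\vt\right)=\risk_n(f_0)$, and $\norm{h}=\inf_{f\in\Aset(\vt)}\norm{f}=\norm{f_0}$ by the previous step. Hence
\[
\risk_n(h)+\lambda\regfun\left(\norm{h}\right)=\risk_n(f_0)+\lambda\regfun\left(\norm{f_0}\right)=\inf_{f\in\Banach}\risk_n(f)+\lambda\regfun\left(\norm{f}\right),
\]
the last equality because $f_0\in\Sset_n^{\lambda}\left(\Banach\right)$. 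Therefore $h\in\Sset_n^{\lambda}\left(\Banach\right)$, which finishes the proof.

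I do not expect any real obstacle here: the statement is essentially a bookkeeping observation about the two objectives sharing the same value on $\Aset(\vt)$. The only points requiring a little care are that $\regfun$ strictly increasing is exactly what lets me pass between ``minimal $\regfun\left(\norm{\cdot}\right)$'' and ``minimal $\norm{\cdot}$'', and that one should not assume $\Iset(\vt)$ is a singleton (it need not be); the argument above handles an arbitrary element of $\Iset(\vt)$.
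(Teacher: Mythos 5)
Your proof is correct and follows essentially the same route as the paper's: both rest on the fact that $\risk_n$ is constant on $\Aset(\vt)$ (your direct observation is exactly Proposition \ref{Pro:EmpRisk} (a)) together with the strict monotonicity of $\regfun$, and both conclude by showing the regularized objective takes the same value at $f_0$ and at any element of $\Iset(\vt)$. The only difference is cosmetic: you prove $f_0\in\Iset(\vt)$ by contradiction against an arbitrary $g\in\Aset(\vt)$ rather than by comparing with a minimizer $f_{\vt}\in\Iset(\vt)$, which incidentally avoids invoking the nonemptiness of $\Iset(\vt)$ at that stage.
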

\begin{proof}
We take any $f_{\vt}\in\Iset(\vt)$.
Clearly, $f_0\in\Aset(\vt)$.
Thus, if we prove that $\norm{f_0}\leq\norm{f_{\vt}}$, then $f_0\in\Iset(\vt)$.
Let $h:=f_0-f_{\vt}$.
Since $f_{\vt}\in\Aset(\vt)$, we have $h\in ker\left(\vxi_n^{\ast}\right)$.
Thus, Proposition \ref{Pro:EmpRisk} (a) ensures that
\begin{equation}\label{eq:GenRepr-AS-1}
\risk_n\left(f_{\vt}\right)=\risk_n\left(f_{\vt}+h\right)=\risk_n\left(f_0\right).
\end{equation}
Since $f_0\in\Sset_{n}^{\lambda}\left(\Banach\right)$, we have
\begin{equation}\label{eq:GenRepr-AS-2}
\risk_n\left(f_0\right)+\lambda\regfun\left(\norm{f_0}\right)
\leq\risk_n\left(f_{\vt}\right)+\lambda\regfun\left(\norm{f_{\vt}}\right).
\end{equation}
Subtracting Equations \eqref{eq:GenRepr-AS-1} from \eqref{eq:GenRepr-AS-2}, we have $\regfun\left(\norm{f_0}\right)\leq\regfun\left(\norm{f_{\vt}}\right)$.
Thus, Lemma \ref{Lm:Phi} ensures that $\norm{f_0}\leq\norm{f_{\vt}}$.
This also shows that
\begin{equation}\label{eq:GenRepr-AS-3}
\norm{f_0}=\norm{f_{\vt}}.
\end{equation}
We conclude from Equations \eqref{eq:GenRepr-AS-1} and \eqref{eq:GenRepr-AS-3} that
$\risk_n\left(f_{\vt}\right)+\lambda\regfun\left(\norm{f_{\vt}}\right)
=\risk_n\left(f_0\right)+\lambda\regfun\left(\norm{f_0}\right)$,
hence that $f_{\vt}\in\Sset_{n}^{\lambda}\left(\Banach\right)$, and finally that
$\Iset(\vt)\subseteq\Sset_{n}^{\lambda}\left(\Banach\right)$.
\end{proof}

\begin{proof}[{\bf Proof of Theorem \ref{Thm:GenRepr}}]
Lemma \ref{Lm:GenRepr} first ensures that
$\Sset_{n}^{\lambda}\left(\Banach\right)$ is nonempty.
We take any $f_n^{\lambda}\in\Sset_{n}^{\lambda}\left(\Banach\right)$.
According to Lemma \ref{Lm:GenRepr-AS}, $f_{n}^{\lambda}\in\Iset\left(\vt\right)$,
where $\vt:=\langle f_{n}^{\lambda},\vxi^{\ast}_{n}\rangle$.
Obviously, $\Aset\left(\vt\right)\neq\emptyset$.
Therefore, Lemma \ref{Lm:GenRepr-FiniteRep} also ensures that
the (i) and (ii) hold true.
\end{proof}

Finally, we will complete the proof of Corollary \ref{Cor:GenReprExtPoint} using the techniques of \cite[Theorem 3.1]{BoyerChambolleCastroETC2019}.
\begin{proof}[{\bf Proof of Corollary \ref{Cor:GenReprExtPoint}}]
If $f_{n}^{\lambda}=0$, then we take $\tilde{f}_{n}^{\lambda}:=f_{n}^{\lambda}$.
Next, we assume that $f_{n}^{\lambda}\neq0$.
Let $\vt:=\langle f_{n}^{\lambda},\vxi^{\ast}_{n}\rangle$ and
$\vv:=\vt/\norm{f_n^{\lambda}}$.
Lemma \ref{Lm:GenRepr-AS} ensures that $f_{n}^{\lambda}\in\Iset\left(\vt\right)$.
Thus, $\Iset_n\left(\vv\right)=\Ball_{\Banach}\cap\Aset\left(\vv\right)\neq\emptyset$.
Let $f_0$ be an extreme point of $\Iset\left(\vv\right)$.
Thus, $\norm{f_n^{\lambda}}\langle f_0,\vxi^{\ast}_{n}\rangle=\langle f_{n}^{\lambda},\vxi^{\ast}_{n}\rangle$.
Moreover, since the codimension of the affine space $\Iset\left(\vv\right)$ is at most $N_n$ and the convex set $\Ball_{\Banach}$ is linear closed and linear bounded,
\cite[Main Theorem]{Dubins1962} ensures that $f_0$ is a convex combination of at most $N_n+1$ extreme points of $\Ball_{\Banach}$.
Thus, $f_0$ is a linear combination of at most $N_n$ extreme points of $\Ball_{\Banach}$.
We take $\tilde{f}_{n}^{\lambda}:=\norm{f_n^{\lambda}}f_0$.
Therefore, $\tilde{f}_{n}^{\lambda}$ is a linear combination of the same extreme points of $\Ball_{\Banach}$ as $f_0$. According to Lemma \ref{Lm:GenRepr-AS}, $\tilde{f}_{n}^{\lambda}\in\Iset(\vv)\subseteq\Sset_{n}^{\lambda}\left(\Banach\right)$.
The proof is complete.
\end{proof}

\begin{remark}
In \cite[Theorem 3.1]{HuangLiuTanYe2020}, we study the generalized representer theorems for the loss functions with a range $[0,\infty]$.
To simplify the proofs of the convergence theorems,
the range of $\loss_n$ is consistently restricted to $[0,\infty)$.
If the range of $\loss_n$ is extended to $[0,\infty]$, then the above proofs of the representer theorems still hold true.
\end{remark}

\subsection{Proofs of Pseudo-approximation Theorems}\label{sec:ProofAppThm}

In this subsection, the $n,\lambda$ are fixed in the proofs of Lemma \ref{Lm:App} and Theorem \ref{Thm:App}.

\begin{proof}[{\bf Proof of Lemma \ref{Lm:App}}]
Since $\Nueral_m$ is weakly* closed,  $\Nueral_m\cap r\Ball_{\Banach}$ is weakly* compact for any $r>0$.
Thus, a slight change in the proof of Lemma \ref{Lm:GenRepr} shows that the conclusions of Lemma \ref{Lm:App} hold true.
\end{proof}

\begin{lemma}\label{Lm:App-bound}
If $\cap_{m\in\NN}\Nueral_m$ is nonempty, then there exists a $r>0$ such that $\norm{f_{nm}^{\lambda}}\leq r$ for all $m\in\NN$.
\end{lemma}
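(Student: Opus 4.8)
The plan is to exploit the fact that any fixed element of $\cap_{m\in\NN}\Nueral_m$ is a competitor in every one of the optimizations \eqref{eq:RERM-App}, and then use the coercivity of the regularization term $\regfun$ to bound the norms of the minimizers $f_{nm}^{\lambda}$ uniformly in $m$. First I would pick some $g\in\cap_{m\in\NN}\Nueral_m$, which exists by hypothesis; then $g\in\Nueral_m$ for every $m\in\NN$, so by the definition of $f_{nm}^{\lambda}$ as a minimizer over $\Nueral_m$ we get
\[
\risk_n\big(f_{nm}^{\lambda}\big)+\lambda\regfun\big(\norm{f_{nm}^{\lambda}}\big)
\leq \risk_n(g)+\lambda\regfun\big(\norm{g}\big),\quad\text{for all }m\in\NN.
\]

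Next, since $\risk_n$ is nonnegative (the multi-loss function takes values in $[0,\infty)$) and $\lambda>0$, this yields
\[
\regfun\big(\norm{f_{nm}^{\lambda}}\big)
\leq \frac{\risk_n(g)}{\lambda}+\regfun\big(\norm{g}\big)=:c,
\]
where $c$ is a finite constant independent of $m$ (here $n$ and $\lambda$ are fixed throughout this subsection). Because $\regfun$ is strictly increasing and continuous with $\regfun(r)\to\infty$ as $r\to\infty$, its inverse $\regfun^{-1}$ exists and is strictly increasing, so $\norm{f_{nm}^{\lambda}}\leq\regfun^{-1}(c)=:r$ for all $m\in\NN$. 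This completes the argument; one only needs to note that $\regfun^{-1}$ is well-defined on the range of $\regfun$, and since $\regfun(\norm{f_{nm}^{\lambda}})\leq c$ and $\regfun(0)\geq 0$, the value $c$ is indeed in the range (or one replaces $c$ by $\max\{c,\regfun(0)\}$ to be safe).

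There is essentially no obstacle here: the lemma is a routine coercivity estimate, structurally identical to the boundedness part of the proof of Lemma \ref{Lm:GenRepr} (where $\Banach$ played the role that $\cap_{m}\Nueral_m$ plays now via the single test element $g$, or rather the element $0$). The only mild point to be careful about is that one cannot use $0$ as the test competitor as in Lemma \ref{Lm:GenRepr}, since $0$ need not belong to every $\Nueral_m$; this is exactly why the hypothesis $\cap_{m\in\NN}\Nueral_m\neq\emptyset$ is imposed, and any fixed element $g$ of this intersection serves the purpose equally well.
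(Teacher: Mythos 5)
Your proposal is correct and is essentially identical to the paper's own proof: fix an element of $\cap_{m\in\NN}\Nueral_m$ as a competitor in Optimization \eqref{eq:RERM-App}, drop the nonnegative term $\risk_n\big(f_{nm}^{\lambda}\big)$, and invert $\regfun$ to obtain a bound independent of $m$. Your write-up is in fact slightly cleaner than the paper's displayed estimate, which inserts an unnecessary $+1$ and writes $\regfun(h)$ where $\regfun\left(\norm{h}\right)$ is meant.
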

\begin{proof}
We take a $h\in\cap_{m\in\NN}\Nueral_m$. Since $f_{nm}^{\lambda}\in\Sset_{n}^{\lambda}\left(\Nueral_m\right)$,
we have
\[
\risk_n\big(f_{nm}^{\lambda}\big)+\lambda\regfun\big(\norm{f_{nm}^{\lambda}}\big)\leq
\risk_n\left(h\right)+\lambda\regfun\left(\norm{h}\right).
\]
Thus,
\[
\norm{f_{nm}^{\lambda}}\leq\regfun^{-1}\left(\frac{\risk_n(h)+1}{\lambda}+\regfun\left(h\right)\right).
\]
\end{proof}

\begin{proof}[{\bf Proof of Theorem \ref{Thm:App}}]
Lemma \ref{Lm:GenRepr} ensures that $\Sset_{n}^{\lambda}\left(\Banach\right)$ is nonempty.
Thus, we take a $g_0\in\Sset_{n}^{\lambda}\left(\Banach\right)$.
The universal approximation guarantees that there exists a sequence $\left(g_{m_k}\right)$ in an increasing order $\left(m_k\right)$ such that
\[
g_{m_k}\in\Nueral_{m_k},\quad\text{for all }k\in\NN,
\]
and
\begin{equation}\label{eq:ThmApp-2}
\lim_{k\to\infty}\norm{g_0-g_{m_k}}=0.
\end{equation}
Since $\loss_n$ is a continuous multi-loss function, $\risk_n$ is weakly* continuous.
Thus, Equation \eqref{eq:ThmApp-2} shows that
\begin{equation}\label{eq:ThmApp-3}
\risk_n\left(g_0\right)+\lambda\regfun\left(\norm{g_0}\right)
=
\lim_{k\to\infty}\risk_n\left(g_{m_k}\right)+\lambda\regfun\left(\norm{g_{m_k}}\right).
\end{equation}

Next, Lemma \ref{Lm:App} ensures that $\Sset_{n}^{\lambda}\left(\Nueral_m\right)$ is nonempty for all $m\in\NN$.
Thus, $\big(f_{nm}^{\lambda}\big)$ is well-defined.
Let $\big(f_{nm_k}^{\lambda}\big)$ be a net with a directed set $\NN$ in its usual order, that is, $k_1\succeq k_2$ if $k_1\geq k_2$. Thus, $\big(f_{nm_k}^{\lambda}\big)$ is a subnet of $\big(f_{nm}^{\lambda}\big)$.
Lemma \ref{Lm:App-bound} ensures that $\big(f_{nm_k}^{\lambda}\big)\subseteq r\Ball_{\Banach}$.
Therefore, the Banach-Alaoglu theorem ensures that there exists a weakly* convergent subnet $\big(f_{nm_{k_{\alpha}}}^{\lambda}\big)$
of $\big(f_{nm_k}^{\lambda}\big)$ to an element $f_0\in r\Ball_{\Banach}$.
Clearly, $\big(f_{nm_{k_{\alpha}}}^{\lambda}\big)$ is also a subnet of $\big(f_{nm}^{\lambda}\big)$ and $\lim_{\alpha}k_{\alpha}=\infty$.
Moreover, we also have
\[
\risk_n\left(f_0\right)+\lambda\regfun\left(\norm{f_0}\right)
\leq
\liminf_{\alpha}\risk_n\big(f_{nm_{k_{\alpha}}}^{\lambda}\big)+\lambda\regfun\big(\norm{f_{nm_{k_{\alpha}}}^{\lambda}}\big).
\]
We will now prove that $f_0\in\Sset_{n}^{\lambda}\left(\Banach\right)$.
Since $g_{m_{k_{\alpha}}}\in\Nueral_{m_{k_{\alpha}}}$, we have
\[
\liminf_{\alpha}\risk_n\big(f_{nm_{k_{\alpha}}}^{\lambda}\big)+\lambda\regfun\big(\norm{f_{nm_{k_{\alpha}}}^{\lambda}}\big)
\leq\liminf_{\alpha}\risk_n\left(g_{m_{k_{\alpha}}}\right)+\lambda\regfun\left(\norm{g_{m_{k_{\alpha}}}}\right).
\]
Thus,
\begin{equation}\label{eq:ThmApp-5-1}
\risk_n\left(f_0\right)+\lambda\regfun\left(\norm{f_0}\right)
\leq
\liminf_{\alpha}\risk_n\left(g_{m_{k_{\alpha}}}\right)+\lambda\regfun\left(\norm{g_{m_{k_{\alpha}}}}\right).
\end{equation}
We conclude from Equations \eqref{eq:ThmApp-3} and \eqref{eq:ThmApp-5-1} that
\[
\risk_n\left(f_0\right)+\lambda\regfun\left(\norm{f_0}\right)
\leq
\risk_n\left(g_0\right)+\lambda\regfun\left(\norm{g_0}\right)
=\inf_{f\in\Banach}\risk_n\left(f\right)+\lambda\regfun\left(\norm{f}\right),
\]
hence that $f_0$ is a minimizer of Optimization \eqref{eq:RERM}.

Since
\[
\risk_n\left(g_0\right)=\lim_{\alpha}\risk_n\left(g_{m_{k_{\alpha}}}\right),
\quad
\regfun\left(\norm{g_0}\right)=\lim_{\alpha}\regfun\left(\norm{g_{m_{k_{\alpha}}}}\right),
\]
and
\begin{equation}\label{eq:ThmApp-9}
\risk_n\left(f_0\right)=\lim_{\alpha}\risk_n\big(f_{nm_{k_{\alpha}}}^{\lambda}\big),
\end{equation}
we have
\[
\lim_{\alpha}\risk_n\left(g_{m_{k_{\alpha}}}\right)-\risk_n\big(f_{nm_{k_{\alpha}}}^{\lambda}\big)
+\lambda\regfun\left(\norm{g_{m_{k_{\alpha}}}}\right)
=
\risk_n\left(g_0\right)-\risk_n\left(f_0\right)+\lambda\regfun\left(\norm{g_0}\right).
\]
It is easy to check that
\[
\risk_n\left(g_0\right)-\risk_n\left(f_0\right)+\lambda\regfun\left(\norm{g_0}\right)
=\lambda\regfun\left(\norm{f_0}\right).
\]
Thus,
\[
\lim_{\alpha}\risk_n\left(g_{m_{k_{\alpha}}}\right)-\risk_n\big(f_{nm_{k_{\alpha}}}^{\lambda}\big)
+\lambda\regfun\left(\norm{g_{m_{k_{\alpha}}}}\right)
=
\lambda\regfun\left(\norm{f_0}\right).
\]
Since $f_{nm_{k_{\alpha}}}\in\Sset_{n}^{\lambda}\left(\Nueral_{m_{k_{\alpha}}}\right)$
and $g_{m_{k_{\alpha}}}\in\Nueral_{m_{k_{\alpha}}}$, we have
\[
\lambda\limsup_{\alpha}\regfun\big(\norm{f_{nm_{k_{\alpha}}}^{\lambda}}\big)\leq
\limsup_{\alpha}\risk_n\left(g_{m_{k_{\alpha}}}\right)-\risk_n\big(f_{nm_{k_{\alpha}}}^{\lambda}\big)
+\lambda\regfun\left(\norm{g_{m_{k_{\alpha}}}}\right).
\]
Thus,
\[
\lambda\limsup_{\alpha}\regfun\big(\norm{f_{nm_{k_{\alpha}}}^{\lambda}}\big)\leq\lambda\regfun\left(\norm{f_0}\right).
\]
It is also easy to check that
\[
\regfun\left(\norm{f_0}\right)\leq
\liminf_{\alpha}\regfun\big(\norm{f_{nm_{k_{\alpha}}}^{\lambda}}\big).
\]
Therefore,
\begin{equation}\label{eq:ThmApp-12}
\regfun\left(\norm{f_0}\right)=\lim_{\alpha}\regfun\big(\norm{f_{nm_{k_{\alpha}}}^{\lambda}}\big).
\end{equation}
Combining Equations \eqref{eq:ThmApp-9} and \eqref{eq:ThmApp-12}, we have
\[
\risk_n\left(f_0\right)+\lambda\regfun\left(\norm{f_0}\right)
=\lim_{\alpha}\risk_n\big(f_{nm_{k_{\alpha}}}^{\lambda}\big)+\lambda\regfun\big(\norm{f_{nm_{k_{\alpha}}}^{\lambda}}\big).
\]
Lemma \ref{Lm:Phi} ensures that
\[
\norm{f_0}=\lim_{\alpha}\norm{f_{nm_{k_{\alpha}}}^{\lambda}}.
\]
Substituting $\big(f_{nm_{k_{\alpha}}}^{\lambda}\big)$ and $f_0$ into $\big(f_{nm_{\alpha}}^{\lambda}\big)$ and $f_{n}^{\lambda}$, the (i) and (ii) hold true.
\end{proof}

\subsection{Proofs of Convergence Theorems}\label{sec:ProofConvThm}

In the beginning of the proof, we assume that Conditions (I) and (II) hold true in this subsection.
This shows that $\loss_n$ is a continuous multi-loss function for any $n\in\NN$.
Thus, Lemma \ref{Lm:GenRepr} ensures that $\Sset_{n}^{\lambda}\left(\Banach\right)$
is nonempty for any $n\in\NN$ and any $\lambda>0$.
In the following analysis, for any given $n,\lambda$, we exclusively choose a fixed $f_{n}^{\lambda}$ from $\Sset_{n}^{\lambda}\left(\Banach\right)$.

\begin{lemma}\label{Lma:L-Cont}
For any $r>0$, $\risk$ is weakly* continuous on $r\Ball_{\Banach}$ and $\risk_n$ uniformly converges to $\risk$
on $r\Ball_{\Banach}$ when $n\to\infty$.
\end{lemma}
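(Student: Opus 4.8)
The plan is to read off both assertions from Proposition~\ref{Pro:EmpRisk}, since Conditions (I) and (II) are tailored to be exactly the hypotheses needed there. First I would note that Condition (II)---namely, $\Funset_{\Data}$ relatively compact in $\Banach_{\ast}$ and $\Loss$ uniformly local Lipschitz continuous---is precisely the hypothesis of Proposition~\ref{Pro:EmpRisk}~(c), so $\{\risk_n:n\in\NN\}$ is weakly* equicontinuous on $r\Ball_{\Banach}$ for every $r>0$; by definition this is the same as equicontinuity of the family in $\Cont(r\Ball_{\Banach})$. Adjoining Condition (I), i.e. the pointwise convergence $\risk_n\to\risk$, puts us under the hypotheses of Proposition~\ref{Pro:EmpRisk}~(d), which gives $\lim_{n\to\infty}\sup_{f\in r\Ball_{\Banach}}\abs{\risk(f)-\risk_n(f)}=0$; that is the claimed uniform convergence on $r\Ball_{\Banach}$.

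For the weak* continuity of $\risk$ on $r\Ball_{\Banach}$, I would use that uniform local Lipschitz continuity of $\Loss$ in particular makes every $\loss_n$ a continuous multi-loss function, so Proposition~\ref{Pro:EmpRisk}~(b) upgrades the weak* lower semicontinuity of $\risk_n$ to full weak* continuity, i.e. $\risk_n\in\Cont(r\Ball_{\Banach})$. Since $r\Ball_{\Banach}$ with the relative weak* topology is a compact Hausdorff space by Banach--Alaoglu, and since the previous step exhibits $\risk$ as the uniform limit on $r\Ball_{\Banach}$ of the continuous functions $\risk_n$, the standard fact that a uniform limit of continuous functions is continuous yields $\risk\in\Cont(r\Ball_{\Banach})$; equivalently, $\risk$ is weakly* continuous on $r\Ball_{\Banach}$.

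If one prefers to avoid invoking the uniform-limit theorem, the same conclusion follows from an $\epsilon/3$ estimate along a net: for any weakly* convergent net $(f_{\alpha})\subseteq r\Ball_{\Banach}$ with limit $f_0\in r\Ball_{\Banach}$, split $\abs{\risk(f_0)-\risk(f_{\alpha})}$ through $\risk_n(f_0)$ and $\risk_n(f_{\alpha})$, use (d) to make the two ``tail'' terms small uniformly on $r\Ball_{\Banach}$ by fixing $n$ large, and then use the weak* equicontinuity from (c) to send the remaining term $\abs{\risk_n(f_0)-\risk_n(f_{\alpha})}$ to $0$ along $\alpha$.

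I do not anticipate a genuine obstacle: the entire analytic content is already encapsulated in Proposition~\ref{Pro:EmpRisk}. The only step needing a word of care is the passage from ``uniformly local Lipschitz'' to ``continuous'', which is what turns weak* lower semicontinuity of each $\risk_n$ into weak* continuity and hence makes the Arzel\'a--Ascoli / uniform-limit reasoning legitimate; after that, proving Lemma~\ref{Lma:L-Cont} is just matching hypotheses.
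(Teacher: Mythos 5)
Your proposal is correct and takes essentially the same route as the paper, whose proof simply re-runs the equicontinuity-plus-pointwise-convergence argument from Proposition \ref{Pro:EmpRisk} (d) to obtain both the uniform convergence and $\risk\in\Cont\left(r\Ball_{\Banach}\right)$. Your only cosmetic variation is to cite (d) directly for the uniform convergence and then get weak* continuity of $\risk$ from the uniform-limit theorem (after correctly observing that uniform local Lipschitz continuity makes each $\loss_n$ continuous, so each $\risk_n$ is weakly* continuous by (b)), with the $\epsilon/3$ net argument you sketch being exactly the paper's implicit argument.
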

\begin{proof}
According to Conditions (I) and (II),
analysis similar to that in the proof of Proposition \ref{Pro:EmpRisk} (d) shows that $\risk\in\Cont\left(r\Ball_{\Banach}\right)$ and
\[
\sup_{f\in r\Ball_{\Banach}}\abs{\risk(f)-\risk_n(f)}\to0,\quad\text{when }n\to\infty.
\]
\end{proof}

\begin{remark}\label{Rm:Gamma-Conv}
According to \cite[Proposition 5.2]{Maso1992}, the uniform convergence guarantees that $\risk_n$ $\Gamma$-converges to $\risk$ on $r\Ball_{\Banach}$ when $n\to\infty$ for any $r>0$.
Some lemmas can be proved using the techniques of $\Gamma$-convergence, such as Lemma \ref{Lma:RERM}.
However, not all convergent results of $f_n^{\lambda}$ can be directly proven by the fundamental theorems of $\Gamma$-convergence.
To circumvent the introduction of multiple concepts in the proof,
the $\Gamma$-convergence is not addressed in this article.
Even though Condition (II$'$) is less stringent compared to Condition (II), we can still prove Lemma \ref{Lma:L-Cont}, when it is substituted for Condition (II$'$).
Roughly speaking, the strong condition of uniform convergence is replaced by the weak condition of pointwise convergence together with the additional condition of linear-functional data and loss functions, which can be checked independent on the original problems.
\end{remark}

Lemmas \ref{Lma:net-conv} and \ref{Lma:inf-limit-ineq} will be often utilized in the following proof.
We first prove Lemma \ref{Lma:net-conv} using Lemma \ref{Lma:L-Cont}.

\begin{lemma}\label{Lma:net-conv}
Let $f_0\in r\Ball_{\Banach}$ and $\left(n_{\alpha},f_{\alpha}\right)\subseteq\NN\times r\Ball_{\Banach}$ be a net
for a $r>0$. Suppose that $\left(n_{\alpha}\right)$ is also a subnet of $\left(n\right)$, where $\left(n\right)$ is a net with the directed set $\NN$ in its usual order.
If $\left(f_{\alpha}\right)$ weakly* converges to $f_0$,
then
\[
\risk\left(f_0\right)
=
\lim_{\alpha}\risk_{n_{\alpha}}\left(f_{\alpha}\right).
\]
\end{lemma}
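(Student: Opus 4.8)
The goal is to show $\risk(f_0)=\lim_\alpha\risk_{n_\alpha}(f_\alpha)$ under the hypotheses that $(f_\alpha)\subseteq r\Ball_\Banach$ weakly* converges to $f_0\in r\Ball_\Banach$, and that $(n_\alpha)$ is a subnet of the natural net $(n)$ on $\NN$. The natural decomposition is to insert the intermediate quantity $\risk(f_\alpha)$ and bound
\[
\abs{\risk(f_0)-\risk_{n_\alpha}(f_\alpha)}
\leq\abs{\risk(f_0)-\risk(f_\alpha)}+\abs{\risk(f_\alpha)-\risk_{n_\alpha}(f_\alpha)}.
\]
The first term goes to zero because $\risk$ is weakly* continuous on $r\Ball_\Banach$ by Lemma \ref{Lma:L-Cont} and $(f_\alpha)$ weakly* converges to $f_0$ inside $r\Ball_\Banach$. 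The second term is handled by the uniform convergence $\sup_{f\in r\Ball_\Banach}\abs{\risk(f)-\risk_n(f)}\to0$ as $n\to\infty$, also from Lemma \ref{Lma:L-Cont}: bound it by $\sup_{f\in r\Ball_\Banach}\abs{\risk(f)-\risk_{n_\alpha}(f)}$, which tends to zero along $(n_\alpha)$.

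First I would fix $\epsilon>0$ and choose, via Lemma \ref{Lma:L-Cont}, an index $N_\epsilon\in\NN$ with $\sup_{f\in r\Ball_\Banach}\abs{\risk(f)-\risk_n(f)}\leq\epsilon/2$ for all $n\geq N_\epsilon$. Since $(n_\alpha)$ is a subnet of the natural net on $\NN$, it is eventually $\geq N_\epsilon$, so there is $\alpha_1$ with $\abs{\risk(f_\alpha)-\risk_{n_\alpha}(f_\alpha)}\leq\epsilon/2$ whenever $\alpha\succeq\alpha_1$ (using that $f_\alpha\in r\Ball_\Banach$). Next, by weak* continuity of $\risk$ on $r\Ball_\Banach$ applied to the weakly* convergent net $(f_\alpha)$, there is $\alpha_2$ with $\abs{\risk(f_0)-\risk(f_\alpha)}\leq\epsilon/2$ whenever $\alpha\succeq\alpha_2$. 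Taking $\alpha_3$ with $\alpha_3\succeq\alpha_1$ and $\alpha_3\succeq\alpha_2$ (which exists since the index set is directed) and summing the two bounds gives $\abs{\risk(f_0)-\risk_{n_\alpha}(f_\alpha)}\leq\epsilon$ for $\alpha\succeq\alpha_3$. Since $\epsilon$ was arbitrary, the claimed limit holds.

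The one point that needs a little care — and which I expect to be the main (though modest) obstacle — is the passage ``$(n_\alpha)$ is a subnet of the natural net on $\NN$, hence eventually $\geq N_\epsilon$''. This is exactly the property that a subnet of the net $n\mapsto n$ on $\NN$ is eventually in any tail $\{n:n\geq N_\epsilon\}$; it follows from the definition of subnet (the cofinal/monotone map condition as in \cite[Definition 2.1.26]{Megginson1998}), but it is worth spelling out, since the whole device of working with subnets of $(n)$ rather than subsequences is precisely what lets uniform convergence in $n$ be combined with weak* convergence in $f$. Everything else is a routine $\epsilon/2$ argument, and no $\Gamma$-convergence machinery is needed, consistent with Remark \ref{Rm:Gamma-Conv}.
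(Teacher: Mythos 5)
Your proposal is correct and follows essentially the same route as the paper's proof: both split the difference into $\abs{\risk(f_0)-\risk(f_\alpha)}$ and $\abs{\risk(f_\alpha)-\risk_{n_\alpha}(f_\alpha)}$, handling the first by the weak* continuity of $\risk$ on $r\Ball_{\Banach}$ and the second by the uniform convergence of $\risk_n$ to $\risk$ on $r\Ball_{\Banach}$, both supplied by Lemma \ref{Lma:L-Cont}, together with the observation that the subnet satisfies $\lim_{\alpha}n_{\alpha}=\infty$. Your explicit $\epsilon/2$ bookkeeping and the remark on the subnet tail property merely spell out what the paper states compactly.
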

\begin{proof}
Since $\lim_{\alpha}n_{\alpha}=\lim_{n}n=\infty$,
Lemma \ref{Lma:L-Cont} ensures that
\begin{equation}\label{eq:net-conv-1}
\lim_{\alpha}\abs{\risk\left(f_0\right)-\risk\left(f_{\alpha}\right)}=0,
\end{equation}
and
\begin{equation}\label{eq:net-conv-2}
\lim_{\alpha}\sup_{\beta\in\left(\alpha\right)}
\abs{\risk\left(f_{\beta}\right)-\risk_{n_{\alpha}}\left(f_{\beta}\right)}=0.
\end{equation}
We conclude from Equations \eqref{eq:net-conv-1} and \eqref{eq:net-conv-2} that
\[
\risk\left(f_0\right)=
\lim_{\alpha}\risk\left(f_{\alpha}\right)=
\lim_{\alpha}\risk_{n_{\alpha}}\left(f_{\alpha}\right).
\]
\end{proof}

\begin{lemma}\label{Lma:inf-limit-ineq}
Let $T\in\RR^{\Banach}$ and $\left(T_{\alpha},f_{\alpha}\right)\subseteq\RR^{\Banach}\times\Banach$ be a net. Suppose that
$T_{\alpha}$ converges pointwise to $T$ on $\Banach$,
$f_{\alpha}$ is a minimizer of $T_{\alpha}$ over $\Banach$ for all $\alpha$,
and the limitation of $T_{\alpha}\left(f_{\alpha}\right)$ exists.
If
\[
\inf_{f\in\Banach}T(f)
\leq\lim_{\alpha}T_{\alpha}\left(f_{\alpha}\right),
\]
then
\[
\inf_{f\in\Banach}T(f)
=\lim_{\alpha}T_{\alpha}\left(f_{\alpha}\right).
\]
\end{lemma}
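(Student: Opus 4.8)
The plan is to establish the reverse inequality $\lim_{\alpha}T_{\alpha}\left(f_{\alpha}\right)\leq\inf_{f\in\Banach}T(f)$, after which the hypothesized inequality immediately forces equality. The only tools needed are the minimization property of $f_{\alpha}$, the pointwise convergence $T_{\alpha}\to T$, and the assumed existence of $\lim_{\alpha}T_{\alpha}\left(f_{\alpha}\right)$.

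First I would fix an arbitrary test point $g\in\Banach$. Since $f_{\alpha}$ is a minimizer of $T_{\alpha}$ over $\Banach$, we have the elementary bound $T_{\alpha}\left(f_{\alpha}\right)\leq T_{\alpha}(g)$ for every index $\alpha$. Next I would pass to the limit along the net: the left-hand side converges by hypothesis, and the right-hand side converges to $T(g)$ by the pointwise convergence of $T_{\alpha}$ to $T$ at the single point $g$. Hence $\lim_{\alpha}T_{\alpha}\left(f_{\alpha}\right)\leq T(g)$.

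Since $g\in\Banach$ was arbitrary, taking the infimum over $g$ yields $\lim_{\alpha}T_{\alpha}\left(f_{\alpha}\right)\leq\inf_{f\in\Banach}T(f)$. Combining this with the assumed inequality $\inf_{f\in\Banach}T(f)\leq\lim_{\alpha}T_{\alpha}\left(f_{\alpha}\right)$ gives the claimed equality, completing the proof.

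There is essentially no obstacle here; the lemma is a purely order-theoretic fact about nets. The one point to be careful about is that one must not claim $\lim_{\alpha}T_{\alpha}(g)=T(g)$ requires anything beyond pointwise convergence at $g$ — in particular no uniform or $\Gamma$-convergence is invoked — and that the inequality $T_{\alpha}(f_{\alpha})\leq T_{\alpha}(g)$ is preserved under taking limits precisely because both sides are known to converge. No weak* topology, lower semicontinuity, or compactness enters this particular lemma; those ingredients will be supplied by the surrounding arguments (e.g.\ Lemma \ref{Lma:net-conv}) when this lemma is applied.
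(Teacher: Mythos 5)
Your proof is correct. The paper establishes the same reverse inequality $\lim_{\alpha}T_{\alpha}\left(f_{\alpha}\right)\leq\inf_{f\in\Banach}T(f)$, but by contradiction: assuming $\inf_{f\in\Banach}T(f)<\tau:=\lim_{\alpha}T_{\alpha}\left(f_{\alpha}\right)$, it fixes $\rho>0$ with $\inf T<\tau-3\rho$, chooses a near-minimizer $g_{\rho}$ with $T\left(g_{\rho}\right)<\tau-2\rho$, and then uses pointwise convergence at $g_{\rho}$ together with $T_{\alpha}\left(f_{\alpha}\right)\to\tau$ to produce an index $\beta$ with $T_{\beta}\left(g_{\rho}\right)<T_{\beta}\left(f_{\beta}\right)$, contradicting the minimality of $f_{\beta}$. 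Your direct route --- pass to the limit in $T_{\alpha}\left(f_{\alpha}\right)\leq T_{\alpha}(g)$ for an arbitrary $g$ and then take the infimum over $g$ --- uses exactly the same two ingredients (the minimality inequality and pointwise convergence at a single test point), but dispenses with the $\rho$-bookkeeping and the contradiction, so it is shorter; the paper's argument is essentially your estimate unwound into a contradiction. Your side remarks are also accurate: the order-limit step for nets of reals (if $a_{\alpha}\leq b_{\alpha}$ for all $\alpha$ and both nets converge, then the limits satisfy the same inequality) is standard, and no weak* topology, semicontinuity, or compactness is needed in this lemma.
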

\begin{proof}
We will complete the proof by contradiction.
Let
\begin{equation}\label{eq:RERM-01}
\tau:=\lim_{\alpha}T_{\alpha}\left(f_{\alpha}\right).
\end{equation}
We assume that
\[
\inf_{f\in\Banach}T(f)<\tau.
\]
Thus, there exists a $\rho>0$ such that
\[
\inf_{f\in\Banach}T(f)<\tau-3\rho.
\]
Moreover, there exists a $g_{\rho}\in\Banach$ such that
\[
T\left(g_{\rho}\right)
<\inf_{f\in\Banach}T(f)+\rho.
\]
Therefore,
\[
T\left(g_{\rho}\right)<\tau-2\rho.
\]
This shows that
\[
\lim_{\alpha}T_{\alpha}\left(g_{\rho}\right)
=T\left(g_{\rho}\right)<\tau-2\rho,
\]
and finally that
there exists a $\gamma_1$ such that
\begin{equation}\label{eq:RERM-04}
T_{\alpha}\left(g_{\rho}\right)<\tau-2\rho,
\quad\text{when }\alpha\succeq\gamma_1.
\end{equation}
Next, Equation \eqref{eq:RERM-01} shows that there exists a $\gamma_2$ such that
\begin{equation}\label{eq:RERM-05}
\tau-\rho<
T_{\alpha}\left(f_{\alpha}\right),
\quad\text{when }\alpha\succeq\gamma_2.
\end{equation}
We take a $\beta$ such that $\beta\succeq\gamma_1$ and $\beta\succeq\gamma_2$.
We conclude from Equations \eqref{eq:RERM-04} and \eqref{eq:RERM-05} that
\[
T_{\beta}\left(g_{\rho}\right)
<T_{\beta}\left(f_{\beta}\right)-\rho,
\]
hence that
\[
T_{\beta}\left(g_{\rho}\right)<T_{\beta}\left(f_{\beta}\right)
=\inf_{f\in\Banach}T_{\beta}\left(f\right).
\]
This contradicts the fact of the infimum; therefore we must reject the assumption of the strict inequality.
Thus, the equality holds true.
\end{proof}

The proof of Theorem \ref{Thm:ConvThm} will be divided into three steps including
Lemmas \ref{Lma:RERM}, \ref{Lma:RTRM-lambda}, and \ref{Lma:ConvThm}.
Thus, their notations are consistent.
In the proofs of Lemmas
\ref{Lma:RERM-bound} and \ref{Lma:RERM},
we fix $\lambda>0$.

\begin{lemma}\label{Lma:RERM-bound}
There exists a $N_{\lambda}\in\NN$ such that
$\norm{f_{k}^{\lambda}}\leq r_{\lambda}$ for all $k\geq N_{\lambda}$, where
$r_{\lambda}:=\regfun^{-1}\left(\left(\risk(0)+1\right)/\lambda+\regfun\left(0\right)\right)$.
\end{lemma}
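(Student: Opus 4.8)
The plan is to exploit the minimizing property of $f_{k}^{\lambda}$ in Optimization \eqref{eq:RERM} together with the pointwise convergence of $\risk_k$ at the single point $0\in\Banach$. First I would compare $f_{k}^{\lambda}$ against the competitor $f=0$: since $f_{k}^{\lambda}\in\Sset_{k}^{\lambda}\left(\Banach\right)$, we have
$\risk_k\big(f_{k}^{\lambda}\big)+\lambda\regfun\big(\norm{f_{k}^{\lambda}}\big)\leq\risk_k(0)+\lambda\regfun(0)$.
Dropping the nonnegative term $\risk_k\big(f_{k}^{\lambda}\big)\geq0$ on the left gives $\lambda\regfun\big(\norm{f_{k}^{\lambda}}\big)\leq\risk_k(0)+\lambda\regfun(0)$, hence $\regfun\big(\norm{f_{k}^{\lambda}}\big)\leq\risk_k(0)/\lambda+\regfun(0)$ for every $k\in\NN$.

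Second, I would invoke Condition (I): since $\risk_k$ converges pointwise to $\risk$ on $\Banach$, in particular $\risk_k(0)\to\risk(0)$ as $k\to\infty$, so there exists $N_{\lambda}\in\NN$ such that $\risk_k(0)\leq\risk(0)+1$ for all $k\geq N_{\lambda}$. Combining this with the previous display yields $\regfun\big(\norm{f_{k}^{\lambda}}\big)\leq\big(\risk(0)+1\big)/\lambda+\regfun(0)$ for all such $k$.

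Finally, I would apply $\regfun^{-1}$. As noted before Lemma \ref{Lm:Phi}, $\regfun$ is strictly increasing and continuous, so $\regfun^{-1}$ exists and is strictly increasing; applying it to both sides of the inequality above gives $\norm{f_{k}^{\lambda}}\leq\regfun^{-1}\big(\big(\risk(0)+1\big)/\lambda+\regfun(0)\big)=r_{\lambda}$ for all $k\geq N_{\lambda}$, which is the claim.

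There is essentially no serious obstacle here; the one point that merits care is that the bound must be \emph{uniform} in $k$, which is precisely why we use the pointwise convergence $\risk_k(0)\to\risk(0)$ rather than the mere finiteness of each $\risk_k(0)$: the convergence is what produces a single threshold $N_{\lambda}$ beyond which the right-hand side is dominated by the $k$-independent quantity $\risk(0)+1$. (Note this is the analogue of the boundedness estimate used in the proof of Lemma \ref{Lm:GenRepr}, with $\risk_n(0)$ there replaced here by the eventually valid uniform bound $\risk(0)+1$.)
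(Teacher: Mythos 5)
Your argument is correct and coincides with the paper's own proof: compare the minimizer against the competitor $f=0$, drop the nonnegative empirical risk term, use the pointwise convergence $\risk_k(0)\to\risk(0)$ from Condition (I) to get the uniform bound $\risk_k(0)\leq\risk(0)+1$ for $k\geq N_{\lambda}$, and apply the strictly increasing $\regfun^{-1}$. No gaps.
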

\begin{proof}
Since $\risk_{k}(0)\to\risk(0)$ when $k\to\infty$, there exists a $N_{\lambda}\in\NN$ such that $\risk_{k}(0)\leq\risk(0)+1$ when $k\geq N_{\lambda}$.
We conclude from $f_{k}^{\lambda}\in\Sset_{k}^{\lambda}\left(\Banach\right)$ that
\[
\lambda\regfun\big(\norm{f_{k}^{\lambda}}\big)\leq
\risk_{k}\big(f_{k}^{\lambda}\big)+\lambda\regfun\big(\norm{f_{k}^{\lambda}}\big)
\leq\risk_{k}(0)+\lambda\regfun\left(0\right),
\]
hence that
\[
\norm{f_{k}^{\lambda}}\leq\regfun^{-1}\left(\frac{\risk(0)+1}{\lambda}+\regfun\left(0\right)\right),
\]
and finally that $f_{k}^{\lambda}\in r_{\lambda}\Ball_{\Banach}$ for all $k\geq N_{\lambda}$.
\end{proof}

Let $\big(f_{k}^{\lambda}\big)$ be a net with the directed set
$\left\{k\in\NN:k\geq N_{\lambda}\right\}$ in its usual order,
that is, $k_{1}\succeq k_{2}$ if $k_{1}\geq k_{2}$.
Thus, $\lim_{k}k=\infty$.
Obviously, the directed set of $\big(f_{k}^{\lambda}\big)$ is dependent on the fixed $\lambda$.
Let $\Sset^{\lambda}\left(\Banach\right)$ be a collection of all minimizers of the optimization
\begin{equation}\label{eq:RTRM}
\inf_{f\in\Banach}\risk(f)+\lambda\regfun\left(\norm{f}\right).
\end{equation}

\begin{lemma}\label{Lma:RERM}
There exists a weakly* convergent subnet $\big(f_{k_{\alpha}}^{\lambda}\big)$
of $\big(f_{k}^{\lambda}\big)$ to an element $f^{\lambda}\in\Sset^{\lambda}\left(\Banach\right)$
such that
\[
\text{(i) }\risk\big(f^{\lambda}\big)=\lim_{\alpha}\risk\big(f_{k_{\alpha}}^{\lambda}\big),\quad
\text{(ii) }\norm{f^{\lambda}}=\lim_{\alpha}\norm{f_{k_{\alpha}}^{\lambda}}.
\]
\end{lemma}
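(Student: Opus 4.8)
The plan is to run a standard direct-method / $\Gamma$-convergence-style argument, but carried out with nets and pushed onto the three auxiliary lemmas already available: Lemma~\ref{Lma:L-Cont} (weak* continuity of $\risk$ on balls and uniform convergence $\risk_n\to\risk$ there), Lemma~\ref{Lma:net-conv}, and Lemma~\ref{Lma:inf-limit-ineq}. Throughout write $T:=\risk+\lambda\regfun\left(\norm{\cdot}\right)$ and $T_k:=\risk_k+\lambda\regfun\left(\norm{\cdot}\right)$, so $f_k^\lambda$ is a minimizer of $T_k$ over $\Banach$ and $\Sset^\lambda\left(\Banach\right)$ is exactly the minimizer set of $T$; note that $\regfun\left(\norm{\cdot}\right)$ is weakly* lower semi-continuous because $\norm{\cdot}$ is and $\regfun$ is continuous and strictly increasing.

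First I would invoke Lemma~\ref{Lma:RERM-bound} to see that the net $\left(f_k^\lambda\right)$ lies inside $r_\lambda\Ball_\Banach$, which is weakly* compact by the Banach--Alaoglu theorem; hence there is a weakly* convergent subnet $\left(f_{k_\alpha}^\lambda\right)$ with limit $f^\lambda\in r_\lambda\Ball_\Banach$. Since $\left\{k\ge N_\lambda\right\}$ is cofinal in $\NN$, the index net $\left(k_\alpha\right)$ is a subnet of $\left(n\right)$, so $\lim_\alpha k_\alpha=\infty$ and $\risk_{k_\alpha}$ converges pointwise to $\risk$. Because $f_{k_\alpha}^\lambda$ minimizes $T_{k_\alpha}$ we get $T_{k_\alpha}\left(f_{k_\alpha}^\lambda\right)\le T_{k_\alpha}(0)=\risk_{k_\alpha}(0)+\lambda\regfun(0)\to\risk(0)+\lambda\regfun(0)$, so the real net $\left(T_{k_\alpha}\left(f_{k_\alpha}^\lambda\right)\right)$ is bounded; passing to a further subnet (which keeps weak* convergence to $f^\lambda$ and the subnet relation to $\left(n\right)$) I may assume $\tau:=\lim_\alpha T_{k_\alpha}\left(f_{k_\alpha}^\lambda\right)$ exists.

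Next I would prove $f^\lambda\in\Sset^\lambda\left(\Banach\right)$. Lemma~\ref{Lma:net-conv} gives $\risk\left(f^\lambda\right)=\lim_\alpha\risk_{k_\alpha}\left(f_{k_\alpha}^\lambda\right)$, and weak* lower semi-continuity of $\regfun\left(\norm{\cdot}\right)$ gives $\regfun\left(\norm{f^\lambda}\right)\le\liminf_\alpha\regfun\left(\norm{f_{k_\alpha}^\lambda}\right)$; adding these, $T\left(f^\lambda\right)\le\liminf_\alpha T_{k_\alpha}\left(f_{k_\alpha}^\lambda\right)=\tau$, so in particular $\inf_{f\in\Banach}T(f)\le\tau$. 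Now Lemma~\ref{Lma:inf-limit-ineq}, applied with $T_\alpha:=T_{k_\alpha}$ (which converges pointwise to $T$) and the minimizers $f_{k_\alpha}^\lambda$, yields $\inf_{f\in\Banach}T(f)=\tau$; combined with the previous line, $\inf_{f\in\Banach}T(f)\le T\left(f^\lambda\right)\le\tau=\inf_{f\in\Banach}T(f)$, hence $T\left(f^\lambda\right)=\inf_{f\in\Banach}T(f)$, i.e. $f^\lambda\in\Sset^\lambda\left(\Banach\right)$.

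Finally, (i) follows immediately from the weak* continuity of $\risk$ on $r_\lambda\Ball_\Banach$ (Lemma~\ref{Lma:L-Cont}), since $f_{k_\alpha}^\lambda\to f^\lambda$ weakly* inside that ball; and for (ii) I would subtract the identity $\risk\left(f^\lambda\right)=\lim_\alpha\risk_{k_\alpha}\left(f_{k_\alpha}^\lambda\right)$ from $T\left(f^\lambda\right)=\tau=\lim_\alpha T_{k_\alpha}\left(f_{k_\alpha}^\lambda\right)$ to obtain $\regfun\left(\norm{f^\lambda}\right)=\lim_\alpha\regfun\left(\norm{f_{k_\alpha}^\lambda}\right)$, whence $\norm{f^\lambda}=\lim_\alpha\norm{f_{k_\alpha}^\lambda}$ by Lemma~\ref{Lm:Phi}. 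I do not expect a genuine obstacle here: all the analysis is already encapsulated in Lemmas~\ref{Lma:L-Cont}, \ref{Lma:net-conv}, and~\ref{Lma:inf-limit-ineq}, and the only thing to watch is the bookkeeping of subnets --- in particular verifying that the extra subnet extracted to make $\tau$ exist still satisfies the hypotheses of those lemmas.
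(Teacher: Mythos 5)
Your proof is correct and follows essentially the same route as the paper's: Lemma \ref{Lma:RERM-bound} plus the Banach--Alaoglu theorem for the weakly* convergent subnet, Lemma \ref{Lma:net-conv} together with the weak* lower semi-continuity of $\regfun\left(\norm{\cdot}\right)$ to control the two terms, Lemma \ref{Lma:inf-limit-ineq} for the minimality of $f^{\lambda}$, and Lemma \ref{Lm:Phi} for (ii). The only difference is bookkeeping: you pass to a further subnet so that $\tau=\lim_{\alpha}\risk_{k_{\alpha}}\big(f_{k_{\alpha}}^{\lambda}\big)+\lambda\regfun\big(\norm{f_{k_{\alpha}}^{\lambda}}\big)$ exists and then recover (ii) by subtraction, whereas the paper obtains this limit (and (ii)) first by testing the minimality of $f_{k_{\alpha}}^{\lambda}$ against $f^{\lambda}$ itself and using Condition (I) at $f^{\lambda}$; both versions are valid and yield a subnet with all the stated properties.
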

\begin{proof}
Lemma \ref{Lma:RERM-bound} ensures that
$\big(f_{k}^{\lambda}\big)\subseteq r_{\lambda}\Ball_{\Banach}$.
Thus, the Banach-Alaoglu theorem ensures that there exists a weakly* convergent subnet $\big(f_{k_{\alpha}}^{\lambda}\big)$
of $\big(f_{k}^{\lambda}\big)$ to an element $f^{\lambda}\in r_{\lambda}\Ball_{\Banach}$.
This shows that $\lim_{\alpha}k_{\alpha}=\lim_{k}k=\infty$.

Next, we prove the (i) and (ii).
Lemma \ref{Lma:L-Cont} ensures that the (i) holds true.
Since $x_{k_{\alpha}}^{\lambda}\in\Sset^{\lambda}_{k_{\alpha}}\left(\Banach\right)$, we have
\[
\risk_{k_{\alpha}}\big(f_{k_{\alpha}}^{\lambda}\big)
+\lambda\regfun\big(\norm{f_{k_{\alpha}}^{\lambda}}\big)
\leq\risk_{k_{\alpha}}
\big(f^{\lambda}\big)+\lambda\regfun\big(\norm{f^{\lambda}}\big).
\]
This shows that
\begin{equation}\label{eq:RERM-2}
\limsup_{\alpha}\risk_{k_{\alpha}}
\big(f_{k_{\alpha}}^{\lambda}\big)+\lambda\regfun\big(\norm{f_{k_{\alpha}}^{\lambda}}\big)
\leq\limsup_{\alpha}\risk_{k_{\alpha}}\big(f^{\lambda}\big)
+\lambda\regfun\big(\norm{f^{\lambda}}\big).
\end{equation}
Substituting $f_0$ and $\left(f_{\alpha}\right)$ into $f^{\lambda}$ and $\big(f^{\lambda}_{k_{\alpha}}\big)$,
Lemma \ref{Lma:net-conv} ensures that
\begin{equation}\label{eq:RERM-3}
\risk\big(f^{\lambda}\big)
=\lim_{\alpha}\risk_{k_{\alpha}}\big(f^{\lambda}_{k_{\alpha}}\big).
\end{equation}
Moreover, Condition (I) shows that
\begin{equation}\label{eq:RERM-4}
\risk\big(f^{\lambda}\big)
=\lim_{\alpha}\risk_{k_{\alpha}}\big(f^{\lambda}\big).
\end{equation}
Subtracting Equations \eqref{eq:RERM-3} and \eqref{eq:RERM-4} from Equation \eqref{eq:RERM-2}, we have
\[
\limsup_{\alpha}\regfun\big(\norm{f_{k_{\alpha}}^{\lambda}}\big)
\leq\regfun\big(\norm{f^{\lambda}}\big).
\]
It is easy to check that
\[
\regfun\big(\norm{f^{\lambda}}\big)
\leq\liminf_{\alpha}\regfun\big(\norm{x_{k_{\alpha}}^{\lambda}}\big).
\]
Thus,
\begin{equation}\label{eq:RERM-7}
\regfun\big(\norm{f^{\lambda}}\big)
=
\lim_{\alpha}\regfun\big(\norm{f_{k_{\alpha}}^{\lambda}}\big).
\end{equation}
According to Lemma \ref{Lm:Phi}, the (ii) holds true.

Finally, we prove that $f^{\lambda}\in\Sset^{\lambda}\left(\Banach\right)$.
Combining Equations \eqref{eq:RERM-3} and \eqref{eq:RERM-7}, we have
\[
\inf_{f\in\Banach}\risk\left(f\right)+\lambda\regfun\left(\norm{f}\right)
\leq\risk\big(f^{\lambda}\big)+\lambda\regfun\big(\norm{f^{\lambda}}\big)
=\lim_{\alpha}\risk_{k_{\alpha}}
\big(f_{k_{\alpha}}^{\lambda}\big)
+\lambda\regfun\big(\norm{f_{k_{\alpha}}^{\lambda}}\big).
\]
Therefore, Lemma \ref{Lma:inf-limit-ineq} ensures that
\[
\inf_{f\in\Banach}\risk\left(f\right)+\lambda\regfun\left(\norm{f}\right)
=
\lim_{\alpha}\risk_{k_{\alpha}}
\big(f_{k_{\alpha}}^{\lambda}\big)
+\lambda\regfun\big(\norm{f_{k_{\alpha}}^{\lambda}}\big)
=
\risk\big(f^{\lambda}\big)+\lambda\regfun\big(\norm{f^{\lambda}}\big),
\]
on substituting $T$, $T_{\alpha}$, and $f_{\alpha}$ into
$\risk+\lambda\regfun\left(\norm{\cdot}\right)$, $\risk_{k_{\alpha}}+\lambda\regfun\left(\norm{\cdot}\right)$, and $f_{k_{\alpha}}^{\lambda}$.
\end{proof}

Let the function
\[
\AffLambda(\lambda):=\inf_{f\in\Banach}\risk\left(f\right)+\lambda\regfun\left(\norm{f}\right),
\quad\text{for }\lambda\in[0,\infty).
\]
Thus, $\AffLambda(0)=\inf_{f\in\Banach}\risk\left(f\right)$.
Since Lemma \ref{Lma:RERM} ensures that $f^{\lambda}\in\Sset^{\lambda}\left(\Banach\right)$ for all $\lambda>0$, we have
\begin{equation}\label{eq:nu-risk}
\AffLambda(\lambda)=\risk\big(f^{\lambda}\big)+\lambda\regfun\big(\norm{f^{\lambda}}\big),
\quad\text{for all }\lambda>0.
\end{equation}

\begin{lemma}\label{Lma:A}
The $\AffLambda$ is concave, continuous, and increasing.
\end{lemma}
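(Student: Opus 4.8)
The plan is to recognize $\AffLambda$ as the lower envelope of a family of affine functions of $\lambda$, deduce concavity at once, obtain monotonicity from the sign of $\regfun$, get continuity on $(0,\infty)$ for free from concavity, and treat only the endpoint $\lambda=0$ by hand.

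First I would record that $\AffLambda$ is finite on $[0,\infty)$: since $\risk\ge 0$ and $\Sset^{0}(\Banach)$ is nonempty, taking any $f^{0}\in\Sset^{0}(\Banach)$ gives $0\le\AffLambda(\lambda)\le\risk(f^{0})+\lambda\regfun(\norm{f^{0}})<\infty$ for every $\lambda\ge0$. For concavity, observe that for each fixed $f\in\Banach$ the map $\lambda\mapsto\risk(f)+\lambda\regfun(\norm{f})$ is affine, hence concave, and
\[
\AffLambda(\lambda)=\inf_{f\in\Banach}\bigl(\risk(f)+\lambda\regfun(\norm{f})\bigr)
\]
is a pointwise infimum of such functions; an infimum of concave functions is concave, so $\AffLambda$ is concave on $[0,\infty)$. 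For monotonicity, since $\regfun(\norm{f})\ge 0$ for every $f$, the inequality $\risk(f)+\lambda_{1}\regfun(\norm{f})\le\risk(f)+\lambda_{2}\regfun(\norm{f})$ holds for all $f$ whenever $0\le\lambda_{1}\le\lambda_{2}$, and passing to the infimum over $f$ yields $\AffLambda(\lambda_{1})\le\AffLambda(\lambda_{2})$; thus $\AffLambda$ is increasing. A finite concave function on the open interval $(0,\infty)$ is automatically continuous there, so the only point left to verify is continuity at $\lambda=0$.

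The main obstacle is exactly this endpoint, because a concave function may have a downward jump at the boundary of its domain and this must be ruled out. Since $\AffLambda$ is increasing, $\lim_{\lambda\to0^{+}}\AffLambda(\lambda)$ exists, and the bound $\AffLambda(\lambda)\ge\AffLambda(0)=\inf_{f\in\Banach}\risk(f)$ gives $\lim_{\lambda\to0^{+}}\AffLambda(\lambda)\ge\AffLambda(0)$. For the reverse inequality I would fix $\epsilon>0$, choose $g_{\epsilon}\in\Banach$ with $\risk(g_{\epsilon})<\AffLambda(0)+\epsilon$, and estimate $\AffLambda(\lambda)\le\risk(g_{\epsilon})+\lambda\regfun(\norm{g_{\epsilon}})<\AffLambda(0)+\epsilon+\lambda\regfun(\norm{g_{\epsilon}})$, whence $\limsup_{\lambda\to0^{+}}\AffLambda(\lambda)\le\AffLambda(0)+\epsilon$; letting $\epsilon\to0$ gives $\lim_{\lambda\to0^{+}}\AffLambda(\lambda)=\AffLambda(0)$. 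Combining the two limit computations shows $\AffLambda$ is continuous at $0$ as well, and the proof is complete.
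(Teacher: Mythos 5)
Your proof is correct and follows essentially the same route as the paper, which also writes $\AffLambda=\inf_{f\in\Banach}\afflambda_f$ with $\afflambda_f(\lambda)=\risk(f)+\lambda\regfun(\norm{f})$ affine and increasing, and reads off concavity, monotonicity, and continuity from that. Your only addition is the explicit $\epsilon$-argument for continuity at the endpoint $\lambda=0$, a detail the paper's one-line proof leaves implicit (it also follows from upper semicontinuity of an infimum of continuous functions combined with monotonicity), so this is a welcome clarification rather than a different method.
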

\begin{proof}
Let $\afflambda_{f}(\lambda):=\risk\left(f\right)+\lambda\regfun\left(\norm{f}\right)$ for $\lambda\geq0$.
Thus, $\AffLambda=\inf_{f\in\Banach}\afflambda_{f}$.
Since $\afflambda_{f}$ is affine linear and increasing for any fixed $f\in\Banach$,
$\inf_{f\in\Banach}\afflambda_{f}$ is concave, continuous, and increasing.
\end{proof}

\begin{lemma}\label{Lma:RTRM-bound}
If $\Sset^{0}\left(\Banach\right)$ is nonempty, then
for any $\tilde{f}^0\in\Sset^0\left(\Banach\right)$, we have
\[
\norm{f^{\lambda}}\leq\norm{\tilde{f}^0},\quad\text{for all }\lambda>0.
\]
\end{lemma}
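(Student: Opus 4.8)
The plan is to run the standard comparison argument between the regularized minimizer $f^{\lambda}$ and any exact minimizer $\tilde f^0$. The two facts I will play off against each other are: (a) $\tilde f^0 \in \Sset^0(\Banach)$, so $\risk(\tilde f^0) = \inf_{f\in\Banach}\risk(f) \le \risk(f^{\lambda})$; and (b) by Lemma \ref{Lma:RERM}, $f^{\lambda} \in \Sset^{\lambda}(\Banach)$, so $f^{\lambda}$ minimizes the functional in Optimization \eqref{eq:RTRM}, which gives $\risk(f^{\lambda}) + \lambda\regfun(\norm{f^{\lambda}}) \le \risk(\tilde f^0) + \lambda\regfun(\norm{\tilde f^0})$.

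Chaining these together, I would write
\[
\risk(\tilde f^0) + \lambda\regfun(\norm{f^{\lambda}})
\le \risk(f^{\lambda}) + \lambda\regfun(\norm{f^{\lambda}})
\le \risk(\tilde f^0) + \lambda\regfun(\norm{\tilde f^0}),
\]
where the first inequality is (a) and the second is (b). Cancelling the finite quantity $\risk(\tilde f^0)$ from both ends yields $\lambda\regfun(\norm{f^{\lambda}}) \le \lambda\regfun(\norm{\tilde f^0})$, and dividing by $\lambda > 0$ gives $\regfun(\norm{f^{\lambda}}) \le \regfun(\norm{\tilde f^0})$.

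Finally I would invoke the strict monotonicity of $\regfun$ (recorded just before Lemma \ref{Lm:Phi}, namely $\regfun(r_1) < \regfun(r_2)$ iff $r_1 < r_2$): from $\regfun(\norm{f^{\lambda}}) \le \regfun(\norm{\tilde f^0})$ we conclude $\norm{f^{\lambda}} \le \norm{\tilde f^0}$, which is the claim, valid for every $\lambda>0$. There is no real obstacle here; the only points requiring care are that all quantities in the chained inequalities are finite (which holds since $\risk$ is $[0,\infty)$-valued, $\regfun$ is $[0,\infty)$-valued, and $\Sset^{\lambda}(\Banach), \Sset^0(\Banach)$ are nonempty so the relevant infima are attained) so that the cancellation of $\risk(\tilde f^0)$ is legitimate.
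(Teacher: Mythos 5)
Your proposal is correct and uses exactly the same two facts as the paper — minimality of $\tilde f^0$ for $\risk$ and minimality of $f^{\lambda}$ for $\risk+\lambda\regfun(\norm{\cdot})$ — with the only difference being that you chain the inequalities directly and cancel, whereas the paper phrases it as a contradiction from assuming $\norm{\tilde f^0}<\norm{f^{\lambda}}$. This is essentially the same argument, and your attention to finiteness for the cancellation is fine.
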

\begin{proof}
We take any $\tilde{f}^0\in\Sset^0\left(\Banach\right)$.
Assume on the contrary that there exist a $\lambda>0$ such that
\[
\norm{\tilde{f}^0}<\norm{f^{\lambda}}.
\]
Since $\risk\big(\tilde{f}^0\big)\leq\risk\big(f^{\lambda}\big)$
and $\regfun\big(\norm{\tilde{f}^0}\big)<\regfun\big(\norm{f^{\lambda}}\big)$,
we have
\[
\risk\big(\tilde{f}^0\big)+\lambda\regfun\big(\norm{\tilde{f}^0}\big)
<\risk\big(f^{\lambda}\big)+\lambda\regfun\big(\norm{f^{\lambda}}\big).
\]
This contradicts the fact that $f^{\lambda}\in\Sset^{\lambda}\left(\Banach\right)$ and
therefore we must reject the assumption.
\end{proof}

Let $\big(f^{\lambda}\big)$ be a net with the directed set $\RR_+$ in its inverse order,
that is, $\lambda_1\succeq\lambda_2$ if $\lambda_1\leq\lambda_2$.
Thus, $\lim_{\lambda}\lambda=0$.

\begin{lemma}\label{Lma:RTRM-lambda}
There exists a weakly* convergent subnet $\big(f^{\lambda_\alpha}\big)$
of $\big(f^{\lambda}\big)$ to an element $f_0\in\Sset^0\left(\Banach\right)$
such that
\[
\text{(i) }\risk\left(f_0\right)=\lim_{\alpha}\risk\big(f^{\lambda_\alpha}\big),\quad
\text{(ii) }\norm{f_0}=\lim_{\alpha}\norm{f^{\lambda_\alpha}}.
\]
\end{lemma}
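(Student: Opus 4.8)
The plan is to combine the uniform bound on $\left(f^{\lambda}\right)$ coming from Lemma~\ref{Lma:RTRM-bound} with the weak* compactness of balls and the weak* continuity of $\risk$ on bounded sets. First I would fix any $\tilde{f}^{0}\in\Sset^{0}\left(\Banach\right)$ and set $r:=\norm{\tilde{f}^{0}}$. Lemma~\ref{Lma:RTRM-bound} then gives $\norm{f^{\lambda}}\leq r$ for every $\lambda>0$, so the net $\left(f^{\lambda}\right)$ lies in $r\Ball_{\Banach}$. Since $r\Ball_{\Banach}$ is weakly* compact by the Banach--Alaoglu theorem, I would extract a weakly* convergent subnet $\left(f^{\lambda_{\alpha}}\right)$ of $\left(f^{\lambda}\right)$ with limit $f_{0}\in r\Ball_{\Banach}$; because $\left(f^{\lambda}\right)$ is indexed by $\RR_{+}$ in its inverse natural order, every subnet still satisfies $\lim_{\alpha}\lambda_{\alpha}=0$.

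Next I would prove (i) and the membership $f_{0}\in\Sset^{0}\left(\Banach\right)$. Statement (i) is immediate: by Lemma~\ref{Lma:L-Cont}, $\risk$ is weakly* continuous on $r\Ball_{\Banach}$, so $\risk\left(f_{0}\right)=\lim_{\alpha}\risk\left(f^{\lambda_{\alpha}}\right)$. For the membership, recall from Equation~\eqref{eq:nu-risk} that $\AffLambda(\lambda)=\risk\left(f^{\lambda}\right)+\lambda\regfun\left(\norm{f^{\lambda}}\right)$, hence $\risk\left(f^{\lambda_{\alpha}}\right)\leq\AffLambda\left(\lambda_{\alpha}\right)$ since $\regfun\geq0$. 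Letting $\alpha$ run, $\lim_{\alpha}\AffLambda\left(\lambda_{\alpha}\right)=\AffLambda(0)=\inf_{f\in\Banach}\risk(f)$ by the continuity of $\AffLambda$ at $0$ established in Lemma~\ref{Lma:A}, so $\risk\left(f_{0}\right)=\lim_{\alpha}\risk\left(f^{\lambda_{\alpha}}\right)\leq\inf_{f\in\Banach}\risk(f)$, which forces $\risk\left(f_{0}\right)=\inf_{f\in\Banach}\risk(f)$, i.e.\ $f_{0}\in\Sset^{0}\left(\Banach\right)$.

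Finally (ii). Since the norm is weakly* lower semi-continuous, $\norm{f_{0}}\leq\liminf_{\alpha}\norm{f^{\lambda_{\alpha}}}$. For the reverse direction I would invoke Lemma~\ref{Lma:RTRM-bound} a second time, now with the minimizer $f_{0}\in\Sset^{0}\left(\Banach\right)$ just produced: it yields $\norm{f^{\lambda_{\alpha}}}\leq\norm{f_{0}}$ for every $\alpha$, so $\limsup_{\alpha}\norm{f^{\lambda_{\alpha}}}\leq\norm{f_{0}}$, and combining the two inequalities gives $\norm{f_{0}}=\lim_{\alpha}\norm{f^{\lambda_{\alpha}}}$. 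None of the steps is genuinely hard; the only points requiring care are the net/subnet bookkeeping (that $\lambda_{\alpha}\to0$ survives passage to a subnet, and that the constant $r$ and hence the ambient ball $r\Ball_{\Banach}$ are fixed \emph{before} the extraction, so that the weak* continuity of $\risk$ and the weak* lower semi-continuity of the norm are available on it throughout). The conceptual heart is the identification $f_{0}\in\Sset^{0}\left(\Banach\right)$ obtained from the continuity of $\AffLambda$ at the endpoint $\lambda=0$.
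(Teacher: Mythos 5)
Your proposal is correct and follows essentially the same route as the paper's proof: boundedness of $\left(f^{\lambda}\right)$ via Lemma \ref{Lma:RTRM-bound}, Banach--Alaoglu to extract the weakly* convergent subnet, Lemma \ref{Lma:L-Cont} for (i), the identification $f_0\in\Sset^0\left(\Banach\right)$ by dropping the nonnegative regularization term and using the continuity of $\AffLambda$ at $0$ from Lemma \ref{Lma:A}, and (ii) from weak* lower semi-continuity of the norm combined with a second application of Lemma \ref{Lma:RTRM-bound} to the newly found minimizer $f_0$. No gaps; the net/subnet bookkeeping you flag is exactly what the paper also records.
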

\begin{proof}
Since $\Sset^0\left(\Banach\right)\neq\emptyset$, Lemma \ref{Lma:RTRM-bound} ensures that
there exists a $r_0>0$ such that $\big(f^{\lambda}\big)\subseteq r_0\Ball_{\Banach}$.
Thus, the Banach-Alaoglu theorem ensures that there exists a weakly* convergent subnet $\big(f^{\lambda_\alpha}\big)$ of $\big(f^{\lambda}\big)$ to an element $f_0\in r_0\Ball_{\Banach}$.
This shows that $\lim_{\alpha}\lambda_{\alpha}=\lim_{\lambda}\lambda=0$.

Next, we prove that $f_0\in\Sset^0\left(\Banach\right)$. Lemma \ref{Lma:L-Cont} ensures that
\begin{equation}\label{eq:RTRM-lambda-1}
\risk\left(f_0\right)=\lim_{\alpha}\risk\big(f^{\lambda_\alpha}\big).
\end{equation}
Combining Equations \eqref{eq:nu-risk} and \eqref{eq:RTRM-lambda-1}, we have
\[
\risk\left(f_0\right)
\leq\liminf_{\alpha}\risk\big(f^{\lambda_\alpha}\big)
+\lambda_{\alpha}\regfun\big(\norm{f^{\lambda_\alpha}}\big)
=\liminf_{\alpha}\AffLambda\left(\lambda_\alpha\right).
\]
Moreover, Lemma \ref{Lma:A} ensures that
\[
\liminf_{\alpha}\AffLambda\left(\lambda_\alpha\right)=
\lim_{\alpha}\AffLambda\left(\lambda_\alpha\right)=\AffLambda\left(0\right)
=\inf_{f\in\Banach}\risk(f).
\]
Thus,
\[
\risk\left(f_0\right)
\leq\inf_{f\in\Banach}\risk(f).
\]
This demonstrates that $\risk\left(f_0\right)=\inf_{f\in\Banach}\risk(f)$.

Finally, we prove the (i) and (ii). Equation \eqref{eq:RTRM-lambda-1} shows that the (i) holds true.
It is easy to check that
\[
\norm{f_0}
\leq\liminf_{\alpha}\norm{x^{\lambda_\alpha}}.
\]
Lemma \ref{Lma:RTRM-bound} ensures that
\[
\limsup_{\alpha}\norm{f^{\lambda_\alpha}}\leq\norm{f_0}.
\]
Thus, the (ii) holds true.
\end{proof}

\begin{lemma}\label{Lma:ConvThm}
If $\Sset^{0}\left(\Banach\right)$ is nonempty,
then
there exists a $f_0\in\Sset^0\left(\Banach\right)$ and
for any $\epsilon>0$,
there exists
a $f_{n_{\epsilon}}^{\lambda_{\epsilon}}\in\Sset_{n_{\epsilon}}^{\lambda_{\epsilon}}\left(\Banach\right)$ for
a $n_{\epsilon}\in\NN$ and a $\lambda_{\epsilon}>0$ such that
\[
\text{(i) }\abs{\risk\left(f_0\right)-\risk\big(f_{n_{\epsilon}}^{\lambda_{\epsilon}}\big)}<\epsilon,\quad
\text{(ii) }\abs{\norm{f_0}-\norm{f_{n_{\epsilon}}^{\lambda_{\epsilon}}}}<\epsilon.
\]
\end{lemma}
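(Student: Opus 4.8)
The plan is to obtain Lemma \ref{Lma:ConvThm} by chaining the two subnet results already in hand, applying the triangle inequality once for the regularization parameter and once for the data index. Lemma \ref{Lma:RTRM-lambda} supplies the element $f_0\in\Sset^0(\Banach)$ --- this is the $f_0$ in the statement --- together with a weakly* convergent subnet $(f^{\lambda_\alpha})$ of $(f^{\lambda})$ for which $\risk(f^{\lambda_\alpha})\to\risk(f_0)$ and $\norm{f^{\lambda_\alpha}}\to\norm{f_0}$, while Lemma \ref{Lma:RERM} supplies, for each fixed $\lambda$, an element $f^{\lambda}\in\Sset^{\lambda}(\Banach)$ realized as the weak* limit of a subnet $(f_{k_\beta}^{\lambda})$ of $(f_k^{\lambda})$ with $\risk(f_{k_\beta}^{\lambda})\to\risk(f^{\lambda})$ and $\norm{f_{k_\beta}^{\lambda}}\to\norm{f^{\lambda}}$.

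Given $\epsilon>0$, I would first use the two convergences from Lemma \ref{Lma:RTRM-lambda}: because the indexing set of the subnet is directed, there is an $\alpha$ far enough along that $\abs{\risk(f_0)-\risk(f^{\lambda_\alpha})}<\epsilon/2$ and $\abs{\norm{f_0}-\norm{f^{\lambda_\alpha}}}<\epsilon/2$ hold simultaneously. Fix such an $\alpha$ and set $\lambda_\epsilon:=\lambda_\alpha$; note $f^{\lambda_\epsilon}\in\Sset^{\lambda_\epsilon}(\Banach)$. Now with this fixed $\lambda_\epsilon$, Lemma \ref{Lma:RERM} gives a subnet $(f_{k_\beta}^{\lambda_\epsilon})$ of $(f_k^{\lambda_\epsilon})$ along which $\risk$ and $\norm{\cdot}$ converge to $\risk(f^{\lambda_\epsilon})$ and $\norm{f^{\lambda_\epsilon}}$; so there is a $\beta$ with $\abs{\risk(f^{\lambda_\epsilon})-\risk(f_{k_\beta}^{\lambda_\epsilon})}<\epsilon/2$ and $\abs{\norm{f^{\lambda_\epsilon}}-\norm{f_{k_\beta}^{\lambda_\epsilon}}}<\epsilon/2$. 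Setting $n_\epsilon:=k_\beta$ and $f_{n_\epsilon}^{\lambda_\epsilon}:=f_{k_\beta}^{\lambda_\epsilon}\in\Sset_{n_\epsilon}^{\lambda_\epsilon}(\Banach)$, conclusions (i) and (ii) follow from the triangle inequality.

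I do not expect a genuine obstacle: the substance of the convergence statements is already packed into Lemmas \ref{Lma:RERM} and \ref{Lma:RTRM-lambda} (which in turn rest on Banach--Alaoglu and on Lemma \ref{Lma:L-Cont}, hence on Conditions (I) and (II)), and Lemma \ref{Lma:ConvThm} is essentially bookkeeping. The only points requiring care are that the $\lambda$-selection must be carried out first, since the inner net $(f_k^{\lambda_\epsilon})$ --- and the threshold $N_{\lambda_\epsilon}$ from Lemma \ref{Lma:RERM-bound} --- is only defined once $\lambda_\epsilon$ is pinned down, and that, since we are working with nets rather than sequences, ``far enough along'' must be read in the directed-set sense, with the simultaneous validity of two $\epsilon/2$-estimates justified by the fact that any two elements of a directed set have a common upper bound. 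Keeping the notation $f^{\lambda}$, $f_k^{\lambda}$, $f_{k_\alpha}^{\lambda}$ consistent with the earlier lemmas so that the subnets line up is the last thing to watch.
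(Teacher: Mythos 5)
Your proposal is correct and follows essentially the same route as the paper: the paper likewise first extracts $f_0$ and an $f^{\lambda_{\epsilon}}\in\Sset^{\lambda_{\epsilon}}\left(\Banach\right)$ from the subnet in Lemma \ref{Lma:RTRM-lambda} with both quantities within $\epsilon/2$, then, for that fixed $\lambda_{\epsilon}$, extracts $f_{n_{\epsilon}}^{\lambda_{\epsilon}}\in\Sset_{n_{\epsilon}}^{\lambda_{\epsilon}}\left(\Banach\right)$ from the subnet in Lemma \ref{Lma:RERM} within $\epsilon/2$, and concludes by the triangle inequality. Your remarks on performing the $\lambda$-selection first and on the directed-set bookkeeping are exactly the implicit care the paper's proof takes.
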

\begin{proof}
Lemma \ref{Lma:RTRM-lambda} ensures that
there exists a $f_0\in\Sset^0\left(\Banach\right)$ and
for any $\epsilon>0$,
there exists a $f^{\lambda_{\epsilon}}\in\big(f^{\lambda_{\alpha}}\big)$ given in the proof of Lemma \ref{Lma:RTRM-lambda}, that is, $f^{\lambda_{\epsilon}}\in\Sset^{\lambda_{\epsilon}}\left(\Banach\right)$ for a $\lambda_{\epsilon}>0$, such that
\begin{equation}\label{eq:mainpf-1}
\abs{\risk\left(f_0\right)-\risk\big(f^{\lambda_{\epsilon}}\big)}<\frac{\epsilon}{2},\quad
\abs{\norm{f_0}-\norm{f^{\lambda_{\epsilon}}}}<\frac{\epsilon}{2}.
\end{equation}
Next, for the fixed $\epsilon$ and $\lambda_{\epsilon}$, Lemma \ref{Lma:RERM} ensures that there exists a $f_{n_{\epsilon}}^{\lambda_{\epsilon}}\in\big(f_{k_{\alpha}}^{\lambda_{\epsilon}}\big)$ given in the proof of Lemma \ref{Lma:RERM}, that is, $f_{n_{\epsilon}}^{\lambda_{\epsilon}}\in\Sset_{n_{\epsilon}}^{\lambda_{\epsilon}}\left(\Banach\right)$ for a $n_{\epsilon}\in\NN$, such that
\begin{equation}\label{eq:mainpf-2}
\abs{\risk\big(f^{\lambda_{\epsilon}}\big)-\risk\big(f_{n_{\epsilon}}^{\lambda_{\epsilon}}\big)}<\frac{\epsilon}{2},
\quad
\abs{\norm{f^{\lambda_{\epsilon}}}-\norm{f_{n_{\epsilon}}^{\lambda_{\epsilon}}}}<\frac{\epsilon}{2}.
\end{equation}
Combining Equations \eqref{eq:mainpf-1} and \eqref{eq:mainpf-2}, we complete the proof.
\end{proof}

\begin{remark}\label{Rm:ProofConvThm}
In Lemma \ref{Lma:ConvThm}, $f_0$ is independent on $\epsilon$, and $n_{\epsilon},\lambda_{\epsilon}$ are dependent on $\epsilon$.
The proof above provides additional details. Specifically, we choose the pair $\left(n_{\epsilon},\lambda_{\epsilon}\right)$ for any $\epsilon>0$ such that $n_{\epsilon_1}\geq n_{\epsilon_2}$, $\lambda_{\epsilon_1}\leq\lambda_{\epsilon_2}$ when $\epsilon_1\leq\epsilon_2$ and
$n_{\epsilon}\to\infty$, $\lambda_{\epsilon}\to0$ when $\epsilon\to0$ in the proof of Theorem \ref{Thm:ConvThm}.
\end{remark}

\begin{lemma}\label{Lma:ConvThm-min}
If $\Sset^{0}\left(\Banach\right)$ is nonempty,
then for any $\tilde{f}^0\in\Sset^0\left(\Banach\right)$, we have
\[
\norm{f_0}\leq\norm{\tilde{f}^0},
\]
where $f_0$ is given in Lemma \ref{Lma:ConvThm}.
\end{lemma}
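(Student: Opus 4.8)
The plan is to read the bound off directly from the way $f_0$ was produced in Lemma \ref{Lma:ConvThm}. Recall that $f_0$ comes from Lemma \ref{Lma:RTRM-lambda}: it is the weak* limit of a subnet $\left(f^{\lambda_{\alpha}}\right)$ of the net $\left(f^{\lambda}\right)$, where each $f^{\lambda}\in\Sset^{\lambda}\left(\Banach\right)$ minimizes $\risk(f)+\lambda\regfun\left(\norm{f}\right)$ over $\Banach$, and where part (ii) of that lemma records the key fact that the norms converge, namely $\norm{f_0}=\lim_{\alpha}\norm{f^{\lambda_{\alpha}}}$. So the whole argument is just an assembly of two facts already in hand.

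First I would fix an arbitrary $\tilde{f}^0\in\Sset^0\left(\Banach\right)$. Lemma \ref{Lma:RTRM-bound} then gives $\norm{f^{\lambda}}\leq\norm{\tilde{f}^0}$ for every $\lambda>0$, and in particular $\norm{f^{\lambda_{\alpha}}}\leq\norm{\tilde{f}^0}$ for all indices $\alpha$ of the subnet. Passing to the limit in $\alpha$ and invoking the norm convergence from Lemma \ref{Lma:RTRM-lambda}(ii) yields $\norm{f_0}=\lim_{\alpha}\norm{f^{\lambda_{\alpha}}}\leq\norm{\tilde{f}^0}$, which is exactly the assertion. (As an alternative one could instead use weak* lower semi-continuity of $\norm{\cdot}$ to obtain $\norm{f_0}\leq\liminf_{\alpha}\norm{f^{\lambda_{\alpha}}}$, but since the genuine limit of the norms is already available, this detour is unnecessary.)

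I do not expect any real obstacle: this lemma is a bookkeeping corollary of Lemma \ref{Lma:RTRM-bound} together with Lemma \ref{Lma:RTRM-lambda}(ii). The only point that needs care is the identification of objects: the $f_0$ and the subnet $\left(f^{\lambda_{\alpha}}\right)$ named in the statement must be precisely those fixed in the proof of Lemma \ref{Lma:ConvThm} — equivalently, those produced by Lemma \ref{Lma:RTRM-lambda} — so that the norm-convergence identity applies verbatim. Once this bookkeeping is settled, the two displayed inequalities combine in a single line and the proof is complete.
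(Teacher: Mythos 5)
Your proof is correct and uses exactly the same two ingredients as the paper's: the uniform bound $\norm{f^{\lambda}}\leq\norm{\tilde{f}^0}$ from Lemma \ref{Lma:RTRM-bound} and the norm convergence $\norm{f_0}=\lim_{\alpha}\norm{f^{\lambda_{\alpha}}}$ from Lemma \ref{Lma:RTRM-lambda}(ii), with the objects correctly identified as those fixed in Lemma \ref{Lma:ConvThm}. The paper merely packages the same argument as a proof by contradiction via an $\epsilon$-approximation, whereas you pass to the limit directly; this is a stylistic difference only.
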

\begin{proof}
Assume on the contrary that $\norm{\tilde{f}^0}<\norm{f_0}$.
Let $\epsilon:=\norm{f_0}-\norm{\tilde{f}^0}$.
As in the proof of the (ii) of Lemma \ref{Lma:ConvThm}, there exists a $\lambda_{\epsilon}>0$ such that
$\norm{f_0}-\norm{f^{\lambda_{\epsilon}}}<\epsilon$.
Thus, $\norm{\tilde{f}^0}=\norm{f_0}-\epsilon<\norm{f^{\lambda_{\epsilon}}}$.
Since $\tilde{x}^0\in\Sset^0\left(\Banach\right)$, Lemma \ref{Lma:RTRM-bound}
ensures that $\norm{f^{\lambda_{\epsilon}}}\leq\norm{\tilde{f}^0}$.
Therefore, it has a contradiction and we must reject the assumption. The proof is complete.
\end{proof}

We will now prove Theorem \ref{Thm:ConvThm} using Lemma \ref{Lma:ConvThm} and
Lemma \ref{Lma:ConvThm-min}.
Let $\big(f_{n_{\epsilon}}^{\lambda_{\epsilon}}\big)$ be a net
with a directed set $\left\{\epsilon\in\RR_{+}:\epsilon\leq1\right\}$ in its inverse order,
that is, $\epsilon_1\succeq \epsilon_2$ if $\epsilon_1\leq \epsilon_2$.
Thus, $\lim_{\epsilon}\epsilon=0$. As explained in Remark \ref{Rm:ProofConvThm}, $\big(f_{n_{\epsilon}}^{\lambda_{\epsilon}}\big)$ is a subnet of $\big(f_{n}^{\lambda}\big)$
and $\lim_{\epsilon}\left(n_{\epsilon},\lambda_{\epsilon}\right)=\left(\infty,0\right)$.

\begin{proof}[{\bf Proof of Theorem \ref{Thm:ConvThm}}]
The (ii) of Lemma \ref{Lma:ConvThm} ensures that
$\norm{f_{n_{\epsilon}}^{\lambda_{\epsilon}}}\leq\norm{f_0}+1$ for all $\epsilon\leq1$.
Let $r_0:=\norm{f_0}+1$.
Thus, $\big(f_{n_{\epsilon}}^{\lambda_{\epsilon}}\big)\subseteq r_0\Ball_{\Banach}$ and
the Banach-Alaoglu theorem ensures that there exists a weakly* convergent subnet $\big(f_{n_{\epsilon_\alpha}}^{\lambda_{\epsilon_\alpha}}\big)$ of $\big(f_{n_{\epsilon}}^{\lambda_{\epsilon}}\big)$ to an element $f^0\in r_0\Ball_{\Banach}$.
This shows that $\lim_{\alpha}\epsilon_{\alpha}=\lim_{\epsilon}\epsilon=0$ and $\big(f_{n_{\epsilon_\alpha}}^{\lambda_{\epsilon_\alpha}}\big)$ is also a subnet of $\big(f_{n}^{\lambda}\big)$
such that $\lim_{\alpha}n_{\epsilon_\alpha}=\infty$ and $\lim_{\alpha}\lambda_{\epsilon_\alpha}=0$.
The proof will be complete for $\big(f_{n_{\alpha}}^{\lambda_{\alpha}}\big)$ when substituted into
$\big(f_{n_{\epsilon_\alpha}}^{\lambda_{\epsilon_\alpha}}\big)$.

Next, we prove that $f^{0}\in\Sset^0\left(\Banach\right)$.
We take any $\delta>0$.
Lemma \ref{Lma:L-Cont} ensures that
there exists a $\gamma_1$ such that
\[
\abs{\risk\big(f^0\big)-\risk\big(f_{n_{\epsilon_\alpha}}^{\lambda_{\epsilon_\alpha}}\big)}
\leq\frac{\delta}{2},\quad\text{when }\alpha\succeq\gamma_1.
\]
Moreover,
Lemma \ref{Lma:ConvThm} ensures that
there exists a $\gamma_2$ such that
\[
\abs{\risk\left(f_0\right)-\risk\big(f_{n_{\epsilon_\alpha}}^{\lambda_{\epsilon_\alpha}}\big)}
\leq\epsilon_{\alpha}\leq\frac{\delta}{2},\quad\text{when }\alpha\succeq\gamma_2.
\]
Let $\gamma_3$ such that $\gamma_3\succeq\gamma_1$ and $\gamma_3\succeq\gamma_2$.
Thus,
\[
\abs{\risk\big(f^0\big)-\risk\left(f_0\right)}
\leq\abs{\risk\big(f^0\big)-\risk
\big(
f_{n_{\epsilon_{\gamma_3}}}^{\lambda_{\epsilon_{\gamma_3}}}
\big)}
+\abs{\risk\big(
f_{n_{\epsilon_{\gamma_3}}}^{\lambda_{\epsilon_{\gamma_3}}}
\big)-\risk\left(f_0\right)}
\leq\frac{\delta}{2}+\frac{\delta}{2}
=\delta.
\]
This demonstrates that
\[
\risk\big(f^0\big)=\risk\left(f_0\right)=\inf_{f\in\Banach}\risk\left(f\right).
\]

We will now prove the (i), (ii), and (iii).
Lemma \ref{Lma:L-Cont} ensures that the (i) holds true.
The (ii) of Lemma \ref{Lma:ConvThm} ensures that
\[
\lim_{\alpha}\abs{
\norm{f_0}
-\norm{f_{n_{\epsilon_\alpha}}^{\lambda_{\epsilon_\alpha}}}
}\leq\lim_{\alpha}\epsilon_{\alpha}=0.
\]
Thus,
\[
\lim_{\alpha}\norm{f_{n_{\epsilon_\alpha}}^{\lambda_{\epsilon_\alpha}}}
=
\norm{f_0}.
\]
It is easy to check that
\[
\norm{f^0}\leq
\liminf_{\alpha}\norm{f_{n_{\epsilon_\alpha}}^{\lambda_{\epsilon_\alpha}}}.
\]
Therefore,
\[
\norm{f^0}\leq\norm{f_0}.
\]
Lemma \ref{Lma:ConvThm-min} also ensures that
\[
\norm{f_0}
\leq\norm{f^0}.
\]
Thus,
\[
\norm{f^0}=\norm{f_0}=\lim_{\alpha}\norm{f_{n_{\epsilon_\alpha}}^{\lambda_{\epsilon_\alpha}}}.
\]
This demonstrates that the (ii) holds true.
Finally, Lemma \ref{Lma:ConvThm-min} ensures that the (iii) holds true.
\end{proof}

Next, we will prove Theorem \ref{Thm:ConvThm-subnet} by the similar argument of Theorem \ref{Thm:ConvThm}.
\begin{proof}[{\bf Proof of Theorem \ref{Thm:ConvThm-subnet}}]
We first prove that $f^0\in\Sset^0\left(\Banach\right)$.
Substituting $f_0$ and $\left(f_{\alpha}\right)$ into $f^0$ and $\big(f_{n_{\alpha}}^{\lambda_{\alpha}}\big)$,
Lemma \ref{Lma:net-conv} ensures that
\begin{equation}\label{eq:main-1}
\risk\big(f^0\big)=
\lim_{\alpha}\risk_{n_{\alpha}}\big(f_{n_{\alpha}}^{\lambda_{\alpha}}\big).
\end{equation}
Since $\big(f_{n_{\alpha}}^{\lambda_{\alpha}}\big)$ is bounded,
there exists a $r_0>0$ such that $\big(f_{n_{\alpha}}^{\lambda_{\alpha}}\big)\subseteq r_0\Ball_{\Banach}$.
Thus, $\lim_{\alpha}\left(n_{\alpha},\lambda_{\alpha}\right)=\lim_{(n,\lambda)}(n,\lambda)=(\infty,0)$.
It is easy to check that
\[
\lim_{\alpha}\lambda_{\alpha}\regfun\big(\norm{f_{n_{\alpha}}^{\lambda_{\alpha}}}\big)
\leq\lim_{\alpha}\lambda_{\alpha}\regfun\left(r_0\right).
\]
Thus,
\begin{equation}\label{eq:main-2}
\lim_{\alpha}\lambda_{\alpha}\regfun\big(\norm{f_{n_{\alpha}}^{\lambda_{\alpha}}}\big)=0.
\end{equation}
Combining Equations \eqref{eq:main-1} and \eqref{eq:main-2}, we have
\[
\inf_{f\in\Banach}\risk(f)\leq
\risk\big(f^0\big)
=\lim_{\alpha}\risk_{n_{\alpha}}\big(f_{n_{\alpha}}^{\lambda_{\alpha}}\big)
+\lambda_{\alpha}\regfun\big(\norm{f_{n_{\alpha}}^{\lambda_{\alpha}}}\big).
\]
Therefore, Lemma \ref{Lma:inf-limit-ineq} ensures that
\[
\inf_{f\in\Banach}\risk(f)=
\lim_{\alpha}\risk_{n_{\alpha}}\big(f_{n_{\alpha}}^{\lambda_{\alpha}}\big)
+\lambda_{\alpha}\regfun\big(\norm{f_{n_{\alpha}}^{\lambda_{\alpha}}}\big)
=\risk\big(f^0\big),
\]
on substituting $T$, $T_{\alpha}$, and $f_{\alpha}$ into $\risk$, $\risk_{n_{\alpha}}+\lambda_{\alpha}\regfun\left(\norm{\cdot}\right)$, and $f_{n_{\alpha}}^{\lambda_{\alpha}}$.

Finally, we prove the (i) and (ii).
Lemma \ref{Lma:L-Cont} ensures that the (i) holds true.
Since $f_{n_{\alpha}}^{\lambda_{\alpha}}\in\Sset_{n_{\alpha}}^{\lambda_{\alpha}}\left(\Banach\right)$, we have
\[
\risk_{n_{\alpha}}\big(f_{n_{\alpha}}^{\lambda_{\alpha}}\big)
+\lambda_{\alpha}\regfun\big(\norm{f_{n_{\alpha}}^{\lambda_{\alpha}}}\big)
\leq
\risk_{n_{\alpha}}\big(f^{0}\big)
+\lambda_{\alpha}\regfun\big(\norm{f^{0}}\big).
\]
Thus,
\[
\regfun\big(\norm{f_{n_{\alpha}}^{\lambda_{\alpha}}}\big)\leq
\frac{\risk_{n_{\alpha}}\big(f^{0}\big)-\risk_{n_{\alpha}}\big(f_{n_{\alpha}}^{\lambda_{\alpha}}\big)}{\lambda_{\alpha}}
+\regfun\big(\norm{f^{0}}\big).
\]
This shows that
\begin{equation}\label{eq:main-3}
\limsup_{\alpha}\regfun\big(\norm{f_{n_{\alpha}}^{\lambda_{\alpha}}}\big)\leq
\limsup_{\alpha}\frac{\risk_{n_{\alpha}}\big(f^{0}\big)-\risk_{n_{\alpha}}\big(f_{n_{\alpha}}^{\lambda_{\alpha}}\big)}{\lambda_{\alpha}}
+\regfun\big(\norm{f^{0}}\big).
\end{equation}
By the additional condition in Equation \eqref{eq:add-cond-weak-net},  Proposition \ref{Pro:EmpRisk} (e) ensures that
\begin{equation}\label{eq:main-4}
\lim_{\alpha}
\frac{\abs{\risk_{n_{\alpha}}\big(f^{0}\big)-\risk_{n_{\alpha}}\big(f_{n_{\alpha}}^{\lambda_{\alpha}}\big)}}{\lambda_{\alpha}}
=0.
\end{equation}
Subtracting Equations \eqref{eq:main-4} from \eqref{eq:main-3}, we have
\[
\limsup_{\alpha}\regfun\big(\norm{f_{n_{\alpha}}^{\lambda_{\alpha}}}\big)
\leq
\regfun\big(\norm{f^0}\big).
\]
It is also easy to check that
\[
\regfun\big(\norm{f^0}\big)\leq
\liminf_{\alpha}\regfun\big(\norm{f_{n_{\alpha}}^{\lambda_{\alpha}}}\big).
\]
This demonstrates that
\[
\regfun\big(\norm{f^0}\big)=
\lim_{\alpha}\regfun\big(\norm{f_{n_{\alpha}}^{\lambda_{\alpha}}}\big).
\]
Thus, the (ii) holds true according to Lemma \ref{Lm:Phi}.
\end{proof}

Finally, we prove Theorem \ref{Thm:ConvThm-lambda} using Theorem \ref{Thm:ConvThm-subnet}.
\begin{proof}[{\bf Proof of Theorem \ref{Thm:ConvThm-lambda}}]
We first prove that $\big(f_{n}^{\lambda_n}\big)$ is bounded.
Since $f_{n}^{\lambda_n}\in\Sset_{n}^{\lambda_n}\left(\Banach\right)$, we have
\begin{equation}\label{eq:ConvThm-lambda-1}
\regfun\big(\norm{f_{n}^{\lambda_n}}\big)
\leq\frac{\risk_n\big(f^0\big)-\risk_n\big(f_{n}^{\lambda_n}\big)}{\lambda_n}
+\regfun\big(\norm{f^0}\big)
\leq
\frac{\risk_n\big(f^0\big)}{\lambda_n}
+\regfun\big(\norm{f^0}\big).
\end{equation}
Combining Equation \eqref{eq:ConvThm-lambda-1} and the additional condition in Equation \eqref{eq:add-cond-strong-seq}, we have
\[
\limsup_{n\to\infty}\regfun\big(\norm{f_{n}^{\lambda_n}}\big)
\leq
\regfun\big(\norm{f^0}\big).
\]
Thus, the strictly increasing of $\regfun$ shows that
\begin{equation}\label{eq:ConvThm-lambda-3}
\limsup_{n\to\infty}\norm{f_{n}^{\lambda_n}}
\leq
\norm{f^0}.
\end{equation}
Therefore, there exists a
$r_0>0$ such that $\big(f_{n}^{\lambda_n}\big)\subseteq r_0\Ball_{\Banach}$.

Next, we prove that $\big(f_{n}^{\lambda_n}\big)$ is a weakly* convergent sequence to $f^0$.
If we prove the statement, then Lemma \ref{Lma:L-Cont} ensures that the (i) holds true.
The Banach-Alaoglu theorem ensures that there exists a weakly* convergent subnet $\big(f^{\lambda_{n_{\alpha}}}_{n_{\alpha}}\big)$ of $\big(f_{n}^{\lambda_n}\big)$ to an element $f_0\in r_0\Ball_{\Banach}$.
It is obvious that $\big(f^{\lambda_{n_{\alpha}}}_{n_{\alpha}}\big)$ is also a subnet of $\big(f_{n}^{\lambda}\big)$.
Thus, Theorem \ref{Thm:ConvThm-subnet} ensures that $f_0\in\Sset^0\left(\Banach\right)=\left\{f^0\right\}$.
This demonstrates that
\[
f^0=f_0=w^{\ast}-\lim_{\alpha}x^{\lambda_{n_{\alpha}}}_{n_{\alpha}}.
\]
Therefore, Lemma \ref{Lma:L-Cont} ensures that
\[
\risk\big(f^0\big)=\lim_{\alpha}\big(f^{\lambda_{n_{\alpha}}}_{n_{\alpha}}\big).
\]
It is easy to check that
\[
\norm{f^0}
\leq\liminf_{{\alpha}}\norm{f_{n_{\alpha}}^{\lambda_{n_\alpha}}}.
\]
Since $\lim_{\alpha}n_{\alpha}=\lim_nn=\infty$,
Equation \eqref{eq:ConvThm-lambda-3} also shows that
\[
\limsup_{\alpha}\norm{f_{n_{\alpha}}^{\lambda_{n_\alpha}}}
\leq
\norm{f^0}.
\]
Thus,
\[
\norm{f^0}=\lim_{\alpha}\norm{f_{n_{\alpha}}^{\lambda_{n_\alpha}}}.
\]
Applying the same argument to all subnets of $\big(f_{n}^{\lambda_n}\big)$, we assert that
\[
f^0=w^{\ast}-\lim_{n\to\infty}f_{n}^{\lambda_n},\quad
\norm{f^0}=\lim_{n\to\infty}\norm{f_{n}^{\lambda_n}}.
\]
\end{proof}

\section{Examples of Regularized Learning}\label{sec:Exa}

In the beginning, we study an example of noisy data.
\begin{example}\label{exa:Gaussian}
{\rm
Let $\Omega:=[0,1]$ and a Gaussian kernel
\[
K(x,z):=\exp\big(-\theta^2\abs{x-z}^2\big),\quad\text{for }x,z\in\Omega,
\]
where the shape parameter $\theta>0$. As in \cite[Section 4.4]{XuYe2019},
the RKBS $\RKBS_K^{1}(\Omega)$ has a predual space $\left(\RKBS_K^{1}(\Omega)\right)_{\ast}\cong\RKBS_K^{\infty}(\Omega)$ and $\RKBS_K^{1}(\Omega)$ is embedded into the RKHS $\Hilbert_K(\Omega)$.
As in \cite[Example 5.8]{FasshauerYe2011Dist}, $\Hilbert_K(\Omega)$ is embedded into the Sobolev space $\Hilbert^j(\Omega)$ of any order $j$. Thus, the Sobolev embedding theorem ensures that $\RKBS_K^{1}(\Omega)$ is embedded into $\Cont^{0,1}(\Omega)$.

We consider an expected risk function
\[
\risk(f):=\int_{\Omega}\abs{f(x)-f^0(x)}\omega(x)\ud x,\quad\text{for }f\in\RKBS_K^{1}(\Omega),
\]
where the unknown function $f^0\in\RKBS_K^{1}(\Omega)$ and the positive weight $\omega\in\Cont(\Omega)$.
This shows that $\risk\left(f^0\right)=0$ and $f^0$ is the unique minimizer of $\risk$ over $\RKBS_K^{1}(\Omega)$, that is, $\Sset^0\left(\RKBS_K^{1}(\Omega)\right)=\left\{f^0\right\}$.

Next, we have the points $x_{nk}\in((k-1)/n,k/n)$ and the noise values $y_{nk}\in\RR$ such that $\abs{f^0(x_{nk})-y_{nk}}\leq\zeta_{nk}$ for $k\in\NN_n$ and $n\in\NN$, where $\zeta_{nk}>0$ and $\max_{k\in\NN_n}\zeta_{nk}\to0$ when $n\to\infty$.
Thus, there exists a $C>0$ such that $\sup_{x\in\Omega}\abs{f^0(x)}+\sup_{n\in\NN}\max_{k\in\NN_n}\zeta_{nk}\leq C$.
This demonstrates that $f^0\left(x_{nk}\right),y_{nk}\in[-C,C]$.
By the reproducing property of $\RKBS_K^{1}(\Omega)$, the classical noisy data
\[
\left(x_{n1},y_{n1}\right),\left(x_{n2},y_{n2}\right),\ldots,\left(x_{nn},y_{nn}\right)\in\Omega\times[-C,C],\quad\text{for all }n\in\NN,
\]
are equivalently transferred to the data
\[
\left(\delta_{x_{n1}},y_{n1}\right),\left(\delta_{x_{n2}},y_{n2}\right),\ldots,\left(\delta_{x_{nn}},y_{nn}\right)
\in\RKBS_K^{\infty}(\Omega)\times[-C,C],\quad\text{for all }n\in\NN.
\]
Let
\[
\vxi_n:=\left(\delta_{x_{n1}},\delta_{x_{n2}},\cdots,\delta_{x_{nn}}\right),
\quad
\vy_n:=\left(y_{n1},y_{n2},\cdots,y_{nn}\right),
\quad\text{for all }n\in\NN.
\]
Thus, $\Funset_{\Data}=\left\{\delta_{x_{nk_n}}:k_n\in\NN_n,~n\in\NN\right\}$
and Lemma \ref{Lm:weak-equicont} ensures that
$\Funset_D$ is relatively compact in $\RKBS_K^{\infty}(\Omega)$.
Let a multi-loss function
\[
\loss_n\left(\vxi^{\ast},\vy,\vt\right):=\frac{1}{n}\sum_{k=1}^n
\closs\left(\xi_k^{\ast},y_{k},t_k\right),
\text{ for }\vxi^{\ast}\in\big(\RKBS_K^{\infty}(\Omega)\big)^{n},
~\vy\in[-C,C]^n,~\vt\in\RR^n,
\]
where $n\in\NN$ and absolute loss
\[
\closs\left(\xi^{\ast},y,t\right):=\left\{
\begin{array}{cl}
\abs{t-y}\omega(x),&\text{if }\xi^{\ast}=\delta_{x}\text{ for }x\in\Omega\text{ and }y\in[-C,C],\\
0,&\text{otherwise},
\end{array}
\right.
\]
for $\xi^{\ast}\in\RKBS_K^{\infty}(\Omega),~y\in[-C,C],~t\in\RR$.
Since $\omega$ is bounded on $\Omega$,
$\closs$ is a local Lipschitz continuous loss function.
Thus, Lemma \ref{Lm:loss} (b) ensures that $\Loss$ is uniformly local Lipschitz continuous.
This demonstrates that Condition (II) holds true.

Moreover, the empirical risk function is written as
\[
\risk_n(f)=
\frac{1}{n}\sum_{k=1}^n
\closs\left(\delta_{x_{nk}},y_{nk},\langle f,\delta_{x_{nk}}\rangle\right)
=
\frac{1}{n}\sum_{k=1}^{n}\abs{f\left(x_{nk}\right)-y_{nk}}\omega\left(x_{nk}\right),
\]
for $f\in\RKBS_K^{1}(\Omega)$.
Since $\abs{f-f^0}\omega$ is continuous and $\omega$ is bounded, $\risk_n$ converges pointwise to $\risk$ when $n\to\infty$. This demonstrates that Condition (I) holds true.

Let $\regfun(r):=r$ for $r\in[0,\infty)$.
Since the extreme point of $\Ball_{\RKBS_K^{1}(\Omega)}$ has the formula as $\sqrt{\vartheta}\varphi$, Corollary \ref{Cor:GenReprExtPoint} ensures that there exists an approximate solution
$f_n^{\lambda}\in\Sset_{n}^{\lambda}\left(\RKBS_K^{1}(\Omega)\right)$
such that $f_n^{\lambda}$ is a linear combination of
\[
\sqrt{\vartheta_1}\varphi_1,\sqrt{\vartheta_2}\varphi_2,\ldots,\sqrt{\vartheta_{M_n}}\varphi_{M_n},
\]
where $M_n\leq N_n$
and $\left(\vartheta,\varphi\right)$ is a pair of eigenvalue and eigenfunction of $K$.

Finally, we choose a decrease sequence $\left(\lambda_n\right)$ to $0$ such that
\[
\lim_{n\to\infty}\frac{\max_{k\in\NN_{n}}\zeta_{nk}}{\lambda_n}=0.
\]
Since
\[
\risk_n\big(f^0\big)=\frac{1}{n}\sum_{k=1}^{n}\abs{f^0\left(x_{nk}\right)-y_{nk}}\omega\left(x_{nk}\right)
\leq2\sup_{x\in\Omega}\omega(x)\max_{k\in\NN_{n}}\zeta_{nk},
\]
we have
\[
\lim_{n\to\infty}\frac{\risk_n\left(f^0\right)}{\lambda_n}
\leq2\sup_{x\in\Omega}\omega(x)\lim_{n\to\infty}
\frac{\max_{k\in\NN_{n}}\zeta_{nk}}{\lambda_n}=0.
\]
Thus, Theorem \ref{Thm:ConvThm-lambda} ensures that $\big(f_{n}^{\lambda_{n}}\big)$ is a weakly* convergent bounded sequence to $f^0$. This demonstrates that $f_{n}^{\lambda_{n}}$ converges pointwise to $f^0$ when $n\to\infty$.
This result is consistent with the variational characterization of Tikhonov regularization.
}
\end{example}

In this article, we focus on deterministic convergence.
Actually, the pointwise convergence as shown in Equation \eqref{eq:PointConv} can be extended to stochastic convergence, such as
convergence almost surely and convergence in probability.
In the same manner, we can verify the conclusions of the convergence theorems for stochastic data.
Now, let us study an example of a binary classification for stochastic data.
\begin{example}\label{exa:BinaryClass}
{\rm
Let $\Omega:=[0,1]^2$ and
a min kernel
\[
K(x,z):=\min\left\{v_1,w_1\right\}\min\left\{v_2,w_2\right\},
\text{ for }x=\left(v_1,v_2\right),z=\left(w_1,w_2\right)\in\Omega.
\]
As stated in \cite[Section 4.3]{XuYe2019}, the RKBS $\RKBS_K^p(\Omega)$ is a reflexive Banach space
and its dual space is isometrically isomorphic to the RKBS $\RKBS_K^q(\Omega)$,
where $1<p\leq2$ and $q=p/(p-1)$.
As shown in \cite[Example 5.1]{FasshauerYe2011DiffBound},
the RKHS $\Hilbert_K(\Omega)$
is equivalent to the Sobolev space $\Hilbert^{1,1}_{mix}(\Omega)$ of order $1,1$.
Since $\RKBS_K^p(\Omega)$ is embedded into $\Hilbert_K(\Omega)$,
the Sobolev embedding theorem ensures that
$\RKBS_K^p(\Omega)$ is embedded into $\Cont^{0,1/2}(\Omega)$.

Given a probability distribution $\PP$ on $\Omega\times\{\pm1\}$,
we study an expected risk function
\[
\risk(f):=\int_{\Omega\times\{\pm1\}}\max\left\{0,1-yf(x)\right\}\ud\PP,
\quad\text{for }f\in\RKBS_K^p(\Omega).
\]
Thus, the binary classifier is constructed by a minimization of $\risk$ over $\RKBS_K^p(\Omega)$.

Next, we have the stochastic data $\left(x_n,y_n\right)\overset{\PP}{\sim}i.i.d.\left(x,y\right)$ for all $n\in\NN$.
By the reproducing property of $\RKBS_K^{p}(\Omega)$, the classical data $\left(x_{n},y_{n}\right)$
are equivalently transferred to the data
$\left(\delta_{x_{n}},y_{n}\right)$.
Let
\[
\vxi_n:=\left(\delta_{x_{1}},\delta_{x_{2}},\cdots,\delta_{x_{n}}\right),
\quad
\vy_n:=\left(y_{1},y_{2},\cdots,y_{n}\right),
\quad\text{for all }n\in\NN.
\]
Thus, $\Funset_{\Data}=\left\{\delta_{x_n}:n\in\NN\right\}$.
Lemma \ref{Lm:weak-equicont} ensures that
$\Funset_{\Data}$ is relatively compact in $\RKBS_K^{q}(\Omega)$.
Let a multi-loss function
\[
\loss_n\left(\vxi^{\ast},\vy,\vt\right):=\frac{1}{n}\sum_{k=1}^n
\closs\left(\xi_k^{\ast},y_{k},t_k\right),
\text{ for }\vxi^{\ast}\in\big(\RKBS_K^{q}(\Omega)\big)^{n},
~\vy\in\{\pm1\}^n,~\vt\in\RR^n,
\]
where $n\in\NN$ and
hinge loss
\[
\closs\left(\xi^{\ast},y,t\right):=\left\{
\begin{array}{cl}
\max\left\{0,1-yt\right\},&\text{if }\xi^{\ast}=\delta_{x}\text{ for }x\in\Omega\text{ and }y\in\left\{\pm1\right\},\\
0,&\text{otherwise},
\end{array}
\right.
\]
for $\xi^{\ast}\in\RKBS_K^{q}(\Omega),~y\in\{\pm1\},~t\in\RR$.
Since $\closs$ is a local Lipschitz continuous loss function,
Lemma \ref{Lm:loss} (b) ensures that $\Loss$ is uniformly local Lipschitz continuous.
This demonstrates that Condition (II) holds true.

Moreover, since $L\left(\delta_{x_{n}},y_n,\langle f,\delta_{x_{n}}\rangle\right)\overset{\PP}{\sim}i.i.d.\max\left\{0,1-yf(x)\right\}$,
the strong law of large numbers shows that
\[
\risk(f)=
\lim_{n\to\infty}
\frac{1}{n}L\left(\delta_{x_{n}},y_n,\langle f,\delta_{x_{n}}\rangle\right)
=\lim_{n\to\infty}\risk_n(f)
\text{ almost surely},
\]
for all $f\in\RKBS_K^{p}(\Omega$.
This demonstrates that Condition (I) holds true almost surely.

Let $\regfun(r):=r^p$ for $r\in[0,\infty)$.
If $p=2l/(2l-1)$ for a $l\in\NN$, then Theorem \ref{Thm:GenRepr} ensures that
there exists an unique approximate solution $f_n^{\lambda}\in\Sset_{n}^{\lambda}\left(\RKBS_K^{p}(\Omega)\right)$
such that $\partial\norm{\cdot}\big(f_n^{\lambda}\big)$ is a singleton including the unique element that is equivalent to a linear combination
of $K\left(x_1,\cdot\right),K\left(x_2,\cdot\right),\ldots,K\left(x_n,\cdot\right)$
Thus, $f_n^{\lambda}$ is a linear combination of multi-kernel basis
\[
K_{2l}\left(\cdot,x_{i_1},x_{i_2},\cdots,x_{i_{2l-1}}\right),
\quad\text{for }i_1,i_2,\ldots,i_{2l-1}\in\NN_n,
\]
constructed by eigenvalues and eigenfunctions of $K$, as shown in the proof of \cite[Theorem 5.10]{XuYe2019}.

In the same manner as stated in Section \ref{sec:ProofConvThm}, we exclude the null set of the unconvergent $\risk_n$
to prove that Theorems \ref{Thm:ConvThm}, \ref{Thm:ConvThm-subnet}, and \ref{Thm:ConvThm-lambda} still hold true almost surely.
Since $\RKBS_K^{q}(\Omega)$ is separable and complete, the weakly* convergent bounded subnet of $\big(f_{n}^{\lambda}\big)$ in Theorem \ref{Thm:ConvThm-subnet} can be interchanged to a weakly* convergent sequence $\big(f_{n_{j}}^{\lambda_{j}}\big)$.
Since $\Span\left\{\delta_x:x\in\Omega\right\}$ is dense in $\RKBS_K^{q}(\Omega)$,
$\left(f_j\right)$ weakly* converges to $f_0$ when $j\to\infty$
if and only if $\left(f_j\right)$ converges pointwise to $f_0$ when $j\to\infty$,
where $\left(f_j\right)$ is a sequence of $\RKBS_K^{q}(\Omega)$ and $f_0\in\RKBS_K^{q}(\Omega)$.
In practical applications, if a stochastic sequence $\big(f_{n_{j}}^{\lambda_{j}}\big)$ converges pointwise to a deterministic $f^0\in\RKBS_K^{q}(\Omega)$ when $j\to\infty$ almost surely, then Theorem \ref{Thm:ConvThm-subnet} ensures that $f^0\in\Sset^0\left(\RKBS_K^{p}(\Omega)\right)$.
Thus, the binary classifier is constructed by
$\sign\left(f^0(x)\right)$ or $\sign\big(f_{n_{j}}^{\lambda_{j}}(x)\big)$ approximately for $x\in\Omega$.
}
\end{example}

\begin{remark}
For binary classification, we can still achieve similar results as shown in Example \ref{exa:BinaryClass} by replacing the hinge loss with various other loss functions, such as truncated least squares loss, logistic loss, and even nonconvex loss.
According to Zhang's inequality in \cite[Theorem 2.31]{SteinwartChristmann2008}, the exact binary classifier is the Bayes classifier $\sign\left(2\omega(x)-1\right)$,
where $\omega(x):=\PP(y=1\mid x)$ for $x\in\Omega$.
Unfortunately, $\omega$ is usually unknown.
Since $\Cont_0(\Rd)$ is a predual space of $\Measure(\Rd)$, we will utilize total variations to construct approximate classifiers using nonconvex and nonsmooth loss functions, when $2\omega-1\in\Measure(\Rd)$ in our forthcoming paper.
\end{remark}

Next, we study a specific example involving two categories of linear-functional data induced by a Poisson equation.
\begin{example}\label{exa:Poisson}
{\rm
Let $\Omega:=[0,1]^2$ and a Sobolev kernel
\[
K(x,z):=\theta^{j-1}\norm{x-z}^{j-1}_2\Bessel_{1-j}\left(\theta\norm{x-z}_2\right),
\quad\text{for }x,z\in\Omega,
\]
where $\theta>0$, $j\geq4$, and $\Bessel_{1-j}$ is the modified Bessel function of the second kind of order $1-j$.
The RKHS $\Hilbert_K(\Omega)$ is a reflexive Banach space.
As shown in \cite[Example 5.7]{FasshauerYe2011Dist}, $\Hilbert_K(\Omega)$ is equivalent to the Sobolev space $\Hilbert^j(\Omega)$ of order $j$.
By the Sobolev embedding theorem,
$\Hilbert_K(\Omega)$ is embedded into $\Cont^{2,1}(\Omega)$ and $\Cont^{0,1}(\partial\Omega)$, respectively.

We study a Poisson equation with Dirichlet boundary, that is,
\[
\Delta u=h\text{ in }\Omega,\quad
u=g\text{ on }\partial\Omega,
\]
where $h\in\Cont(\Omega)$ and $g\in\Cont(\partial\Omega)$ are given such that
the Poisson equation exists the unique solution $f^0\in\Hilbert^j(\Omega)\cong\Hilbert_K(\Omega)$.
Let an expected risk function
\[
\risk(f):=\frac{1}{2}\int_{\Omega}\abs{\Delta f(x)-h(x)}^2\ud t
+\frac{1}{2}\int_{\partial\Omega}\abs{f(z)-g(z)}^2\ud S,
\quad\text{for }f\in\Hilbert_K(\Omega).
\]
Thus, $\risk\left(f^0\right)=0$ and $\Sset^0\left(\Hilbert_K(\Omega)\right)=\left\{f^0\right\}$.

Next, we have two categories of linear-functional data induced by the Poisson equation, that is,
\[
\left(\delta_{x_{n1}}\circ\Delta,v_{n1}\right),\left(\delta_{x_{n2}}\circ\Delta,v_{n2}\right),
\ldots,\left(\delta_{x_{nn^2}}\circ\Delta,v_{nn^2}\right),
\]
and
\[
\left(\delta_{z_{n1}},b_{n1}\right),\left(\delta_{z_{n2}},b_{n2}\right),
\ldots,\left(\delta_{z_{nn}},b_{nn}\right),
\]
where $v_{nk}:=h\left(x_{nk}\right)$,
$x_{nk}$ are Halton points in $\Omega$, $b_{nl}:=g\left(z_{nl}\right)$,
and $z_{nl}$ are uniform grid points on $\partial\Omega$.
Let
\[
\vxi_n^{\ast}:=\left(
\delta_{x_{n1}}\circ\Delta,\delta_{x_{n2}}\circ\Delta,
\cdots,\delta_{x_{nn^2}}\circ\Delta,
\delta_{z_{n1}},\delta_{z_{n2}},\cdots,\delta_{z_{nn}}
\right),
\]
and
\[
\vy_n:=\left(v_{n1},v_{n2},\cdots,v_{nn^2},b_{n1},b_{n2},\cdots,b_{nn}\right),
\]
for all $n\in\NN$.
Thus, $\Funset_{\Data}=\left\{\delta_{x_{nk_n}}\circ\Delta,\delta_{z_{nl_n}}:k_n\in\NN_{n^2},l_n\in\NN_n,n\in\NN\right\}$.
Since Halton points are deterministic, the linear-functional data are deterministic.
If we choose Sobol points, then the linear-functional data are stochastic.
Analysis similar to that in the proof of Lemma \ref{Lm:weak-equicont} shows that $\Funset_{\Data}$ is relatively compact in $\Hilbert_K(\Omega)$.
Let a multi-loss function
\[
\loss_n\left(\vxi^{\ast},\vy,\vt\right):=
\frac{1}{2n^2}\sum_{k=1}^{n^2}L^{1}\left(\xi_{k}^{\ast},y_{k},t_k\right)
+\frac{1}{2n}\sum_{l=1}^nL^{2}\left(\xi_{n+l}^{\ast},y_{n+l},t_{n+l}\right),
\]
for $\vxi^{\ast}\in\left(\Hilbert_K(\Omega)\right)^{n^2+n},~\vy\in\RR^{n^2+n},~\vt\in\RR^{n^2+n}$, where
the square losses
\[
L^{1}\left(\xi^{\ast},y,t\right):=\left\{
\begin{array}{cl}
\abs{t-y}^2,&\text{if }\xi^{\ast}=\delta_{x}\circ\Delta\text{ for }x\in\Omega\text{ and }y\in range(h),\\
0,&\text{otherwise},
\end{array}
\right.
\]
and
\[
L^{2}\left(\xi^{\ast},y,t\right):=\left\{
\begin{array}{cl}
\abs{t-y}^2,&\text{if }\xi^{\ast}=\delta_{z}\text{ for }z\in\partial\Omega\text{ and }y\in range(g),\\
0,&\text{otherwise},
\end{array}
\right.
\]
for $\xi^{\ast}\in\Hilbert_K(\Omega),~y\in\RR,~t\in\RR$.
Since $h,g$ are bounded on $\Omega$ and $\partial\Omega$, respectively, $L^1,L^2$ are both local Lipschitz continuous loss functions.
In the same manner as Lemma \ref{Lm:loss} (b), $\Loss$ is uniformly local Lipschitz continuous.
This demonstrates that Condition (II) holds true.

Since
\[
\risk_n(f)=\frac{1}{2n^2}\sum_{k=1}^{n^2}\abs{\Delta f\left(x_{nk}\right)-h\left(x_{nk}\right)}
+\frac{1}{2n}\sum_{l=1}^n\abs{f\left(z_{nl}\right)-g\left(z_{nl}\right)},\quad
\]
for $f\in\Hilbert_K(\Omega)$,
the Koksma–Hlawka inequality in quasi-Monte Carlo methods shows that
$\risk_n$ converges pointwise to $\risk$ when $n\to\infty$.
This demonstrates that Condition (I) holds true.

Let $\regfun(r):=r^2$ for $r\in[0,\infty)$.
The reproducing property of $\Hilbert_K(\Omega)$ shows that $\delta_{x}\circ\Delta\cong\Delta_{x}K(x,\cdot)$
and $\delta_{z}\cong K(z,\cdot)$ for $x\in\Omega$ and $z\in\partial\Omega$.
Thus, Theorem \ref{Thm:GenRepr} ensures that there exists the unique approximate solution
$f_n^{\lambda}\in\Sset_{n}^{\lambda}\left(\Hilbert_K(\Omega)\right)$
such that $f_n^{\lambda}$ is a linear combination of
\[
\Delta_{x}K\left(x_{n1},\cdot\right),\Delta_{x}K\left(x_{n2},\cdot\right),\ldots,\Delta_{x}K\left(x_{nn^2},\cdot\right),
\]
and
\[
K\left(z_{n1},\cdot\right),K\left(z_{n2},\cdot\right),\ldots,K\left(z_{nn},\cdot\right).
\]

Finally,
we can verify that $\risk_n\left(f^0\right)=0$ for all $n\in\NN$.
Therefore, for any decrease sequence $\left(\lambda_n\right)$ to $0$,
Theorem \ref{Thm:ConvThm-lambda} ensures that $f_{n}^{\lambda_{n}}$ weakly* converges to $f^0$
and $\norm{f_{n}^{\lambda_{n}}}$ converges to $\norm{f^0}$ when $n\to\infty$.
Since $\Hilbert_K(\Omega)$ is a Radon-Riesz space,
we have $\norm{f^0-f_{n}^{\lambda_{n}}}\to0$ when $n\to\infty$.
This demonstrates that
\[
\lim_{n\to\infty}\sup_{x\in\Omega}\abs{\partial^{\vartheta}f^0(x)-\partial^{\vartheta}f_{n}^{\lambda_{n}}(x)}\to0,
\quad\text{for any }\abs{\vartheta}\leq2.
\]
}
\end{example}

In Examples \ref{exa:Gaussian}, \ref{exa:BinaryClass}, and \ref{exa:Poisson}, we utilize the reproducing kernels to directly solve the approximate solutions according to the representer theorems.
Now, let us study a popular example of artificial neural networks.
Under the universal approximation of two-layer neural networks, we will use sigmoid functions to approximately compute the approximate solutions according to the pseudo-approximation theorems.
\begin{example}\label{exa:neural-network}
{\rm
Let $\Omega:=[-1,1]^d$. It is obvious that $\Leb_1(\Omega)$ is a predual space of $\Leb_{\infty}(\Omega)$.
Let $\Nueral_m$ be a collection of all two-layer neural networks with $m$ coefficients, that is,
\[
f_m(x)=\vW_2\vsigma\left(\vW_1x+\vb_1\right)+\vb_2,\quad\text{for }x\in\Omega,
\]
where $\vsigma$ is a continuous sigmoid function, $\vW_1,\vW_2$ are weight matrixes, and $\vb_1,\vb_2$ are bias vectors as stated in \cite[Section 6.4]{AnthonyBartlett1999}.
Thus, $\Nueral_m\subseteq\Leb_{\infty}(\Omega)$ and there exists a surjection $\Gamma_m$ from $\RR^m$ onto $\Nueral_m$.
It is easy to check that $\Nueral_m$ is weakly* closed if the coefficients are bounded.
Since $\left\{\Nueral_m:m\in\NN\right\}$ satisfies the universal approximation in $\Cont(\Omega)$, as exemplified in \cite{Cybenko1989,MhaskarMicchelli1994}, $\left\{\Nueral_m:m\in\NN\right\}$ also satisfies the universal approximation in
$\Leb_{\infty}(\Omega)$.

Moreover, $\langle f,\xi^{\ast}\rangle=\int_{\Omega}f(x)\xi^{\ast}(x)\ud x$ for $f\in\Leb_{\infty}(\Omega)$ and $\xi^{\ast}\in\Leb_1(\Omega)$. Thus, the input data for standard sigmoid neural networks or convolutional neural networks
can be equivalently transferred to the linear-functional data $\vxi_n^{\ast}\in\left(\Leb_1(\Omega)\right)^{N_n}$, such as mollifiers and convolutions.
In the same manner as Lemma \ref{Lm:weak-equicont}, we can verify that $\Funset_{\Data}$ is relatively compact for various classical data, such as handwritten digit images.
The multi-loss function $\loss_n$ is represented by the classical loss function $\closs$ as shown in Equation \eqref{eq:lossfun-exa}. Typically, $\closs$ is chosen as a local Lipschitz continuous loss function for developing algorithms for multi-layer neural networks.
Thus, Lemma \ref{Lm:loss} (b) ensures that $\Loss$ is uniformly local Lipschitz continuous.
This demonstrates that Condition (II) holds true.

For various problems of machine learning, the expected risks may be unidentified or uncertain, while the empirical risks are computable through two-layer neural networks based on $\vxi_n$ and $\loss_n$.
By employing numerical experiments, Condition (I) is considered to hold true, when the convergence of $\risk_n(f)$ exists
for any test function $f\in\Leb_{\infty}(\Omega)$.

Therefore, the pseudo-approximation theorems and convergence theorems hold true for various categories of two-layer neural networks.
In our forthcoming work, according to universal approximation in Sobolev spaces for sigmoid functions and reproducing kernels, we will investigate composite algorithms that incorporate artificial neural networks and support vector machines to adaptively solve partial differential equations by employing the theory of regularized learning.
}
\end{example}

Finally, we study a simple example of an ill-posed problem in the Euclidean space $\RR^2$ equipped with the 2-norm.
\begin{example}\label{exa:finite-dim}
{\rm
Since $\RR^2$ is a finite-dimensional Hilbert space, the weak* topology of $\RR^2$ is equal to the norm topology of $\RR^2$
and $\langle f,\xi^{\ast}\rangle=f\cdot \xi^{\ast}$ for $f,\xi^{\ast}\in\RR^2$.
Let
\[
\vA:=
\begin{pmatrix}
1&0\\
0&0\\
0&0
\end{pmatrix},
\quad
\vA_n:=
\begin{pmatrix}
1&0\\
0&1/n\\
0&0
\end{pmatrix}
\text{ for }n\in\NN,
\quad
\vb:=
\begin{pmatrix}
1\\1\\1
\end{pmatrix}.
\]
Now, we consider the linear equation $\vA\vc=\vb$.
Let an expected risk function $\risk(f):=\norm{\vA f-\vb}^2_2$ for $f\in\RR^2$.
Thus, $\Sset^0\left(\RR^2\right)$ is a collection of all least-squared solutions of $\vA\vc=\vb$
and the best-approximate solution of $\vA\vc=\vb$
is the minimum-norm element of $\Sset^0\left(\RR^2\right)$ as stated in \cite[Definition 2.1]{EnglHankeNeubauer2000}.

We have the specific data
\[
\vxi_n:=\left(\xi_{n1}^{\ast},\xi_{n2}^{\ast},\xi_{n3}^{\ast}\right),
\quad
\vy_n:=\left(1,1,1\right),
\quad\text{for all }n\in\NN.
\]
where
\[
\xi_{n1}^{\ast}:=
\begin{pmatrix}
1\\0
\end{pmatrix},
~
\xi_{n2}^{\ast}:=
\begin{pmatrix}
0\\1/n
\end{pmatrix},
\text{ and }
\xi_{n3}^{\ast}:=
\begin{pmatrix}
0\\0
\end{pmatrix}.
\]
It is easy to check that $\Funset_{\Data}$ is relatively compact in $\RR^2$.
Let a multi-loss function
\[
\loss_n\left(\vxi^{\ast},\vy,\vt\right):=\frac{1}{3}\sum_{k=1}^3\closs\left(\xi^{\ast}_{k},y_{k},t_k\right),
\quad\text{for }\vxi^{\ast}\in\RR^{3\times2},
~\vy\in\RR^3,~\vt\in\RR^3,
\]
where $n\in\NN$ and
\[
\closs\left(\xi^{\ast},y,t\right):=\left\{
\begin{array}{cl}
3\abs{t-y}^2,&\text{if }y\in\left\{1\right\},\\
0,&\text{otherwise},
\end{array}
\right.
\]
for $\xi^{\ast}\in\RR^2,~y\in\RR,~t\in\RR$.
Since $L$ is a local Lipschitz continuous loss function,
Lemma \ref{Lm:loss} (b) ensures that $\Loss$ is uniformly local Lipschitz continuous.
This demonstrates that Condition (II) holds true.

Moreover, $\risk_n(f)=\norm{\vA_nf-\vb}^2_2$ for $f\in\RR^2$. Thus, $\risk_n$ converges pointwise to $\risk$ when $n\to\infty$.
This demonstrates that Condition (I) holds true.
\cite[Theorem 2.5]{EnglHankeNeubauer2000} ensures that the best-approximate solution of $\vA\vc=\vb$ is
\[
f^0:=\vA^{\dagger}\vb=\begin{pmatrix}1\\0\end{pmatrix},
\]
and the best-approximate solution of $\vA_n\vc=\vb$ is
\[
f_n:=\vA_n^{\dagger}\vb=\begin{pmatrix}1\\1+n\end{pmatrix},
\]
where $\vA^{\dagger}$ and $\vA_n^{\dagger}$ are the pseudo inverses of $\vA$ and $\vA_n$, respectively.
Thus, $f_n$ is also a minimizer of $\risk_n$ over $\RR^2$.
But $f_n$ is not convergent when $n\to\infty$.
This demonstrates that $f_n$ is not a well-posed approximate solution of $\vA\vc=b$ for any $n\in\NN$.

Let $\regfun(r):=r^2$ for $r\in[0,\infty)$.
Thus, \cite[Theorem 5.1]{EnglHankeNeubauer2000} ensures that there exists the unique approximate solution $f_n^{\lambda}\in\Sset_{n}^{\lambda}\left(\RR^2\right)$ which is written as
\[
f_n^{\lambda}=\left(\vA_n^T\vA_n+\lambda\vI\right)^{-1}\vA_n^T\vb
=\begin{pmatrix}1\\n/(1+\lambda n^2)\end{pmatrix}.
\]
where $\vI$ is an identity matrix.
According to Theorem \ref{Thm:ConvThm}, $f_n^{\lambda}$ can become a well-posed approximate solution for a $n\in\NN$ and a $\lambda>0$.
For example, if we choose $\lambda_n:=1/\sqrt{n}$, then $f_n^{\lambda_n}\to f^0$ when $n\to\infty$.
However, if we choose $\lambda_n:=1/n^2$, then $f_n^{\lambda_n}$ is not convergent when $n\to\infty$.
This shows that the approximate solution may not be well-posed for any pair $n,\lambda$ even when $n\to\infty$ and $\lambda\to0$.
}
\end{example}
\begin{remark}
In Example \ref{exa:finite-dim}, it is evident that the empirical risk functions are not equicontinuous, while
they are equicontinuous on all bounded subset.
Even though we may not solve ill-posed problems directly through empirical risks, regularized learning provides an alternative method for obtaining well-posed approximate solutions.
Example \ref{exa:finite-dim} further illustrates that the selection of $\lambda$ impacts the convergence of the approximate solutions for ill-posed problems.
Similar to Theorem \ref{Thm:ConvThm-lambda}, we will attempt to investigate the open problem of selecting the adaptive $\lambda$ for specific machine learning problems through the utilization of the weak* topology.
\end{remark}

\section{Final Remark}\label{sec:final}

In this article,
the theory of Banach spaces serves as fundamental basis for regularized learning in the analysis of linear-functional data.
Through the techniques of weak* topology, we finalize the proofs of the representer theorems, pseudo-approximation theorems, and convergence theorems in regularized learning.
My philosophical concept is to approximate unknown rules through explicit models and observed data using the theory of regularized learning.
The objective of regularized learning is to identify an approximate solution that can be effectively computed by a machine.
The field of regularized learning provides an additional pathway for investigating computational learning theory including:
\begin{itemize}
\item[$\bullet$] interpretability through approximation theory,
\item[$\bullet$] nonconvexity and nonsmoothness through optimization theory,
\item[$\bullet$] generalization and overfitting through regularization theory.
\end{itemize}
In my opinion, the exact solutions of the original problems are globally approximated by machine learning algorithms that provide local interpretability through linear-functional data.
According to the theorems of regularized learning,
the existence and convergence of the approximate solutions can be ensured by utilizing nonconvex and nonsmooth loss functions.
Hence, the techniques of nonconvex and nonsmooth optimization will be employed to develop the iterative algorithms.
Since weak* compactness is essential in the proof of the weak* convergence of the approximate solutions, the regularization terms play a crucial role in verifying generalization and preventing overfitting.
Based on the characteristics of weak* convergence in the convergence theorems, we will aim to introduce locally convergent rates and error bounds for the approximate solutions at certain observed inputs with specific structures, such as relative compactness.
In our forthcoming paper, we intend to investigate novel mathematical techniques and concepts to study the theory of regularized learning, such as sparse machine learning in $l_{1/2}$ spaces using generalized dual topology.

Furthermore, the depictions of regularized learning exhibit similarities to the equations used in inverse problems.
By employing classical approaches to inverse problems,
convergent analysis typically necessitates a condition of uniform convergence.
However, verifying uniform convergence directly poses a significant challenge in numerous machine learning problems because the original problem is frequently unidentified or uncertain.
Indeed, pointwise convergence is considered to be a less stringent condition compared to uniform convergence.
Moreover, the pointwise convergence of empirical risks can be verified through test functions in practical applications or postulated through the representation of natural models.
Hence, in the convergence theorems, we focus on the condition of pointwise convergence, exemplified by Condition (I).
Unfortunately, the condition of pointwise convergence
is not sufficient to prove the convergence of the approximate solutions to the exact solutions.
Therefore, there arises a fundamental issue whether a robust convergence condition can be substituted by a weaker convergence condition together with an additional checkable condition.
Our proposition posits that the weaker condition and the additional condition can be independently checked.
Since linear-functional data is always known, the additional condition of linear-functional data is feasible to check as shown in Lemma \ref{Lm:weak-equicont}.
By the construction of linear-functional data, relative compactness emerges as a natural additional condition of linear-functional data as explained in Remark \ref{Rm:Equicontin-data}.
The additional condition of loss functions is straightforward due to their typically simplistic forms.
Clearly, the additional condition of linear-functional data and loss functions is independent on the original problem, exemplified by Condition (II).
The additional condition also implies
the specific weak* equicontinuity of empirical risks as shown in Proposition \ref{Pro:EmpRisk} (c).
As stated in Section \ref{sec:ProofConvThm}, the weaker condition and the additional condition are sufficient to prove the convergence theorems.

\begin{wrapfigure}{r}{0.32\textwidth}
  \vspace{-20pt}
  \begin{center}
   \includegraphics[width=0.28\textwidth]{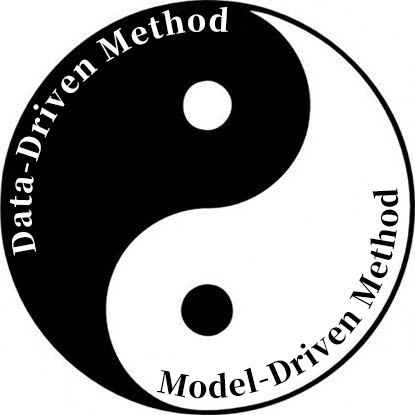}
  \end{center}
  \vspace{-40pt}
\end{wrapfigure}
Specifically, data-driven methods are commonly employed for the analysis of black-box models
and model-driven methods are commonly employed for the analysis of white-box models.
In my opinion, both black-box and white-box models have the capacity to approximate the real world.
Additionally, both data-driven and model-driven methods can provide approximate solutions for developing construct decision rules.
For a particular illustration in Section~\ref{sec:AppCompAlg}, the regularized learning introduces a novel approach for developing composite algorithms to integrate
black-box and white-box models into a unified system.
By iteratively combining approximate solutions from data-driven and model-driven methods, the theorems in regularized learning guarantee the construction of well-formed composite approximate solutions.
Our initial concept draws inspiration from Eastern philosophies, specifically the golden mean and the Tai Chi diagram.
In our present investigations, we are delving into the realm of big data analysis within the fields of education and medicine through the composite algorithms that combine black-box and white-box models, such as support vector machines, artificial neural networks, and decision trees.

\bmhead{Acknowledgments}

The author would like to express his gratitude to Professor Liren Huang at the South China Normal University.
Huang was the master advisor of Ye at the South China Normal University.
When Ye was back to work at Guangzhou, Huang had already retired for a long time.
Huang still gave a great help of this article.
The author is supported by
the National Natural Science Foundation of China \#12071157 and
the Guangdong Basic and Applied Basic Research Foundation \#2024A1515012288.

%



%
\end{document}